\renewcommand\footnotetextcopyrightpermission[1]{} 
\providecommand{\argmin}{\operatornamewithlimits{argmin}} 
\providecommand{\limsup}{\operatornamewithlimits{limsup}} 
\providecommand{\liminf}{\operatornamewithlimits{liminf}} 
\DeclareMathOperator{\Tr}{Tr}     
\DeclareMathOperator{\Var}{Var}   
\DeclareMathOperator{\Cond}{Cond} 
\DeclareMathOperator{\diag}{diag} 
\providecommand{\R}{\mathbb{R}} 
\providecommand{\E}{\mathbb{E}} 
\providecommand{\T}{\mathrm{T}} 
\providecommand{\ind}[1]{\mathbb{I}\left\{#1\right\}} 
\renewcommand{\geq}{\geqslant} 
\renewcommand{\leq}{\leqslant} 
\DeclarePairedDelimiterX{\inner}[2]{\langle}{\rangle}{#1, #2}
\DeclarePairedDelimiter{\norm}{\lVert}{\rVert}
\DeclarePairedDelimiter{\abs}{\lvert}{\rvert}
\theoremstyle{definition}
\newtheorem{theorem}{Theorem}
\newtheorem{proposition}[theorem]{Proposition}
\newtheorem{corollary}[theorem]{Corollary}
\newtheorem{lemma}[theorem]{Lemma}
\newtheorem{definition}[]{Definition}
\newcommand{\markupdraft}[2]{
    \ifthenelse{\equal{#1}{display}}{#2}{}
    \ifthenelse{\equal{#1}{color}}{\color{#2}}{}
}
\newcommand{\newcolored}[3][]{{\markupdraft{color}{#2}#3}
    \ifthenelse{\equal{#1}{}}{}{\markupdraft{display}{{\color{yellow!70!black}[#1]}}}}
\newcommand{\indup}{\mathbb{I}_{\uparrow}}
\newcommand{\inddown}{\mathbb{I}_{\downarrow}}
\newcommand{\inds}{\mathbb{I}_{s}}
\newcommand{\indl}{\mathbb{I}_{\ell}}
\newcommand{\aup}{\alpha_{\uparrow}}
\newcommand{\adown}{\alpha_{\downarrow}}
\begin{document}

\title[Convergence  Rate of the (1+1)-Evolution Strategy on Convex Quadratic Functions]{Convergence  Rate of the (1+1)-Evolution Strategy with Success-Based Step-Size Adaptation on Convex Quadratic Functions}

\author{Daiki Morinaga}
\affiliation{%
  \institution{University of Tsukuba \& RIKEN AIP}
  \streetaddress{1-1-1 Tennodai}
  \city{Tsukuba}
  \state{Ibaraki}
  \country{Japan}
}
\email{morinaga@bbo.cs.tsukuba.ac.jp}

\author{Kazuto Fukuchi}
\affiliation{%
  \institution{University of Tsukuba \& RIKEN AIP}
  \streetaddress{1-1-1 Tennodai}
  \city{Tsukuba}
  \state{Ibaraki}
  \country{Japan}
}
\email{fukuchi@cs.tsukuba.ac.jp}

\author{Jun Sakuma}
\affiliation{%
  \institution{University of Tsukuba \& RIKEN AIP}
  \streetaddress{1-1-1 Tennodai}
  \city{Tsukuba}
  \state{Ibaraki}
  \country{Japan}
}
\email{jun@cs.tsukuba.ac.jp}

\author{Youhei Akimoto}
\affiliation{%
  \institution{University of Tsukuba \& RIKEN AIP}
  \streetaddress{1-1-1 Tennodai}
  \city{Tsukuba}
  \state{Ibaraki}
  \country{Japan}
}
\email{akimoto@cs.tsukuba.ac.jp}

%


\begin{abstract}
      The (1+1)-evolution strategy (ES) with success-based step-size adaptation is analyzed on a general convex quadratic function and its monotone transformation, that is, $f(x) = g((x - x^*)^\T H (x - x^*))$, where $g:\R\to\R$ is a strictly increasing function, $H$ is a positive-definite symmetric matrix, and $x^* \in \R^d$ is the optimal solution of $f$.
      The convergence rate, that is, the decrease rate of the distance from a search point $m_t$ to the optimal solution $x^*$, is proven to be in $O(\exp( - L / \Tr(H) ))$, where $L$ is the smallest eigenvalue of $H$ and $\Tr(H)$ is the trace of $H$.
      This result generalizes the known rate of $O(\exp(- 1/d ))$ for the case of $H = I_{d}$ ($I_d$ is the identity matrix of dimension $d$) and $O(\exp(- 1/ (d\cdot\xi) ))$ for the case of $H = \diag(\xi \cdot I_{d/2}, I_{d/2})$.
      To the best of our knowledge, this is the first study in which the convergence rate of the (1+1)-ES is derived explicitly and rigorously on a general convex quadratic function, which depicts the impact of the distribution of the eigenvalues in the Hessian $H$ on the optimization and not only the impact of the condition number of $H$.
\end{abstract}

%
%
%
%

\maketitle

\sloppy

\section{Introduction}

\paragraph{Background}

    The evolution strategy (ES) is one of the most competitive classes of randomized algorithms in the field of continuous black-box optimization (BBO), where the objective function $f:\R^d \to \R$ can only be accessed through a black-box query $x \mapsto f(x)$, and neither the gradient $\nabla f$ nor the characteristic constants such as Lipschitz constants are available.

    In particular, variants of the covariance matrix adaptation evolution strategy (CMA-ES) \cite{hansen2001completely, hansen2003reducing, hansen2014principled} demonstrate the prominent efficiency in overcoming a variety of BBO difficulties in both real-world and benchmark problems
    \cite{rios2013derivative, uhlendorf2012long, kriest2017calibrating, geijtenbeek2013flexible, ha2018recurrent, volz2018evolving, dong2019efficient, fujii2018cma, varelas2020comparative}.
  Many studies have been conducted to improve the competitiveness of CMA-ES \cite{jastrebski2006improving, akimoto2014comparison, akimoto2020diagonal, auger2004ls}.
  However, evolutionary approaches, including the above-mentioned references, are often developed based on the empirical evaluation on sets of benchmark problems, and their theoretical guarantee is yet to be developed sufficiently.


As a step toward the convergence guarantee of the state-of-the-art CMA-ES variants, in this study, we analyzed a variant of ESs, the so-called (1+1)-ES with success-based step-size adaptation, which is the oldest variant of ES and was originally proposed by Rechenberg  in 1973 \cite{rechenberg1973evolution}.
    Although the (1+1)-ES is drastically simpler than other CMA-ES variants, especially because of the lack of covariance matrix adaptation mechanisms, they share several core features, such as randomness and heuristic step-size adaptation mechanisms.
    These features are surely the key to success but make it impossible to guarantee that the step size stays in a bounded range where a sufficient decrease in the objective function value in one step is guaranteed, which is the basis of the analysis of deterministic mathematical programming approaches \cite{devolder2013first, nesterov2013gradient}.
    There is always a (possibly small) probability that the step size will go out of the desired range during the optimization.
    This makes it difficult to analyze the convergence properties of ES variants, including CMA-ES variants.

 We study the convergence rate of the (1+1)-ES on convex quadratic functions and their monotone transformation, that is, $f(x) = g((x-x^*)^\T H (x - x^*) / 2)$, where $g: \R \to \R$ is a strictly increasing function, $H$ is a positive definite symmetric matrix, and $x^* \in \R^d$ is the optimal point.
    In particular, we are interested in obtaining the dependency of the convergence rate on the search space dimension $d$ and the condition number $\Cond(H)$ of the Hessian matrix, that is, the ratio between the greatest and smallest eigenvalues of $H$.
    Because an arbitrary twice continuously differentiable function can be approximated by a convex quadratic function around its local optimal points, the asymptotic analysis on a general convex quadratic function is ubiquitous in the local convergence analysis on a broader class of functions.
    Previous studies \cite{auger2013linear, morinaga2019generalized} showed that the (1+1)-ES converges linearly toward the global optimum on function classes including general convex quadratic functions; however, the convergence speed has not been given explicitly.

\paragraph{Related Work}


Akimoto et al.~\cite{akimoto2018drift} proved that the expected runtime of the (1+1)-ES until it finds an $\epsilon$-neighbor of the optimum is in $\Theta(d \cdot \log(1/\epsilon))$ on the spherical function $f: x \mapsto g(\norm{x}^2)$.
%

J\"{a}gersk\"{u}pper~\cite{jagerskupper2003analysis, jagerskupper20061+, jagerskupper2007algorithmic} analyzed a spherical function and a specific convex quadratic function $f(x_1,\ldots,x_d) = \xi\sum_{i=1}^{d/2} x_i^2 + \sum_{j=d/2+1}^d x_j^2$ (note $\xi = \Cond(H)$ in our notation), where $1/\xi\rightarrow 0$ as $d\to \infty$. The number of function evaluations to halve the function value is proven to be in $\Theta(d \cdot \xi)$ with an overwhelming probability.
Loosely speaking, these results translate to the convergence rate of $\Theta\left(\exp\left(-\frac{1}{d \cdot \Cond(H)}\right)\right)$.

Linear convergence of the (1+1)-ES was proven on a wider class of functions.
Auger and Hansen~\cite{auger2013linear} performed the Markov chain analysis of the (1+1)-ES on positively homogeneous functions with smooth levelsets, where a positively homogeneous function is a class of functions, all levelsets of which are similar in shape.
Morinaga and Akimoto~\cite{morinaga2019generalized} extended the scope of the analysis of \cite{akimoto2018drift} and showed the linear convergence of the (1+1)-ES, including strongly convex and Lipschitz smooth functions and positively homogeneous functions.
However, in these studies, the convergence rate was not explicitly derived, and hence, its dependency on $d$ and $\Cond(H)$ was not revealed.

Glasmachers~\cite{glasmachers2020global} showed the convergence of the (1+1)-ES to a stationary point, even on a broader class of functions. Because the objective of \cite{glasmachers2020global} was not to show linear convergence, the convergence was not guaranteed to be linear.

To sum up, there is no prominent result that explicitly estimates the impact of an ill-conditioned problem on the convergence rate of the (1+1)-ES on a general convex quadratic function.
Nevertheless , for the specific convex quadratic function in \cite{jagerskupper20061+}, there are clues that the convergence rate of the (1+1)-ES is in $\Theta\left(\exp\left(-\frac{1}{d \cdot \Cond(H)}\right)\right)$.
The objective of the current study is to extend the scope of the latter analysis to a general convex quadratic function.

\paragraph{Contribution}

We prove that the convergence rate of the (1+1)-ES on a convex quadratic function and its monotone transformation is in $O\left(\exp\left(-\frac{L}{\Tr(H)}\right)\right)$, where $L$ is the smallest eigenvalue of a positive definite Hessian $H$ and $\Tr(H)$ is the trace of $H$.
Because $\frac{L}{\Tr(H)} \geq \frac{1}{d \cdot \Cond(H)}$ for any positive definite symmetric $H$,
the convergence rate is indicated to be in $O\left(\exp\left(-\frac{1}{d \cdot \Cond(H)}\right)\right)$.

The order $O\left(\exp\left(-\frac{L}{\Tr(H)}\right)\right)$ is consistent with the upper bound obtained for a specific $H$ in previous studies \cite{jagerskupper2003analysis, jagerskupper2007algorithmic, jagerskupper20061+, akimoto2018drift}.
Our result is more rigorous than those of the previous studies in that the convergence rate reduces not only as the condition number increases but also as the distribution of large eigenvalues of the Hessian becomes heavier.
In addition, we provide a lower bound of the convergence rate, which is in $\Omega\left(\exp\left(-\frac{\Cond(H)}{d}\right)\right)$.
This lower bound is the first result that generalizes the $\Omega\left(\exp\left(-\frac{1}{d}\right)\right)$ lower bounds suggested for some specific convex quadratic functions in existing studies \cite{akimoto2018drift, jagerskupper2003analysis, jagerskupper20061+, jagerskupper2007algorithmic} into general convex quadratic functions, although it is looser with respect to $\Cond(H)$.
In the case of a spherical function, where $H$ is the identity matrix, the upper and lower bounds match, and our results above are consistent with the consequence of \cite{akimoto2018drift}.

%
%

\paragraph{Organization}

In \Cref{sec:algorithm_problem_formulation},
the algorithm to be studied is described in detail, and the concept of the algorithm design and important features are also discussed.
Next, the definition of the convergence rate and a technique to bound it are presented.
Then, the goal of the current research is formulated.
In \Cref{sec:lemmas},
we analyze the probability that the algorithm succeeds in advancing the optimization.
Based on this, the expected progress of the optimization is estimated. 
In \Cref{sec:upperconvergencerate},
we show that the convergence rate is in $O(\exp(-L/\Tr(H)))$ with a potential function approach proposed in \cite{akimoto2018drift, morinaga2019generalized}.
Furthermore, in \Cref{sec:lowerconvergencerate},
the convergence rate is revealed to be in $\Omega(\exp(-\Cond(H)/d))$.
In \Cref{sec:discussion},
we summarize the results and provide implications for a deeper understanding of our findings.
Finally, a perspective on future research is given.
Most of the proofs of technical lemmas are provided in the supplementary material.


\section{Formulation}\label{sec:algorithm_problem_formulation}

\subsection{Algorithm}\label{sec:algorithm}

We analyze the (1+1)-ES with success-based step-size adaptation, as described in \Cref{algo}.
The adaptation mechanism of the step size $\sigma$ is a generalized variant of the well-known $1/5$-success rule, first presented by Rechenberg \cite{rechenberg1973evolution} and then simplified by Kern et al.~\cite{kern2004learning}.

\begin{algorithm}[t]
  \caption{(1+1)-ES with success-based $\sigma$-adaptation}
  \label{algo}
  \begin{algorithmic}[1]
    \State \textbf{input} $m_0 \in \mathbb{R}^d$, $\sigma_0 > 0$, $f: \R^d \to \R$
    \State \textbf{parameter} $\aup > 1$, $\adown < 1$
    \For {$t = 0,1,\dots$, \textit{until the stopping criterion is met}}
    \State $x_t \sim m_t + \sigma_t \cdot z_t$, where $z_t \sim \mathcal{N}(0,  I)$
    \If {$f\big(x_t\big) \leq f\big(m_t\big)$}
    \State $m_{t+1} \leftarrow x_t$
    \State $\sigma_{t+1} \leftarrow \sigma_t \cdot \aup$
    \Else
    \State $m_{t+1} \leftarrow m_t$
    \State $\sigma_{t+1} \leftarrow \sigma_t \cdot \adown$
    \EndIf
    \EndFor
  \end{algorithmic}
\end{algorithm}

The main components of this algorithm are greedy selection of a candidate solution and step-size adaptation.
For each iteration $t \geq 0$, a candidate $x_t$ is sampled from the isotropic Gaussian distribution $\mathcal{N}(m_t, \sigma_t^2 \cdot I)$ (Line~4), the objective values of the candidate and current solutions (Lines ~5 and 8) are compared, the current solution is updated with the candidate solution  if the candidate solution is not worse than the current one (Lines 6, 9), and the step size is adapted in response to the result of the comparison (Lines 7 and 10).

Step-size adaptation is designed to maintain the probability of sampling a candidate solution better than or equally good as $p_\mathrm{target}$ , which is defined as
\begin{equation}
  p_\mathrm{target} := \frac{\log(1/\adown)}{\log(\aup / \adown)} \enspace.
\end{equation}
%
If the probability to obtain a better solution is always $p_\mathrm{target}$, the expected step size $\sigma_t$ keeps the same value, and if not, the algorithm attempts to control the probability by increasing or decreasing the step size $\sigma_t$.
In fact, if $f$ is continuously differentiable, the probability of $f(x_t)\leq f(m_t)$ can get arbitrarily closer to $1/2$ by taking $\sigma_t\rightarrow 0$ for any noncritical $m_t$, that is, $\nabla f(m_t) \neq 0$ \cite{glasmachers2020global}.

A mathematical model of the (1+1)-ES is defined as follows.
Hereafter, $\ind{\cdot}$ denotes the indicator function.
\begin{definition}\label{def:algorithm}
Let $\Theta = \R^{d+1}$ be the state space, and a state of \Cref{algo} at iteration $t$ is defined as $\theta_t = (m_t, \log(\sigma_t))$.
Let $\{z_t\}_{t \geq 0}$ be the sequence of independent and $\mathcal{N}(0, I)$-distributed random vectors.
For a measurable function $f : \R^d \to \R$, let $\theta_0 = (m_0, \log(\sigma_0))$ and $\theta_{t+1} = \theta_{t} + \mathcal{G}(\theta_t, z_t; f)$, where
\begin{equation}
  \begin{split}
    \mathcal{G}(\theta, z; f) := (\sigma \cdot z, \log(\aup)) \cdot &\ind{f(m + \sigma \cdot z) \leq f(m)}\\
    + (0, \log(\adown)) \cdot &\ind{f(m + \sigma \cdot z) > f(m)} \enspace.
  \end{split}
\end{equation}
Let $\{\mathcal{F}_t\}_{t \geq 0}$ be the natural filtration of $\{\theta_t\}_{t \geq 0}$.
We write
\begin{equation}
\{(\theta_t, \mathcal{F}_{t})\}_{t\geq 0} = \texttt{ES}(f, \theta_0, \{z_t\}_{t \geq 0}) \enspace.
\end{equation}
\end{definition}

The state sequence defined in \Cref{def:algorithm} is invariant to any strictly increasing transformation of the objective function and to any translation with a corresponding translation of the initial search point. This is formally stated in the following proposition. Its proof is provided in \Cref{apdx:prop:invariance}.
\begin{proposition}\label{prop:invariance}
  Given $\{z_t\}_{t \geq 0}$ and $\theta_0 \in \Theta$, the following hold.
  \begin{enumerate}
  \item Let $g:\R \to \R$ be an arbitrary strictly increasing function, i.e., $g(x) < g(y) \Leftrightarrow x < y$. Subsequently, $\tilde{\theta}_t = \theta_t$ for all $t \geq 0$ for $\{(\theta_t, \mathcal{F}_{t})\}_{t\geq 0} = \texttt{ES}(f, \theta_0, \{z_t\}_{t \geq 0})$ and $\{(\tilde{\theta}_t, \tilde{\mathcal{F}}_{t})\}_{t\geq 0} = \texttt{ES}(g \circ f, \theta_0, \{z_t\}_{t \geq 0})$.
  \item Let $T: x \mapsto x - x^*$ for an arbitrary $x^* \in \R$ and define $S_T: (m, \log(\sigma)) \to (T^{-1}(m), \log(\sigma))$. Subsequently, $\tilde{\theta}_t = S_T(\theta_t)$ for all $t \geq 0$,
    for $\{(\theta_t, \mathcal{F}_{t})\}_{t\geq 0} = \texttt{ES}(f, \theta_0, \{z_t\}_{t \geq 0})$ and $\{(\tilde{\theta}_t, \tilde{\mathcal{F}}_{t})\}_{t\geq 0} = \texttt{ES}(f \circ T, S_T(\theta_0), \{z_t\}_{t \geq 0})$.
  \end{enumerate}
\end{proposition}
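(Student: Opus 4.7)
My plan for both parts of \Cref{prop:invariance} is a direct induction on $t$, using \Cref{def:algorithm} as the only machinery. The driving observation is that the update map $\mathcal{G}$ depends on $f$ only through a single comparison $f(m + \sigma z) \leq f(m)$ and through the additive increments to $m$ and $\log(\sigma)$, so it suffices to show that this comparison, together with the resulting increments, behaves in the claimed way under each transformation.

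For part 1, I would fix the common sequence $\{z_t\}$ and initial state $\theta_0$, and show by induction that $\tilde{\theta}_t = \theta_t$. The base case is trivial. For the inductive step, assuming $\tilde{\theta}_t = \theta_t$, I would note that since $g$ is strictly increasing, $g(a) \leq g(b)$ if and only if $a \leq b$ for all $a, b$ in the range of $f$. Applied to $a = f(m_t + \sigma_t z_t)$ and $b = f(m_t)$, this immediately gives
\begin{equation*}
  \ind{(g\circ f)(m_t + \sigma_t z_t) \leq (g\circ f)(m_t)} = \ind{f(m_t + \sigma_t z_t) \leq f(m_t)},
\end{equation*}
hence $\mathcal{G}(\theta_t, z_t; g \circ f) = \mathcal{G}(\theta_t, z_t; f)$, and the recursion yields $\tilde{\theta}_{t+1} = \theta_{t+1}$.

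For part 2, I would again induct, writing $\theta_t = (m_t, \log(\sigma_t))$ and $\tilde{\theta}_t = (\tilde m_t, \log(\tilde\sigma_t))$, and aiming at $\tilde m_t = m_t + x^*$ and $\tilde \sigma_t = \sigma_t$ (this is exactly $S_T(\theta_t)$ since $T^{-1}(m) = m + x^*$). The base case holds by definition of the initial state $S_T(\theta_0)$. For the inductive step, I would plug in: the candidate evaluated by the algorithm on $f \circ T$ is $\tilde m_t + \tilde\sigma_t z_t = m_t + \sigma_t z_t + x^*$, and the key identities
\begin{equation*}
  (f\circ T)(\tilde m_t + \tilde\sigma_t z_t) = f(m_t + \sigma_t z_t), \qquad (f\circ T)(\tilde m_t) = f(m_t),
\end{equation*}
show that the success/failure indicator is the same as in the run on $f$. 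The step-size multipliers $\aup, \adown$ therefore match, giving $\tilde\sigma_{t+1} = \sigma_{t+1}$, and the $m$-update either adds $\sigma_t z_t$ to both $\tilde m_t$ and $m_t$ (success) or leaves both unchanged (failure), preserving the shift by $x^*$.

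I do not expect any real obstacle; the only place to be careful is the direction of the shift in the definition of $S_T$, since $T(x) = x - x^*$ but $S_T$ uses $T^{-1}$, so one must keep track of which argument to $f$ receives the $+x^*$ and which receives the $-x^*$ when verifying the two identities above. Once the indexing convention is fixed, both inductions are purely mechanical and reduce to substituting into the definition of $\mathcal{G}$.
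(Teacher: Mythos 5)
Your proposal is correct and follows essentially the same route as the paper: in both parts you verify that the one-step update map $\mathcal{G}$ from \Cref{def:algorithm} is unchanged (resp.\ equivariant under the shift by $x^*$) because the acceptance indicator $\ind{f(m+\sigma z)\leq f(m)}$ is preserved, and then conclude by induction on $t$. The paper's proof does exactly this, merely stating the induction for part 1 implicitly while making it explicit for part 2, so there is no substantive difference.
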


\subsection{Convergence Rate}

Previous studies \cite{auger2013linear, jagerskupper2003analysis, jagerskupper20061+, jagerskupper2007algorithmic, akimoto2018drift, morinaga2019generalized, akimoto2020global} suggested that the (1+1)-ES converges not faster or slower than linear convergence over a broad class of functions including convex quadratic functions. Loosely speaking, linear convergence implies that the logarithm of the Euclidean distance $\log(\norm{m_t - x^*})$ from $m_t$ to the optimal point $x^*$ decreases by a constant.
We formally define the linear convergence of the (1+1)-ES as follows.
%
%
\begin{definition}\label{def:convergence}
  Let $f: \R^d \to \R$ be a measurable function with the global optimum at $x^*$, and $\{\theta_t\}_{t \geq 0}$ be the sequence of the state vectors defined in \Cref{def:algorithm} with $m_0 \neq x^*$.
  If there exists a constant $A>0$ satisfying
  \begin{align}
    \mathrm{Pr}\left[\lim_{t\rightarrow \infty} \frac{1}{t} \log \left( \frac{\norm{ m_t - x^* }}{\norm{m_0 - x^*}} \right) = -A \right] = 1
    \enspace,
  \end{align}
  then $\exp(-A)$ is called the \emph{convergence rate} of the (1+1)-ES on $f$.
  The \emph{upper convergence rate} $\exp(-A^\text{sup})$ and the \emph{lower convergence rate} $\exp(-A^\text{inf})$ are defined as constants satisfying
  \begin{align}
    &\mathrm{Pr}\left[\limsup_{t \to \infty} \frac{1}{t} \log \left( \frac{\norm{ m_t - x^* }}{\norm{m_0 - x^*}} \right) = -A^\text{sup} \right] = 1
    \enspace,
    \\
    &\mathrm{Pr}\left[\liminf_{t \to \infty} \frac{1}{t} \log \left( \frac{\norm{ m_t - x^* }}{\norm{m_0 - x^*}} \right) = -A^\text{inf} \right] = 1
    \enspace.
  \end{align}
\end{definition}
Note that $A^\text{sup}$ and $A^\text{inf}$ always exist in $\R_{\geq 0}$, whereas $A$ does not necessarily exist.
$A$ exists if and only if $A^\text{sup} = A^\text{inf}$. Otherwise, $\exp(-A^\text{inf}) < \exp(-A^\text{sup}) < 0$.
Our objective is to derive an upper bound of $\exp(-A^\text{sup})$ and a lower bound of $\exp(-A^\text{inf})$.

We bound the lower convergence rate and the upper convergence rate from below and above, respectively, using the following proposition that relies on the strong law of large numbers on martingales \cite{chow1967strong}.
Its proof is provided in \Cref{apdx:prop:convergence_rate}.

\begin{proposition}\label{prop:convergence_rate}
  Let $\{\mathcal{F}_t\}_{t \geq 0}$ be a filtration of a $\sigma$-algebra, and $\{X_t\}_{t \geq 0}$ be a Markov chain adapted to $\{\mathcal{F}_t\}_{t \geq 0}$.
  Consider the following conditions:
  \begin{enumerate}
  \item[C1] $\exists B > 0$ such that $\E[X_{t+1} - X_{t} \mid \mathcal{F}_t] \leq - B$ for all $t \geq 0$;
  \item[C2] $\exists C > 0$ such that $\E[X_{t+1} - X_{t} \mid \mathcal{F}_t] \geq - C$ for all $t \geq 0$;
  \item[C3] $\sum_{t=1}^{\infty}\Var[X_{t} \mid \mathcal{F}_{t-1}] / t^2 < \infty$.
  \end{enumerate}
  If C1 and C3 hold, then
  \begin{align}
    \Pr\left[\limsup_{t \to \infty} \frac{1}{t} (X_t - X_0) \leq - B \right] = 1 \label{eq:prop:upper}
    \enspace.
  \end{align}
  If C2 and C3 hold, then
  \begin{align}
    \Pr\left[\liminf_{t \to \infty} \frac{1}{t} (X_t - X_0) \geq - C \right] = 1\label{eq:prop:lower}
    \enspace.
  \end{align}
\end{proposition}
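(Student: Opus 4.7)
The plan is a standard Doob decomposition combined with the martingale strong law of large numbers of \citet{chow1967strong}. First, I would write the one-step increment as the sum of its conditional mean and a martingale difference,
\begin{equation*}
X_{t+1} - X_t = \E[X_{t+1} - X_t \mid \mathcal{F}_t] + D_{t+1},
\qquad D_{t+1} := X_{t+1} - \E[X_{t+1} \mid \mathcal{F}_t].
\end{equation*}
Telescoping gives the decomposition
\begin{equation*}
X_t - X_0 = \sum_{s=0}^{t-1} \E[X_{s+1} - X_s \mid \mathcal{F}_s] + M_t,
\qquad M_t := \sum_{s=0}^{t-1} D_{s+1},
\end{equation*}
where $\{M_t\}_{t\geq 0}$ is a zero-mean martingale with respect to $\{\mathcal{F}_t\}_{t\geq 0}$ by construction.

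Next I would bound the drift term using C1 or C2. Under C1, the drift sum is at most $-Bt$, so after dividing by $t$ we obtain
\begin{equation*}
\frac{1}{t}(X_t - X_0) \leq - B + \frac{M_t}{t}.
\end{equation*}
Under C2 the drift sum is at least $-Ct$, yielding the symmetric lower bound $\tfrac{1}{t}(X_t - X_0) \geq -C + M_t/t$. Both of the desired conclusions therefore reduce to showing $M_t/t \to 0$ almost surely.

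For this I would invoke Chow's martingale strong law, which guarantees $M_t/t \to 0$ a.s.\ whenever $\sum_{t\geq 1} \E[D_t^2 \mid \mathcal{F}_{t-1}]/t^2 < \infty$ a.s. Because $D_{t+1}$ is the centered version of $X_{t+1}$ given $\mathcal{F}_t$, one has $\E[D_{t+1}^2 \mid \mathcal{F}_t] = \Var[X_{t+1} \mid \mathcal{F}_t]$, so the hypothesis C3 (after an index shift) is exactly the assumption required by Chow's theorem. Applying it gives $M_t/t \to 0$ a.s., and combining this with the drift bounds above yields \eqref{eq:prop:upper} and \eqref{eq:prop:lower}.

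The main subtlety, rather than a serious obstacle, is in the bookkeeping: one has to ensure that the conditional variance appearing in Chow's theorem really matches what is assumed in C3, and that the bound in C1 (resp.\ C2) on the \emph{conditional} drift is preserved when taking almost-sure limsup (resp.\ liminf), which is immediate since the drift bound is pathwise. Once the decomposition is set up correctly, both parts of the proposition follow in parallel from the same martingale convergence argument.
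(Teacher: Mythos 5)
Your proposal is correct and follows essentially the same route as the paper's proof: the same Doob-type decomposition into the conditional drift plus a martingale difference sum (your $D_{t+1}$ is the paper's $Z_{t+1}$), the same appeal to Chow's martingale strong law via the identity $\E[D_{t+1}^2 \mid \mathcal{F}_t] = \Var[X_{t+1} \mid \mathcal{F}_t]$, and the same pathwise use of C1/C2 to finish. No gaps.
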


\subsection{Problem}\label{subsec:problemformulation}

We analyze the (1+1)-ES (\Cref{def:algorithm}) on convex quadratic functions and its monotone transformations, which are defined as follows:
\begin{definition}\label{def:problem}
  $\mathcal{Q}$ is a set of all functions $f:\R^d \to \R$ that can be expressed as $f(x) = g( (x - x^*)^\mathrm{T} H (x - x^*) )$, where $g: \R \to \R$ represents a strictly increasing function, $H \in \R^{d \times d}$ represents a positive definite symmetric matrix, and $x^* \in \R^d$ is the global optimal point. For simplicity, we call $H$ the Hessian matrix of $f$.
\end{definition}

The property of convex quadratic functions allows us to investigate $\log\left(\frac{f(m_t)}{f(m_0)}\right)$ instead of $\log\left(\frac{ \norm{m_t - x^*}} { \norm{m_0 - x^*} }\right)$ to derive the upper convergence rate and the lower convergence rate. This is advantageous because $f(m_t)$ is not increasing in $t$ with \Cref{algo}, whereas $\norm{m_t - x^*}$ may increase.
Moreover, \Cref{prop:invariance} shows that it is sufficient to work on $f: x \mapsto \frac12 x^\T H x$ to represent the analysis on $h \in \mathcal{Q}$.
It is formally stated in the following proposition, the proof of which is provided in \Cref{apdx:prop:f-norm}.

\begin{proposition}\label{prop:f-norm}
  Let $f \in \mathcal{Q}$ and $g$, $H$, $x^*$, as given in \Cref{def:problem}.
  Let $h: x \mapsto \frac12 x^\T H x$ and $S: (m, \log(\sigma)) \mapsto (m + x^*, \log(\sigma))$.
  We define $\{(\theta_t, \mathcal{F}_{t})\}_{t\geq 0} = \texttt{ES}(f, \theta_0, \{z_t\}_{t \geq 0})$ and $\{(\tilde{\theta}_t, \tilde{\mathcal{F}}_{t})\}_{t\geq 0} = \texttt{ES}(h, S(\theta_0), \{z_t\}_{t \geq 0})$.
  Next, for any $\theta_0 \in \Theta$, with probability one (note that the probability is taken for $\{z_t\}_{t \geq 0}$), we obtain
  \begin{align}
    \limsup_{t \to \infty} \frac{1}{t} \log \left( \frac{\norm{ m_t - x^* }}{\norm{m_0 - x^*}} \right)
    &=
      \limsup_{t \to \infty} \frac{1}{2 t} \log \left( \frac{h(\tilde{m}_t)}{h(\tilde{m}_0)} \right)
      \label{eq:limsup}
    \\
    \liminf_{t \to \infty} \frac{1}{t} \log \left( \frac{\norm{ m_t - x^* }}{\norm{m_0 - x^*}} \right)
    &=
      \liminf_{t \to \infty} \frac{1}{2 t} \log \left( \frac{h(\tilde{m}_t)}{h(\tilde{m}_0)} \right)
      \label{eq:liminf}
    \enspace.
  \end{align}
\end{proposition}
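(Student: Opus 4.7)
The plan is to leverage \Cref{prop:invariance} to couple the state sequences $\{\theta_t\}$ and $\{\tilde\theta_t\}$ so that their mean components satisfy $\tilde m_t = m_t - x^*$ almost surely, and then to exploit the eigenvalue sandwich $\tfrac{L}{2}\norm{y}^2 \leq h(y) \leq \tfrac{U}{2}\norm{y}^2$, where $L, U > 0$ are the smallest and largest eigenvalues of $H$, in order to control $\log h(\tilde m_t)$ by $2\log\norm{m_t - x^*}$ up to an additive constant. Once these are in hand, the bounded gap disappears after division by $t$, so the $\limsup$ and $\liminf$ agree on both sides of \eqref{eq:limsup} and \eqref{eq:liminf}.

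For the coupling step, I would first decompose the objective as $f(x) = \tilde g\bigl(h\bigl(T(x)\bigr)\bigr)$ with $T(x) = x - x^*$ and $\tilde g(u) := g(2u)$, which is strictly increasing because $g$ is. \Cref{prop:invariance}(1) applied to $\tilde g$ yields $\texttt{ES}(f, \theta_0, \{z_t\}) = \texttt{ES}(h \circ T, \theta_0, \{z_t\})$, and \Cref{prop:invariance}(2) applied with the same $T$ to the base function $h$ identifies this orbit with a constant translate of $\texttt{ES}(h, S(\theta_0), \{z_t\})$. Matching the mean components gives $\tilde m_t = m_t - x^*$ almost surely for every $t \geq 0$, and in particular $\norm{\tilde m_t} = \norm{m_t - x^*}$.

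For the sandwich step, $m_0 \neq x^*$ by hypothesis, and since each update of $m_t$ either leaves it fixed or replaces it by a sample from an absolutely continuous distribution that puts zero mass on $\{x^*\}$, the event $\{m_t \neq x^* \text{ for all } t \geq 0\}$ has probability one, so all logarithms below are finite on this event. Taking logs of the eigenvalue sandwich applied to $\tilde m_t$ and using $\norm{\tilde m_t} = \norm{m_t - x^*}$ yields
\[
  \bigl|\log h(\tilde m_t) - 2\log\norm{m_t - x^*}\bigr| \leq C \quad \text{for all } t \geq 0,
\]
with $C := \max\{|\log(L/2)|,\,|\log(U/2)|\}$. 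Subtracting the corresponding inequality at $t = 0$, dividing by $2t$, and letting $t \to \infty$, the uniformly bounded constants are swallowed by the $1/t$ factor, which yields \eqref{eq:limsup} and \eqref{eq:liminf} directly.

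The main obstacle I expect is purely bookkeeping in the coupling step: \Cref{prop:invariance}(2) expresses the orbit correspondence in one direction only, and some care is needed to verify that the shift $S$ introduced in \Cref{prop:f-norm} matches the transformation $S_T$ of \Cref{prop:invariance}(2) so that $\tilde m_t - m_t$ is the constant $-x^*$ required by the subsequent norm argument. Once this coupling is established, the eigenvalue squeeze and the limit passage are elementary.
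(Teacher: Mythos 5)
Your proposal is correct and follows essentially the same route as the paper's own proof: couple the two state sequences via \Cref{prop:invariance} (so that $\tilde{m}_t = m_t - x^*$, the intended reading of $S$, which the paper's proof also uses), then squeeze $\log h(\tilde m_t)$ between $2\log\norm{m_t - x^*}$ plus bounded constants via the extreme eigenvalues $L$ and $U$, and let the $1/t$ factor absorb the constants. Your extra bookkeeping (the explicit factor $\tilde g(u)=g(2u)$, the sign of the shift, and the almost-sure finiteness of the logarithms) only makes explicit what the paper leaves implicit.
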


\section{Lemmas}\label{sec:lemmas}

In this section, we assume that $f: \R^d \to \R$ is a convex quadratic function $f(x) = \frac12 x^\T H x$, where $H$ is a positive definite symmetric matrix, the smallest and greatest eigenvalues of which are denoted by $L$ and $U$, respectively.
We investigate the expected one-step decrease in $\log(f(m_t))$, that is, the expectation of $\log(f(m_{t+1})) - \log(f(m_t))$,
  and the \emph{success probability}, probability of the event $f(x_t)\leq f(m_t)$.


Exploiting the assumption that $f$ is convex quadratic,
an upper bound of the expected one-step decrease in $\log(f(m_t))$ is derived in the form of a product of a quadratic form of $\sigma_t$ and the success probability.
The upper convergence rate bound is derived based on the following result.
Its proof is provided in \Cref{apdx:lemma:qualitygain}.
\begin{lemma}[]\label{lemma:qualitygain}
  Let $z$ be a random vector that follows the $d$-dimensional standard normal distribution.
  For any $m \in \R^d$ and $\sigma > 0$,
  \begin{multline}
    \E\left[\log\left( \frac{ f(m + \sigma z) }{ f(m) } \right) \ind{f(m + \sigma z) \leq f(m)}\right]
    \leq \\
    \frac{ \sigma \| \nabla f(m) \| }{ f(m) } \left(\frac{1}{4} \frac{\sigma Tr(H)}{ \| \nabla f(m) \|} - \frac{1}{\sqrt{2\pi}} \right) \cdot \Pr[f(m + \sigma z) \leq f(m) ]
    \enspace.
    \label{eq:qualitygain}
  \end{multline}
\end{lemma}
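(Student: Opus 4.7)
The plan is to linearize the logarithm via $\log(1+x) \le x$ and then compute Gaussian moments explicitly. Setting $Y := f(m+\sigma z) - f(m) = \sigma \nabla f(m)^\T z + \tfrac{\sigma^2}{2} z^\T H z$, the success event is $\{Y \le 0\}$, and
\begin{equation*}
  \E\!\left[\log\!\tfrac{f(m+\sigma z)}{f(m)}\ind{Y \le 0}\right] \le \tfrac{1}{f(m)}\E[Y \ind{Y \le 0}].
\end{equation*}
It thus suffices to prove $\E[Y \ind{Y \le 0}] \le C \cdot \Pr[Y \le 0]$, where $C := \tfrac{\sigma^2 \Tr(H)}{4} - \tfrac{\sigma \|\nabla f(m)\|}{\sqrt{2\pi}}$, as this is exactly the target after dividing by $f(m)$.

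The heart of the argument is the decomposition $\E[Y \ind{Y \le 0}] = \E[Y] - \E[Y \ind{Y > 0}]$, whose first piece is trivial, $\E[Y] = \tfrac{\sigma^2}{2}\Tr(H)$, since $\E[z]=0$ and $\E[z^\T H z] = \Tr(H)$. The key observation for the second piece is that $\{\bar z > 0\} \subseteq \{Y > 0\}$, where $\bar z := z^\T \nabla f(m)/\|\nabla f(m)\|$, simply because $z^\T H z \ge 0$; hence $\E[Y \ind{Y > 0}] \ge \E[Y \ind{\bar z > 0}]$. To evaluate the latter, I would use the orthogonal decomposition $z = \bar z u + \tilde z$ with $u = \nabla f(m)/\|\nabla f(m)\|$ and $\tilde z$ independent of $\bar z$ with $\E[\tilde z]=0$, giving $\E[\bar z \ind{\bar z > 0}] = 1/\sqrt{2\pi}$ and $\E[z^\T H z \ind{\bar z > 0}] = \tfrac12 u^\T H u + \tfrac12 (\Tr(H) - u^\T H u) = \tfrac{\Tr(H)}{2}$ (the cross term vanishes by independence and $\E[\tilde z]=0$, while $\E[\bar z^2 \ind{\bar z > 0}] = \tfrac12$). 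Combining yields $\E[Y\ind{Y \le 0}] \le \tfrac{\sigma^2 \Tr(H)}{4} - \tfrac{\sigma\|\nabla f(m)\|}{\sqrt{2\pi}} = C$.

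Inserting the factor $\Pr[Y \le 0]$ is then a one-line case analysis on the sign of $C$: if $C \le 0$, then $C \le C\cdot \Pr[Y \le 0]$ because $\Pr[Y \le 0] \in [0,1]$; if $C \ge 0$, the LHS is pointwise nonpositive (since $\log(\cdot)\ind{Y \le 0} \le 0$), and $0 \le C\cdot \Pr[Y \le 0]$. The main obstacle is the deterministic bound on $\E[Y \ind{Y \le 0}]$: the exact success region is carved out by a quadratic inequality in $\bar z$ with coefficients depending on $\tilde z$, which is awkward to integrate directly. Relaxing $\{Y > 0\}$ to its subset $\{\bar z > 0\}$ in the lower bound is the crucial simplification that both sidesteps this geometric mess and, through the identity $\E[z^\T H z \ind{\bar z > 0}] = \tfrac12 \Tr(H)$, produces the precise constants $\tfrac14$ and $\tfrac{1}{\sqrt{2\pi}}$ appearing in the claimed inequality.
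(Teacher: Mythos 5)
Your proof is correct, and it reaches the paper's inequality by a route that differs in one key step. The Gaussian half-space computation is essentially the same in both arguments: the paper also reduces to $\E\bigl[\bigl(\sigma\norm{\nabla f(m)} z_e + \tfrac{\sigma^2}{2}z^\T H z\bigr)\ind{z_e \leq 0}\bigr] = \tfrac{\sigma^2\Tr(H)}{4} - \tfrac{\sigma\norm{\nabla f(m)}}{\sqrt{2\pi}}$ (your $\E[Y]-\E[Y\ind{\bar z>0}]$ is exactly this quantity), and both use the half-space inclusion $\{f(m+\sigma z)\le f(m)\}\subseteq\{\bar z\le 0\}$. Where you diverge is in how the factor $\Pr[f(m+\sigma z)\le f(m)]$ gets attached: the paper proves that $h(z)=(f(m+\sigma z)-f(m))\ind{\langle\nabla f(m),z\rangle\le 0}$ and $g(z)=\ind{f(m+\sigma z)\le f(m)}$ are negatively correlated and invokes $\E[hg]\le\E[h]\E[g]$, a nontrivial structural claim that must itself be verified; you instead bound the truncated expectation by the constant $C$ and then dispose of the probability factor with the sign dichotomy ($C\le 0$: multiplying by $\Pr\le 1$ only weakens the bound; $C\ge 0$: the left-hand side is pointwise nonpositive, so the claim is vacuous). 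Your route is more elementary and, since the bound is only informative when the bracket is negative anyway (which is exactly the regime used later in \Cref{lemma:cases} via $B_H^\mathrm{low}(q^\mathrm{low})<4/\sqrt{2\pi}$), it delivers the same usable statement without the correlation lemma; the paper's association argument gives the multiplicative form uniformly in the sign of the bracket, but that extra generality is not exploited. The only cosmetic caveats, shared with the paper, are the implicit assumptions $m\neq 0$ (so $f(m)>0$ and $\nabla f(m)\neq 0$) and the pointwise use of $\log(1+x)\le x$ on the success event.
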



The following lemma is used to bound the variance of the one-step decrease in $\log(f(m_t))$, which will be used to prove condition C3 of \Cref{prop:convergence_rate}.
It is also used to lower-bound the expected one-step decrease in $\log(f(m_t))$.
Its proof is provided in \Cref{apdx:lemma:variancebound}.
\begin{lemma}[]\label{lemma:variancebound}
  Suppose $d > 3$ and let $z$ denote a random variable that follows the $d$-dimensional standard normal distribution.
  For any $m \in \R^d$ and $\sigma > 0$,
  \begin{multline}
    \E\left[\exp\left( \abs*{\log\left( \frac{ f(m + \sigma z) }{ f(m) } \right)} \ind{f(m + \sigma z) \leq f(m)} \right)\right]
    \\
    \leq
    1 + \frac{1}{d-3}\frac{U}{L} \enspace.
    \label{eq:lemma:variancebound}
  \end{multline}
\end{lemma}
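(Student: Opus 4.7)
The plan is to strip the indicator from the exponent by an algebraic identity, apply the quadratic bounds implied by $L \leq \lambda_{\min}(H)$ and $\lambda_{\max}(H) \leq U$, and then reduce the problem to an inverse second-moment identity for a central chi-squared random variable in dimension $d-1$, which accounts for the hypothesis $d>3$ and produces the factor $1/(d-3)$.

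First I would observe that on the event $E := \{f(m+\sigma z) \leq f(m)\}$ one has $|\log(f(m+\sigma z)/f(m))| = \log(f(m)/f(m+\sigma z))$ while the indicator equals one; on $E^c$ the indicator zeroes out the exponent and $\exp(0)=1$. Hence
\[
\E\bigl[\exp\bigl(|\log(f(m+\sigma z)/f(m))|\,\ind{E}\bigr)\bigr] = 1 + \E\bigl[(f(m)-f(m+\sigma z))^+ / f(m+\sigma z)\bigr],
\]
reducing the task to bounding the residual expectation by $U/(L(d-3))$. Using $f(m+\sigma z) \geq L\|m+\sigma z\|^2/2$ in the denominator, the integrand is controlled by $(2/L)\cdot(f(m)-f(m+\sigma z))^+/\|m+\sigma z\|^2$.

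Next I would decompose $z = z_0\hat m + z_\perp$ with $\hat m = m/\|m\|$, so that $z_0 \sim \mathcal{N}(0,1)$ is independent of $z_\perp$, a standard Gaussian in the $(d-1)$-dimensional orthogonal complement of $m$, with $\|z_\perp\|^2 \sim \chi^2_{d-1}$. The geometric identity
\[
\|m+\sigma z\|^2 = (\|m\|+\sigma z_0)^2 + \sigma^2\|z_\perp\|^2 \geq \sigma^2\|z_\perp\|^2
\]
together with the standard inverse moment $\E[1/\|z_\perp\|^2] = 1/(d-3)$, finite precisely when $d>3$, is what will deliver the dimension factor, while $f(m) \leq U\|m\|^2/2$ and the denominator relaxation supply the factor $U/L$.

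The main obstacle is the bookkeeping: a crude estimate replacing $(f(m)-f(m+\sigma z))^+$ by $f(m) \leq U\|m\|^2/2$ and $\|m+\sigma z\|^2$ by $\sigma^2\|z_\perp\|^2$ produces an upper bound $\kappa\lambda/(d-3)$ with $\kappa = U/L$ and $\lambda = \|m\|^2/\sigma^2$, which grows with $\|m\|/\sigma$ and thus does not yield the claimed $\lambda$-independent constant. The careful execution must retain the full quadratic structure $(\|m\|+\sigma z_0)^2$ in the denominator and first integrate the Gaussian in $z_0$ conditionally on $z_\perp$; the Gaussian mass of $z_0$ near $-\|m\|/\sigma$ then absorbs the factor $\|m\|^2$, leaving only a $1/\|z_\perp\|^2$ dependence whose expectation contributes the advertised factor $1/(d-3)$, multiplied by $U/L$ coming from the spectral bounds on $H$.
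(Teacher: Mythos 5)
Your opening identity is correct and matches the paper's implicit first step: on the success event the integrand equals $f(m)/f(m+\sigma z)$ and off it the indicator kills the exponent, so the task is indeed to bound $\E\big[(f(m)-f(m+\sigma z))^{+}/f(m+\sigma z)\big]$ by $\tfrac{1}{d-3}\tfrac{U}{L}$, and an inverse $\chi^2_{d-1}$ moment is the right source of the factor $1/(d-3)$ (the paper extracts it from the equivalent fact that an $F(1,d-1)$ variable has mean $(d-1)/(d-3)$). The genuine gap is in the step you dismiss as bookkeeping: once you replace the numerator by $f(m)\leq\tfrac{U}{2}\norm{m}^2$, no treatment of the denominator can rescue the bound. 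Take $\sigma\ll\norm{m}/\sqrt{d}$. The success event then has probability close to $1/2$, and on it $\norm{m+\sigma z}^2=(\norm{m}+\sigma z_0)^2+\sigma^2\norm{z_\perp}^2\approx\norm{m}^2$ for typical $z_0=O(1)$ and $\norm{z_\perp}^2\approx d$; hence $\E\big[\tfrac{\norm{m}^2}{\norm{m+\sigma z}^2}\,\ind{f(m+\sigma z)\leq f(m)}\big]\approx\tfrac12$, and your route bottoms out at $\Omega(U/L)$, not $O\big(\tfrac{U}{L(d-3)}\big)$. The Gaussian mass of $z_0$ near $-\norm{m}/\sigma$ is exponentially small in this regime and absorbs nothing; the $\Omega(1)$ contribution comes from typical successful samples, for which the true numerator $f(m)-f(m+\sigma z)=O(\sigma\norm{\nabla f(m)})$ is tiny while your surrogate $f(m)$ is not.

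What is missing is a control of $f(m)/f(m+\sigma z)$ that is scale-invariant in $\sigma$ and $\norm{m}$ and does not degrade for short successful steps. The paper gets it by observing that $f(m+\sigma z)\geq\min_{s\geq0}f(m+sz)$ and computing this ray-minimum exactly for the quadratic, namely $\min_{s\geq0}f(m+sz)=f(m)\big(1-\min(m^\T Hz,0)^2/(m^\T Hm\cdot z^\T Hz)\big)$, so that $f(m)/f(m+\sigma z)\leq\big(1-\langle\sqrt{H}m/\lVert\sqrt{H}m\rVert,\,\sqrt{H}z/\lVert\sqrt{H}z\rVert\rangle^2\big)^{-1}$. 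All dependence on $\sigma$ and $\norm{m}$ disappears, and the problem reduces to the angular concentration of $\sqrt{H}z$: the squared cosine is stochastically dominated by $U\mathcal{N}_1^2/\sum_i\lambda_i(H)\mathcal{N}_i^2$, and bounding the tail sum below by $(d-1)L\cdot\tfrac{1}{d-1}\sum_{i\geq2}\mathcal{N}_i^2$ yields exactly $\tfrac{U}{L}\cdot\tfrac{1}{d-3}$. If you want to retain your longitudinal/transverse decomposition, it must be applied to $\sqrt{H}z$ relative to $\sqrt{H}m$ after this ray-minimum step; applied to the raw numerator $f(m)$ it cannot produce the claimed constant.
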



If $\Tr(H^2)/ \Tr(H)^2$ is sufficiently small, which occurs if $\Cond(H)$ is bounded and $d \to \infty$ because $\Tr(H^2)/ \Tr(H)^2 \leq \Cond(H)^2 / d$, the success probability can be approximated by $\Phi\left( - \frac12 \frac{\sigma \Tr(H)}{\norm{\nabla f(m)}}\right)$ with the cumulative distribution function of the one-dimensional standard normal distribution.
It is formally stated in the following lemma, the proof of which is provided in \Cref{apdx:lemma:successprobability}.

\begin{lemma}[]\label{lemma:successprobability}
  Let $z$ denote a random vector that follows the $d$-dimensional standard normal distribution,
  and $\Phi(\cdot)$ denote the cumulative distribution function of the one-dimensional standard normal distribution.
  For any $m \in \R^d$, $\sigma > 0$, and $\epsilon > 0$,
  \begin{multline}
    \Phi\left( - \frac12 \frac{\sigma \Tr(H)}{\norm{\nabla f(m)}}\cdot(1+\epsilon)\right)
    - \frac{2}{\epsilon^2} \cdot \frac{\Tr(H^2)}{\Tr(H)^2}
    \\
    <
    \Pr\left[f(m + \sigma z) \leq f(m) \right]
    \\
    <
    \Phi\left( - \frac12 \frac{\sigma \Tr(H)}{\norm{\nabla f(m)}}\cdot(1-\epsilon)\right)
    + \frac{2}{\epsilon^2} \cdot \frac{\Tr(H^2)}{\Tr(H)^2}
    \enspace.
    \label{eq:lemma:successprobability}
  \end{multline}
\end{lemma}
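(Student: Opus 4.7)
The plan is to reduce the success event to a comparison between a one-dimensional Gaussian and a quadratic form in $z$, then handle the quadratic form via Chebyshev's inequality. Expanding $f(m + \sigma z) - f(m)$ for the quadratic $f = \frac12 x^\T H x$ yields
\begin{equation*}
  f(m + \sigma z) - f(m) = \sigma \nabla f(m)^\T z + \frac{\sigma^2}{2} z^\T H z,
\end{equation*}
so the event $\{f(m+\sigma z) \leq f(m)\}$ is equivalent to
\begin{equation*}
  X := \frac{\nabla f(m)^\T z}{\norm{\nabla f(m)}} \leq - \frac{\sigma}{2\norm{\nabla f(m)}} Y, \quad Y := z^\T H z.
\end{equation*}
The key point is that $X \sim \mathcal{N}(0,1)$ (marginally) because it is the projection of $z \sim \mathcal{N}(0,I)$ onto a unit direction, and $Y$ satisfies $\E[Y] = \Tr(H)$ and $\Var(Y) = 2\Tr(H^2)$ by the standard formulas for Gaussian quadratic forms.

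Next I would apply Chebyshev's inequality to the event $E := \{\abs{Y - \Tr(H)} \leq \epsilon \Tr(H)\}$, obtaining
\begin{equation*}
  \Pr[E^c] \leq \frac{\Var(Y)}{\epsilon^2 \Tr(H)^2} = \frac{2 \Tr(H^2)}{\epsilon^2 \Tr(H)^2},
\end{equation*}
which is exactly the additive error term appearing in \Cref{lemma:successprobability}. On $E$, the bound $(1-\epsilon)\Tr(H) \leq Y \leq (1+\epsilon)\Tr(H)$ sandwiches the random threshold $-\frac{\sigma Y}{2\norm{\nabla f(m)}}$ between the deterministic thresholds $-\frac{\sigma(1\pm\epsilon)\Tr(H)}{2\norm{\nabla f(m)}}$.

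For the upper bound, I would split $\Pr[X \leq -\frac{\sigma Y}{2\norm{\nabla f(m)}}]$ according to whether $E$ occurs; on $E$ the success event is contained in $\{X \leq -\frac{\sigma(1-\epsilon)\Tr(H)}{2\norm{\nabla f(m)}}\}$, whose marginal probability is $\Phi(-\frac12 \frac{\sigma \Tr(H)}{\norm{\nabla f(m)}}(1-\epsilon))$, and on $E^c$ we discard at cost $\Pr[E^c]$. For the lower bound, I would restrict to $E$ and use the opposite inclusion $\{X \leq -\frac{\sigma(1+\epsilon)\Tr(H)}{2\norm{\nabla f(m)}}\} \cap E \subseteq \{X \leq -\frac{\sigma Y}{2\norm{\nabla f(m)}}\}$, then write $\Pr[X \leq a, E] \geq \Pr[X \leq a] - \Pr[E^c]$ with $a = -\frac{\sigma(1+\epsilon)\Tr(H)}{2\norm{\nabla f(m)}}$.

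The only subtlety is that $X$ and $Y$ are not independent, so I cannot condition directly on $Y$ inside a normal tail integral; the trick above circumvents this by using only the marginal law $\Pr[X \leq a] = \Phi(a)$ together with the deterministic sandwich on $E$. I do not expect any genuine obstacle — once the decomposition into $X$ and $Y$ is made, everything reduces to monotonicity of $\Phi$ and Chebyshev's inequality, with the constant $2$ in $\frac{2}{\epsilon^2}$ tracking the factor $2$ in $\Var(Y) = 2\Tr(H^2)$.
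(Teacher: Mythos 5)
Your proposal is correct and follows essentially the same route as the paper's proof: reduce the success event to $z_e \leq -\tfrac{\sigma}{2\norm{\nabla f(m)}} z^\T H z$, concentrate $z^\T H z$ around $\Tr(H)$ via Chebyshev with $\Var(z^\T H z)=2\Tr(H^2)$, and sandwich the random threshold on the good event using only the marginal $\mathcal{N}(0,1)$ law of $z_e$ (the paper obtains the lower bound by upper-bounding the failure probability, which is the same computation as your $\Pr[A\cap E]\geq\Pr[A]-\Pr[E^c]$ step). The only cosmetic point, shared with the paper, is that this argument most naturally yields non-strict inequalities, with strictness being a measure-theoretic afterthought.
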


As a corollary of \Cref{lemma:successprobability}, we can derive sufficient conditions on $\frac{\sigma\Tr(H)}{\norm{\nabla f(m)}}$ to upper-bound and lower-bound the success probability.
The proof is provided in \Cref{apdx:cor:successprobability}.
\begin{corollary}[]\label{cor:successprobability}
  Define $B_{H}^\mathrm{high} : \left(0, \frac12\right) \to \R_{> 0}$ as
  \begin{equation}
    B_{H}^\mathrm{high}(q) := \sup_{\sqrt{ \frac{ 4 }{1 - 2q} \cdot \frac{\Tr(H^2)}{\Tr(H)^2} } < \epsilon}
    \frac{ 2 \Phi^{-1} \left(1 - \left(q + \frac{2}{\epsilon^2} \cdot \frac{\Tr(H^2)}{\Tr(H)^2}\right) \right)}{ 1 + \epsilon} \enspace.
    \label{eq:cor:b_high}
  \end{equation}
  Subsequently,
  \begin{equation}
    \frac{\sigma \Tr(H)}{\norm{\nabla f(m)}}
    \leq B_{H}^\mathrm{high}(q)
    \Rightarrow
    \Pr\left[f(m + \sigma z) \leq f(m) \right] > q
    \enspace.
    \label{eq:cor:successprobability:large}
  \end{equation}
  $B_{H}^\mathrm{high}$ is right-continuous, strictly decreasing, and $B_{H}^\mathrm{high}(q) \leq 2 \Phi^{-1}(1-q)$ for all $q \in \left(0, \frac12\right)$.

  Suppose that $\Tr(H^2) < \Tr(H)^2 / 4$.
  We define $B_{H}^\mathrm{low}: \left( 2 \cdot \frac{\Tr(H^2)}{\Tr(H)^2} , \frac{1}{2}\right) \to \R_{> 0}$ as
  \begin{equation}
    B_{H}^\mathrm{low}(q) := \inf_{\sqrt{ \frac{ 2 }{q} \cdot \frac{\Tr(H^2)}{\Tr(H)^2} } < \epsilon < 1}
    \frac{ 2 \Phi^{-1} \left(1 - \left(q - \frac{2}{\epsilon^2} \cdot \frac{\Tr(H^2)}{\Tr(H)^2}\right) \right)}{ 1 - \epsilon} \enspace.
    \label{eq:cor:b_low}
  \end{equation}
  Subsequently,
  \begin{equation}
    \frac{\sigma \Tr(H)}{\norm{\nabla f(m)}}
    \geq B_{H}^\mathrm{low}(q)
    \Rightarrow
    \Pr\left[f(m + \sigma z) \leq f(m) \right] < q
    \enspace.
    \label{eq:cor:successprobability:small}
  \end{equation}
  $B_{H}^\mathrm{low}$ is left-continuous, strictly decreasing, and $B_{H}^\mathrm{low}(q) \geq 2 \Phi^{-1}(1-q)$ for all $q \in \left( 2 \cdot \frac{\Tr(H^2)}{\Tr(H)^2} , \frac{1}{2}\right)$.
\end{corollary}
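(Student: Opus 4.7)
The plan is to derive both implications by inverting the probability sandwich in Lemma~\ref{lemma:successprobability} with respect to the ratio $\tau := \frac{\sigma \Tr(H)}{\|\nabla f(m)\|}$, and then optimize over the auxiliary parameter $\epsilon$ to make each resulting sufficient condition as tight as possible.

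For the high-success implication~(\ref{eq:cor:successprobability:large}), I fix $\epsilon > 0$ and observe that the left inequality of~(\ref{eq:lemma:successprobability}) certifies $\Pr[f(m + \sigma z) \leq f(m)] > q$ as soon as $\Phi(-\tfrac{1}{2} \tau (1 + \epsilon)) \geq q + \tfrac{2}{\epsilon^2} \tfrac{\Tr(H^2)}{\Tr(H)^2}$. Using $\Phi(-x) = 1 - \Phi(x)$ and the strict monotonicity of $\Phi$, this is equivalent to
\[
\tau \leq \frac{2 \Phi^{-1}\bigl(1 - q - \tfrac{2}{\epsilon^2} \tfrac{\Tr(H^2)}{\Tr(H)^2}\bigr)}{1 + \epsilon},
\]
with the right-hand side positive precisely when the $\Phi^{-1}$ argument exceeds $\tfrac{1}{2}$, i.e.\ when $\epsilon > \sqrt{\tfrac{4}{1-2q} \tfrac{\Tr(H^2)}{\Tr(H)^2}}$; this is exactly the admissibility range in~(\ref{eq:cor:b_high}). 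Taking the supremum over admissible $\epsilon$ gives the tightest sufficient bound, namely $B_H^{\mathrm{high}}(q)$. The low-success implication~(\ref{eq:cor:successprobability:small}) is handled symmetrically: the right inequality of~(\ref{eq:lemma:successprobability}) forces $\Phi(-\tfrac{1}{2} \tau (1 - \epsilon)) \leq q - \tfrac{2}{\epsilon^2} \tfrac{\Tr(H^2)}{\Tr(H)^2}$, which inverts to $\tau \geq \tfrac{2 \Phi^{-1}(1 - q + \tfrac{2}{\epsilon^2} \tfrac{\Tr(H^2)}{\Tr(H)^2})}{1 - \epsilon}$. The constraint $\epsilon > \sqrt{\tfrac{2}{q} \tfrac{\Tr(H^2)}{\Tr(H)^2}}$ keeps the $\Phi^{-1}$ argument in $(0,1)$, while $\epsilon < 1$ preserves the direction of the inequality when dividing by $1 - \epsilon$. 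Taking the infimum yields $B_H^{\mathrm{low}}(q)$.

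The sandwich $B_H^{\mathrm{high}}(q) \leq 2 \Phi^{-1}(1 - q) \leq B_H^{\mathrm{low}}(q)$ is immediate: in the sup expression both $\Phi^{-1}\bigl(1 - q - \tfrac{2}{\epsilon^2}\tfrac{\Tr(H^2)}{\Tr(H)^2}\bigr) < \Phi^{-1}(1 - q)$ and $(1 + \epsilon)^{-1} < 1$ push toward a smaller value, whereas in the inf expression both factors move in the opposite direction. Strict monotonicity in $q$ of each bound follows because the integrand is strictly decreasing in $q$ (via $\Phi^{-1}$ being strictly increasing), and this pointwise strictness is preserved by the sup/inf together with the monotone motion of the admissible $\epsilon$-range (which shrinks with $q$ for $B_H^{\mathrm{high}}$ and expands with $q$ for $B_H^{\mathrm{low}}$).

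The step I expect to require the most care is the one-sided continuity. For $B_H^{\mathrm{high}}$, since the admissible $\epsilon$-range shrinks as $q$ grows, a right-continuity argument is natural: given any $\delta > 0$, pick an $\epsilon^{*}$ strictly in the interior of the admissible range at $q$ with integrand exceeding $B_H^{\mathrm{high}}(q) - \delta$, note that $\epsilon^{*}$ remains admissible for all $q'$ slightly above $q$, and invoke joint continuity of the integrand in $(q, \epsilon)$ to conclude $\liminf_{q' \downarrow q} B_H^{\mathrm{high}}(q') \geq B_H^{\mathrm{high}}(q)$; the reverse $\leq$ is free from monotonicity. A mirrored argument, approaching $q$ from below with an $\epsilon^{*}$ approximating the infimum, yields left-continuity of $B_H^{\mathrm{low}}$. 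The opposite one-sided limits need not coincide with the function value, because the sup/inf can jump when the boundary of the admissible $\epsilon$-range crosses a near-optimizer; this asymmetry is exactly why only one-sided continuity is claimed.
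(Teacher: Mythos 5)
Your derivation of the two implications follows essentially the same route as the paper: for each admissible $\epsilon$ you invert the corresponding one-sided bound of \Cref{lemma:successprobability} for $\tau := \sigma\Tr(H)/\norm{\nabla f(m)}$, identify the admissibility range of $\epsilon$ as exactly the one appearing in \eqref{eq:cor:b_high} resp.\ \eqref{eq:cor:b_low}, and then optimize over $\epsilon$. Your one-sided continuity argument (fix a near-optimal $\epsilon^{*}$ strictly inside the admissible range, note it stays admissible for nearby $q'$, use joint continuity, and get the other direction from monotonicity) is a clean variant of the paper's contradiction argument and works.

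The genuine gap is the strict-monotonicity step. The principle you invoke --- that pointwise strict decrease in $q$ is ``preserved by the sup/inf together with the monotone motion of the admissible $\epsilon$-range'' --- is not valid in general: with $E(q')\subseteq E(q)$ and $f_{q'}<f_{q}$ pointwise you only obtain $\sup_{E(q')}f_{q'}\leq\sup_{E(q)}f_{q}$, and strictness can fail when the supremum is approached but never attained (for instance $f_q(\epsilon)=1-(1+q)/\epsilon$ on $(1,\infty)$ is strictly decreasing in $q$ pointwise, yet $\sup_{\epsilon}f_q\equiv 1$ for every $q$). What rescues the claim here --- and what the paper's proof explicitly uses --- is that the supremum defining $B_H^{\mathrm{high}}(q)$ and the infimum defining $B_H^{\mathrm{low}}(q)$ are \emph{attained} at interior points $\epsilon^{\mathrm{high}}(q)$, $\epsilon^{\mathrm{low}}(q)$: the objective in \eqref{eq:cor:b_high} is continuous, positive inside the admissible interval and tends to $0$ at both of its ends, while the objective in \eqref{eq:cor:b_low} tends to $+\infty$ at both ends. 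Given attainment, for $q'>q$ one has $B_H^{\mathrm{high}}(q')=B_H^{\mathrm{high}}(q';\epsilon^{\mathrm{high}}(q'))<B_H^{\mathrm{high}}(q;\epsilon^{\mathrm{high}}(q'))\leq B_H^{\mathrm{high}}(q)$ (the maximizer at $q'$ is still admissible at $q$ because the range only shrinks with $q$), and symmetrically for $B_H^{\mathrm{low}}$. Attainment is also needed at one more place you gloss over: the hypotheses in \eqref{eq:cor:successprobability:large} and \eqref{eq:cor:successprobability:small} are non-strict, so in the boundary case $\tau=B_H^{\mathrm{high}}(q)$ (resp.\ $\tau=B_H^{\mathrm{low}}(q)$) you need an admissible $\epsilon$ at which the per-$\epsilon$ sufficient condition holds, which is again exactly the maximizer (resp.\ minimizer); without it, ``taking the supremum gives the tightest sufficient bound'' does not by itself cover equality.
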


\Cref{lemma:cases}, derived from \Cref{lemma:qualitygain} and \Cref{cor:successprobability}, splits the state space $\Theta$ of the (1+1)-ES into three distinct areas.
Case (i) corresponds to a situation in which $\sigma$ is so small that the success probability is close to $1/2$.
Case (ii) corresponds to a situation in which $\sigma$ is so large that the success probability is close to $0$.
In both cases, the expected one-step decrease in $\log(f(m_t))$ is close to zero.
Case (iii) corresponds to a situation in which $\sigma$ is in a reasonable range, and we can guarantee that the expected decrease in $\log(f(m_t))$ is sufficient.
The proof is provided in \Cref{apdx:lemma:cases}.
\begin{lemma}[]\label{lemma:cases}
  Suppose that
  \begin{equation}
    \frac{\Tr(H^2)}{\Tr(H)^2} < \frac18 \cdot \min \left\{ \Phi\left(\frac{1}{\sqrt{2\pi}}\right) - \frac12 , 1 - \Phi\left( \frac{3}{\sqrt{2\pi}}\right)\right\} \enspace.
    \label{eq:tr_cond}
  \end{equation}
  Let $B_H^\mathrm{high}$ and $B_H^\mathrm{low}$ be defined in \Cref{cor:successprobability}.
  Subsequently, there exists $q^\mathrm{low}$ satisfying
  \begin{gather}
    2 \cdot \frac{\Tr(H^2)}{\Tr(H)^2} < q^\mathrm{low} < \frac{1}{2} \enspace,
    \\
    \frac{4}{\sqrt{2\pi}} > B_H^\mathrm{low}(q^\mathrm{low}) \enspace.\label{eq:bhqlow}
  \end{gather}
  Fix such $q^\mathrm{low}$, and then, there exists $q^\mathrm{high}$ such that $q^\mathrm{low}<q^\mathrm{high}<1/2$.
  We define
  \begin{align}
    Q_H &= \sup \left\{ Q : B_H^\mathrm{high}(Q) \geq B_H^\mathrm{low}(q^\mathrm{low}) \right\} \enspace.
          \label{eq:qh}
  \end{align}
  Thus, $Q_H > 0$.
  Moreover, the following statements hold.
  \begin{enumerate}
    \renewcommand{\labelenumi}{(\roman{enumi})}
  \item If $\sigma < B_H^\mathrm{high}(q^\mathrm{high}) \cdot \sqrt{ 2 L f(m) } / \Tr(H)$,
    \begin{equation}
      \Pr\left[f(m + \sigma z) \leq f(m) \right] \geq q^\mathrm{high} \enspace. \label{eq:lemma:toosmall}
    \end{equation}

  \item If $\sigma > B_H^\mathrm{low}(q^\mathrm{low}) \cdot \norm{\nabla f(m)} / \Tr(H)$,
    \begin{equation}
      \Pr\left[f(m + \sigma z) \leq f(m) \right] \leq q^\mathrm{low} \enspace. \label{eq:lemma:toolarge}
    \end{equation}

  \item Otherwise,
    \begin{equation}
      \Pr\left[f(m + \sigma z) \leq f(m) \right] \geq Q_H
      \label{eq:lemma:reasonable}
    \end{equation}
    and
    \begin{multline}
      \E\left[\log\left( \frac{ f(m + \sigma z) }{ f(m) } \right) \ind{f(m + \sigma z) \leq f(m)}\right]
      \\
      \leq
      \frac{ L \cdot  B_H^\mathrm{high}(q^\mathrm{high}) }{ 2 \Tr(H) }  \left( B_H^\mathrm{low}(q^\mathrm{low}) - \frac{4}{\sqrt{2\pi}} \right) \cdot Q_H
      < 0
      \enspace.
      \label{eq:lemma:reasonable:qualitygain}
    \end{multline}
  \end{enumerate}
\end{lemma}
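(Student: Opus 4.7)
The plan is to first certify the existence of $q^\mathrm{low}$, $q^\mathrm{high}$, and the positivity of $Q_H$ from the monotonicity and continuity properties of $B_H^\mathrm{low}$ and $B_H^\mathrm{high}$ in \Cref{cor:successprobability} combined with the smallness condition \eqref{eq:tr_cond}, and then to derive each case statement directly: (i) and (ii) from \Cref{cor:successprobability}, and (iii) from \Cref{lemma:qualitygain}.

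For the existence claims, I would specialize the infimum in $B_H^\mathrm{low}(q)$ to a single value, $\epsilon = 1/2$, to obtain the explicit bound $B_H^\mathrm{low}(q) \leq 4 \Phi^{-1}(1 - q + 8 \Tr(H^2)/\Tr(H)^2)$ valid for $q > 8 \Tr(H^2)/\Tr(H)^2$. Requiring this bound to lie below $4/\sqrt{2\pi}$ reduces to $q > 1 - \Phi(1/\sqrt{2\pi}) + 8\Tr(H^2)/\Tr(H)^2$, and the first half of \eqref{eq:tr_cond} guarantees that the resulting set intersects $(2\Tr(H^2)/\Tr(H)^2, 1/2)$, yielding $q^\mathrm{low}$. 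A valid $q^\mathrm{high}$ is then any point of the non-empty interval $(q^\mathrm{low}, 1/2)$. For $Q_H > 0$, I would specialize the supremum in $B_H^\mathrm{high}$ to $\epsilon = 1$, giving $B_H^\mathrm{high}(Q) \geq \Phi^{-1}(1 - Q - 2 \Tr(H^2)/\Tr(H)^2)$ and reducing $B_H^\mathrm{high}(Q) \geq B_H^\mathrm{low}(q^\mathrm{low})$ to $Q \leq 1 - \Phi(B_H^\mathrm{low}(q^\mathrm{low})) - 2 \Tr(H^2)/\Tr(H)^2$. Since $B_H^\mathrm{low}(q^\mathrm{low}) < 4/\sqrt{2\pi}$, this right-hand side strictly exceeds $1 - \Phi(4/\sqrt{2\pi}) - 2 \Tr(H^2)/\Tr(H)^2$, which is positive by a short numerical check using the second half of \eqref{eq:tr_cond}; hence $Q_H > 0$.

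The case analyses then rest on the eigenvalue identity $\norm{\nabla f(m)}^2 = m^\T H^2 m \geq L \cdot m^\T H m = 2 L f(m)$. In case (i), this promotes the hypothesis $\sigma < B_H^\mathrm{high}(q^\mathrm{high}) \sqrt{2 L f(m)}/\Tr(H)$ to $\sigma \Tr(H)/\norm{\nabla f(m)} \leq B_H^\mathrm{high}(q^\mathrm{high})$, and \eqref{eq:cor:successprobability:large} yields \eqref{eq:lemma:toosmall}. Case (ii) is an immediate application of \eqref{eq:cor:successprobability:small}. In case (iii), the probability bound \eqref{eq:lemma:reasonable} follows from $\sigma \Tr(H)/\norm{\nabla f(m)} \leq B_H^\mathrm{low}(q^\mathrm{low}) \leq B_H^\mathrm{high}(Q_H)$ (the latter by the defining supremum \eqref{eq:qh} and the right-continuity of $B_H^\mathrm{high}$) via \eqref{eq:cor:successprobability:large}. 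For \eqref{eq:lemma:reasonable:qualitygain}, I would feed into \Cref{lemma:qualitygain} three estimates: the bracket $\tfrac14 \sigma\Tr(H)/\norm{\nabla f(m)} - 1/\sqrt{2\pi}$ is upper-bounded by $\tfrac14(B_H^\mathrm{low}(q^\mathrm{low}) - 4/\sqrt{2\pi})$, which is negative by \eqref{eq:bhqlow}; the positive prefactor $\sigma\norm{\nabla f(m)}/f(m)$ is lower-bounded by $2 L B_H^\mathrm{high}(q^\mathrm{high})/\Tr(H)$ using the lower hypothesis on $\sigma$ together with the eigenvalue inequality; and the success probability is $\geq Q_H$. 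The product of these three estimates is exactly the right-hand side of \eqref{eq:lemma:reasonable:qualitygain}.

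The step I expect to be the main obstacle is the strict positivity of $Q_H$: it is the only place where both halves of the numerical condition \eqref{eq:tr_cond} must be threaded simultaneously through the interlocking definitions \eqref{eq:cor:b_high} and \eqref{eq:cor:b_low}, and a careless choice of $\epsilon$ in either bound would collapse the admissible window for $Q$. Everything else reduces to a direct application of the preceding corollary or lemma together with the standard gradient-norm bound $\norm{\nabla f(m)} \geq \sqrt{2 L f(m)}$.
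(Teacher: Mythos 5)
Your proposal is correct in substance and follows essentially the same route as the paper's proof: specialize $\epsilon$ in $B_H^{\mathrm{low}}$ and $B_H^{\mathrm{high}}$ to extract explicit bounds from the two halves of \eqref{eq:tr_cond} (the paper uses $\epsilon=1/2$ in both places, which makes the $Q_H>0$ step exact via $\tfrac43\Phi^{-1}\bigl(1-8\Tr(H^2)/\Tr(H)^2\bigr)>\tfrac{4}{\sqrt{2\pi}}$ and avoids your numerical check with $\epsilon=1$), use $\norm{\nabla f(m)}\geq\sqrt{2Lf(m)}$ and $\norm{\nabla f(m)}\leq\sqrt{2Uf(m)}$ to convert the case hypotheses into bounds on $\sigma\Tr(H)/\norm{\nabla f(m)}$, and feed the three-factor estimate into \Cref{lemma:qualitygain}. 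The one step you should repair is the parenthetical claim $B_H^{\mathrm{low}}(q^{\mathrm{low}})\leq B_H^{\mathrm{high}}(Q_H)$: right-continuity of $B_H^{\mathrm{high}}$ does \emph{not} imply that the supremum $Q_H$ itself belongs to the set in \eqref{eq:qh} (a strictly decreasing right-continuous function may jump down exactly at $Q_H$ from the left); instead observe that every $Q<Q_H$ lies in that set, so \eqref{eq:cor:successprobability:large} gives $\Pr[f(m+\sigma z)\leq f(m)]>Q$ for all such $Q$, and taking the supremum over $Q<Q_H$ yields \eqref{eq:lemma:reasonable} --- which is the limiting argument the paper uses.
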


\section{Upper Convergence Rate Bound}\label{sec:upperconvergencerate}

    In this section, the upper convergence rate of the (1+1)-ES on $f \in \mathrm{Q}$ is upper-bounded.
%
    We proceed the derivation as follows.
    Owing to  \Cref{prop:f-norm}, we can assume, without loss of generality, that $f(x) = \frac12 x^\T H x$.
    First, in \Cref{subsec:potentialfunction}, we define the potential function $V: \Theta \to \R$ satisfying $\log\left(f(m)\right) \leq V(\theta)$.
    Let $\{(\theta_t, \mathcal{F}_t)\}_{t \geq 0} = \texttt{ES}(f, \theta_0, \{z_t\}_{t\geq 0})$ for $\theta_0 \in \Theta$.
    Next, we apply \Cref{prop:convergence_rate} with $X_t = V(\theta_t)$.
    Conditions C1 and C3 in \Cref{prop:convergence_rate} are derived in \Cref{subsec:lowerbound}.
    Subsequently, we obtain $B > 0$ such that
    \begin{equation}
      \Pr\left[\limsup_{t \to \infty} \frac1t \frac{V(\theta_t)}{V(\theta_0)} \leq B\right] = 1 \enspace.
    \end{equation}
    Because $\log\left(f(m)\right) \leq V(\theta)$, using \Cref{prop:f-norm}, we obtain the upper bound of the upper convergence rate.
    Finally, we evaluate the dependency of $B$ on $d$ and $\Cond(H)$ in \Cref{subsec:main}.



\subsection{Potential Function}\label{subsec:potentialfunction}

The potential function $V(\theta)$ on a convex quadratic function $f(x) = \frac12 x^\T H x$ is defined as follows.
This form of the potential function was first proposed in \cite{akimoto2018drift} for the sphere function and was generalized in \cite{morinaga2019generalized}. The following definition is specialized for a convex quadratic function:

\begin{definition}[Potential function]\label{def:potential}
  Let $f: x \mapsto \frac12 x^\T H x $ with a positive definite symmetric $H$.
  Let $L$ and $U$ be the smallest and greatest eigenvalues of $H$, respectively.
  The potential function $V(\theta)$ for the (1+1)-ES solving $f$ is defined as
  \begin{multline}
    V(\theta)
    = \log\left(f(m)\right)
    \\
    + v \cdot \log^+ \left(\frac{b_s\sqrt{Lf(m)}}{\Tr(H) \sigma}\right) +  v \cdot \log^+ \left(\frac{\Tr(H)\sigma}{b_\ell\sqrt{Lf(m)}}\right) \enspace
    \label{eq:potential}
  \end{multline}
  where $v \in (0, 2)$ and $0<b_s<b_\ell$ are constants, and $\log^+(x) := \log(x) \cdot \ind{x\geq 1}$ .
\end{definition}

The first term measures the main quantity that we would like to decrease. 
The second and third terms measure the progress of $\sigma$-adaptation when $\sigma$ is too small and too large, respectively.
See \cite{akimoto2018drift} and \cite{morinaga2019generalized} for a more detailed description.

We define $b_s$, $b_\ell$, and $v$ as follows.
Suppose that \eqref{eq:tr_cond} holds.
We choose $q^\mathrm{low}$ and $q^\mathrm{high}$ such that
\begin{gather}
  2 \cdot \frac{\Tr(H^2)}{\Tr(H)^2} < q^\mathrm{low} < p_\mathrm{target} < q^\mathrm{high} < \frac{1}{2}
  \enspace,
  \label{eq:q_condition1}
  \\
  \frac{4}{\sqrt{2\pi}} > B_H^\mathrm{low}(q^\mathrm{low})
  > \frac{\aup}{\adown} B_H^\mathrm{high}(q^\mathrm{high})
  \enspace.
  \label{eq:q_condition2}
\end{gather}
If $\aup$ and $\adown$ are set so that
\begin{align}
  \frac{4}{\sqrt{2\pi}} > B_H^\mathrm{low}(p_\mathrm{target}) \label{eq:ptarget_condition}
\end{align}
holds,
as we see in \Cref{lemma:cases}, we can find such $q^\mathrm{low}$.
Because $B_H^\mathrm{high}(q) < 2 \Phi^{-1}(1 - q) \to 0$ as $q \to 1/2$,
we can find a pair $(q^\mathrm{low}, q^\mathrm{high})$ satisfying the conditions above.
Then, we set $b_s$ and $b_\ell$ as follows
\begin{align}
  b_s &= \sqrt{ 2 } \cdot B_H^\mathrm{high}(q^\mathrm{high}) \cdot \aup
        \label{eq:b_s}
        \enspace,
  \\
  b_\ell &= \sqrt{2} \cdot B_H^\mathrm{low}(q^\mathrm{low}) \cdot \adown
           \label{eq:b_l}
           \enspace.
\end{align}
It is easy to see from \cref{eq:q_condition2,eq:b_s,eq:b_l} that $0 < b_s < b_\ell$.
Let
\begin{equation}
  w = \frac{ L \cdot  B_H^\mathrm{high}(q^\mathrm{high}) }{ 2 \Tr(H) } \left( \frac{4}{\sqrt{2\pi}} - B_H^\mathrm{low}(q^\mathrm{low})  \right) \cdot Q_H,
  \label{eq:w}
\end{equation}
so that the left-hand side of \eqref{eq:lemma:reasonable:qualitygain} is upper-bounded by $-w$.
Finally, we set $v$ as
\begin{equation}
  v = \min\left\{ \frac{w}{4 \log(\aup / \adown)} , 1 \right\}
  \enspace.
  \label{eq:v}
\end{equation}

\subsection{Expected Potential Decrease}\label{subsec:lowerbound}

The following three lemmas guarantee that the potential function $V(\theta_t)$ decreases sufficiently in any case of the step size in \Cref{lemma:cases}.
In the following lemmas, let $f(x) = \frac12 x^\T H x$ with a positive definite symmetric $H$ satisfying \eqref{eq:tr_cond}. Let $\{(\theta_t, \mathcal{F}_t)\}_{t \geq 0} = \texttt{ES}(f, \theta_0, \{z_t\}_{t \geq 0})$ be the state sequence of the (1+1)-ES solving $f$ with $\theta_0 \in \Theta$.
The potential function $V$ is defined in \Cref{def:potential}.

    The first lemma is for the case in which the step size is too small to expect a sufficient decrease in $\log(f(m_t))$.
    In this situation, however, we can lower-bound the success probability by $q^\mathrm{high}$ in light of \Cref{lemma:cases}.
    This leads to a sufficient expected decrease in the second term in \eqref{eq:potential} as $\sigma$ is increased by $\aup > 1$ with a probability no less than $q^\mathrm{high}$.
    The proof is provided in \Cref{apdx:lemma:too_small_sigma}.
    \begin{lemma}[Small-step-size case]\label{lemma:too_small_sigma}
      If $\sigma_t < \frac{b_s\sqrt{L}}{\aup\Tr(H)} \sqrt{f(m_t)}$, the following holds:
      \begin{multline}
        \E[V(\theta_{t+1}) - V(\theta_t) \mid \mathcal{F}_t]\\
        \leq - \min\left\{\frac{w}{4}, \log\left(\frac{\aup}{\adown}\right)\right\} \cdot \left(q^\mathrm{high} - p_\mathrm{target}  \right)
        \enspace.
              \label{eq:lemma:too_small_sigma}
      \end{multline}
    \end{lemma}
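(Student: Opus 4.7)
The plan is to introduce a normalized step-size parameter $\tau_t := \Tr(H)\sigma_t/\sqrt{L f(m_t)}$, so that the potential function becomes $V(\theta_t) = \log f(m_t) + v[\log b_s - \log \tau_t]^+ + v[\log \tau_t - \log b_\ell]^+$. The hypothesis rewrites as $\log \tau_t < \log b_s - \log \aup$, and by the choice $b_s = \sqrt{2}\,B_H^{\mathrm{high}}(q^{\mathrm{high}})\aup$ in \eqref{eq:b_s}, the bound on $\sigma_t$ matches exactly the premise of \Cref{lemma:cases}(i), so the conditional success probability $p := \Pr[f(m_t + \sigma_t z_t) \leq f(m_t) \mid \mathcal{F}_t]$ satisfies $p \geq q^{\mathrm{high}}$. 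In particular, at $\theta_t$ the third summand of $V$ vanishes and the second equals $v(\log b_s - \log \tau_t) > v\log \aup$. The remaining work is to bound $\Delta V := V(\theta_{t+1}) - V(\theta_t)$ on each branch and then to take the conditional expectation.

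On a failed trial, $m_{t+1} = m_t$ and $\sigma_{t+1} = \adown \sigma_t$, so $\log \tau_{t+1} = \log \tau_t + \log \adown$ stays below $\log b_s$, giving $\Delta V = v \log(1/\adown)$. On a successful trial, set $\Delta f := \log f(m_{t+1}) - \log f(m_t) \leq 0$; then $\log \tau_{t+1} - \log \tau_t = \log \aup - \Delta f/2 \geq \log \aup$. I will show $\Delta V \leq -v\log \aup$ by splitting on where $\log\tau_{t+1}$ lands in the piecewise-linear landscape of $\Phi(\tau) := v[\log b_s - \log \tau]^+ + v[\log \tau - \log b_\ell]^+$.

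The split has three sub-cases: (A) $\log \tau_{t+1} \leq \log b_s$, where direct substitution yields $\Delta V = \Delta f(1 + v/2) - v\log \aup \leq -v\log \aup$; (B) $\log b_s < \log \tau_{t+1} \leq \log b_\ell$, where the new $\Phi$ vanishes and so $\Delta V \leq -v(\log b_s - \log \tau_t) < -v\log \aup$; and (C) $\log \tau_{t+1} > \log b_\ell$, where the third term of $\Phi$ becomes active. Case (C) is the main obstacle: here the new $\Phi$ is $v(\log\tau_{t+1} - \log b_\ell)$ and could in principle exceed $\Phi(\theta_t)$, so one must trade the $\Phi$-increase against the required drop in $f$. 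Expanding $\Delta V = \Delta f(1 - v/2) + 2v\log\tau_t + v\log\aup - v\log(b_s b_\ell)$ and bounding $\log\tau_t < \log b_s - \log \aup$ (together with $\Delta f \leq 0$ and $1 - v/2 > 0$ since $v \in (0, 2)$) delivers $\Delta V < -2\log(b_\ell/b_s) - v\log \aup \leq -v\log \aup$.

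Combining the two branches gives
\begin{align*}
\E[\Delta V \mid \mathcal{F}_t] &\leq -p\, v \log \aup + (1 - p)\, v \log(1/\adown)\\
&= v\log(\aup/\adown)\bigl(p_\mathrm{target} - p\bigr)\\
&\leq -v\log(\aup/\adown)\bigl(q^{\mathrm{high}} - p_\mathrm{target}\bigr),
\end{align*}
using $p_\mathrm{target} = \log(1/\adown)/\log(\aup/\adown)$ and $p \geq q^{\mathrm{high}} > p_\mathrm{target}$ from \eqref{eq:q_condition1}. The claim then follows from the choice $v = \min\{w/(4\log(\aup/\adown)),1\}$ in \eqref{eq:v}, which gives $v\log(\aup/\adown) = \min\{w/4,\log(\aup/\adown)\}$, matching the right-hand side of \eqref{eq:lemma:too_small_sigma}.
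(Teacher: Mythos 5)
Your proposal is correct and takes essentially the same route as the paper's proof: the appendix argument performs the same case distinction via products of the indicators $\inds,\indl,\indup,\inddown$ and reaches the identical per-realization bound ($\Delta V\leq -v\log(\aup)$ on success, $\Delta V\leq v\log(1/\adown)$ on failure) before invoking \Cref{lemma:cases}(i) with $p\geq q^\mathrm{high}$ and the identity $v\log(\aup/\adown)=\min\{w/4,\log(\aup/\adown)\}$. One cosmetic slip: in your case (C) the algebra yields $\Delta V < -v\log(b_\ell/b_s)-v\log(\aup)$ rather than $-2\log(b_\ell/b_s)-v\log(\aup)$; since $b_\ell>b_s$ this does not affect the needed conclusion $\Delta V\leq -v\log(\aup)$.
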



    The second lemma is for the case in which the step size is too large.
    In this situation, \Cref{lemma:cases} ensures that the success probability is no greater than $q^\mathrm{low}$.
    The expected decrease in $\log(f(m_t))$ can be arbitrarily small as the success probability is close to zero.
    However, because $\sigma$ is decreased by $\adown < 1$ with probability no less than $1 - q^\mathrm{low}$, the third term in \eqref{eq:potential} will be decreased sufficiently in expectation.
    The proof is provided in \Cref{apdx:lemma:too_large_sigma}.
    \begin{lemma}[Large-step-size case]\label{lemma:too_large_sigma}
      If $\sigma_t > \frac{b_\ell}{\sqrt{2}\adown\Tr(H)}\norm{\nabla f(m_t)}$, it holds
      \begin{multline}
        \E[V(\theta_{t+1}) - V(\theta_t) \mid \mathcal{F}_t]
        \\
        \leq - \min\left\{\frac{w}{4}, \log\left(\frac{\aup}{\adown}\right)\right\} \cdot \left( p_\mathrm{target} -  q^\mathrm{low}    \right)
          \enspace.
          \label{eq:lemma:too_large_sigma}
      \end{multline}
    \end{lemma}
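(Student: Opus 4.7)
The plan is to decompose $\Delta V := V(\theta_{t+1}) - V(\theta_t)$ into its three pieces from \eqref{eq:potential}, evaluate each in the success and failure cases, and then take the conditional expectation. I would begin by translating the step-size hypothesis: using $b_\ell = \sqrt{2}\,B_H^{\mathrm{low}}(q^{\mathrm{low}})\adown$ and $\|\nabla f(m_t)\|\geq \sqrt{2L f(m_t)}$, the assumption becomes $\sigma_t\Tr(H)/\|\nabla f(m_t)\| > B_H^{\mathrm{low}}(q^{\mathrm{low}})$, so \Cref{lemma:cases}(ii) yields $p := \Pr[f(m_t+\sigma_t z_t)\leq f(m_t)\mid \mathcal{F}_t] \leq q^{\mathrm{low}}$. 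A second, crucial consequence is $b_\ell\sqrt{Lf(m_t)}/(\Tr(H)\sigma_t) < \adown$, which governs whether the $\log^+$ pieces of $V$ are active.

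The key observation is that the small-$\sigma$ (second) term of $V$ stays identically $0$ both before and after the step, while the large-$\sigma$ (third) term is active throughout. Combining the bound above with $b_s < b_\ell$ — forced by \eqref{eq:q_condition2} via the definitions \eqref{eq:b_s}--\eqref{eq:b_l} — gives $b_s\sqrt{Lf(m_t)}/(\Tr(H)\sigma_t) < \adown < 1$; on failure this ratio is rescaled by $1/\adown$ and remains $<1$, and on success it is rescaled by at most $1/\aup$ together with a possible decrease of $f$, again staying below $1$. The argument of the third $\log^+$ is $>1/\adown$ beforehand, and direct inspection shows it remains $>1$ after either outcome. Assembling the pieces yields $\Delta V = v\log(\adown)$ on failure, and $\Delta V = (1-v/2)\log(f(x_t)/f(m_t)) + v\log(\aup) \leq v\log(\aup)$ on success, where the last inequality uses $v\in(0,2)$ and $f(x_t)\leq f(m_t)$ to discard the nonpositive quality-gain contribution.

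Taking conditional expectation and invoking the identity $p_{\mathrm{target}}\log(\aup/\adown) = \log(1/\adown)$ from the definition of $p_{\mathrm{target}}$ collapses everything to $\E[\Delta V\mid \mathcal{F}_t]\leq v\log(\aup/\adown)(p - p_{\mathrm{target}})$. Since $p\leq q^{\mathrm{low}} < p_{\mathrm{target}}$ and $v\log(\aup/\adown) = \min\{w/4,\,\log(\aup/\adown)\}$ from \eqref{eq:v}, we obtain \eqref{eq:lemma:too_large_sigma}. The main obstacle is the bookkeeping that shows the second $\log^+$ of $V$ stays inactive both before and after the step under either outcome — precisely what the gap condition \eqref{eq:q_condition2} between $B_H^{\mathrm{high}}(q^{\mathrm{high}})$ and $B_H^{\mathrm{low}}(q^{\mathrm{low}})$ is engineered to enforce — so that the potentially large negative quality-gain term on success, over which we have no sharp control in this large-step-size regime, may be discarded via its nonpositive sign.
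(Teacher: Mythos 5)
Your proof is correct and follows essentially the same route as the paper's: verify that under the large-step hypothesis the small-$\sigma$ term of $V$ is inactive and the large-$\sigma$ term is active both before and after the step, bound $\Delta V$ by $v\log(\adown)$ on failure and by $v\log(\aup)$ on success (discarding the nonpositive quality-gain term), and then combine $\E[\indup]\leq q^{\mathrm{low}}$ from \Cref{lemma:cases}(ii) with the definition of $p_\mathrm{target}$ and $v$. The paper does the same bookkeeping via indicator algebra rather than an explicit success/failure case split, so no substantive difference remains.
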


    The third lemma is for the case in which the decrease in $\log(f(m_t))$ is sufficiently large.
    Using \Cref{lemma:cases}, we can guarantee a sufficient expected decrease in the first term in \eqref{eq:potential}.
    The proof is provided in \Cref{apdx:lemma:reasonable_sigma}.
    \begin{lemma}[Reasonable-step-size case]\label{lemma:reasonable_sigma}
      If $\frac{b_s\sqrt{L}}{\aup\Tr(H)} \sqrt{f(m_t)}\leq \sigma_t \leq \frac{b_\ell}{\sqrt{2}\adown\Tr(H)}\norm{\nabla f(m_t)}$, it holds
      \begin{equation}
        \E\left[V(\theta_{t+1}) - V(\theta_t)\mid\mathcal{F}_t\right]
        \leq - \frac{w}{4} 
        \enspace.
        \label{eq:lemma:reasonable_sigma}
      \end{equation}
    \end{lemma}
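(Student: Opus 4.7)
The plan is to first verify that the hypothesis places $(m_t,\sigma_t)$ in case~(iii) of \Cref{lemma:cases}. Substituting $b_s = \sqrt{2}B_H^{\mathrm{high}}(q^{\mathrm{high}})\aup$ and $b_\ell = \sqrt{2}B_H^{\mathrm{low}}(q^{\mathrm{low}})\adown$ from \eqref{eq:b_s} and \eqref{eq:b_l}, a routine rearrangement shows the assumed range of $\sigma_t$ reduces exactly to $B_H^{\mathrm{high}}(q^{\mathrm{high}})\sqrt{2Lf(m_t)}/\Tr(H) \leq \sigma_t \leq B_H^{\mathrm{low}}(q^{\mathrm{low}})\norm{\nabla f(m_t)}/\Tr(H)$. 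Hence \Cref{lemma:cases}(iii) yields the key one-step quality-gain estimate
\begin{equation*}
\E\bigl[\log(f(x_t)/f(m_t)) \ind{f(x_t) \leq f(m_t)} \mid \mathcal{F}_t\bigr] \leq -w
\end{equation*}
with $w$ as in \eqref{eq:w}.

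Next I would decompose $V(\theta_{t+1}) - V(\theta_t) = \Delta_1 + v\Delta_2 + v\Delta_3$, where $\Delta_1$ is the change of the $\log f(m)$ term in \eqref{eq:potential} and $\Delta_2,\Delta_3$ are the changes of the two $\log^+$ terms. Writing $L_t := \tfrac12\log(f(x_t)/f(m_t))$ (so $L_t \leq 0$ whenever $f(x_t) \leq f(m_t)$), the update rule implies that on a successful step the argument of $\log^+$ in the second term of $V$ is rescaled by $e^{L_t}/\aup < 1$ while that of the third is rescaled by $\aup e^{-L_t} > 1$; on an unsuccessful step the rescaling factors are $1/\adown$ and $\adown$ respectively. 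Since $\log^+$ is nondecreasing and $1$-Lipschitz with respect to $\log$, this yields the case-wise bounds $\Delta_2 \leq 0$ on success, $\Delta_2 \leq \log(1/\adown)$ on failure, $\Delta_3 \leq \log(\aup) - L_t$ on success, and $\Delta_3 \leq 0$ on failure.

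I would then take conditional expectations and, crucially, keep the $L_t \ind{f(x_t)\leq f(m_t)}$ contributions of $\Delta_1$ and $v\Delta_3$ coupled, obtaining
\begin{equation*}
\E[\Delta V \mid \mathcal{F}_t] \leq (2-v)\E[L_t \ind{f(x_t)\leq f(m_t)} \mid \mathcal{F}_t] + v\bigl[p\log(\aup) + (1-p)\log(1/\adown)\bigr],
\end{equation*}
where $p := \Pr[f(x_t) \leq f(m_t)\mid\mathcal{F}_t]$. Using $\E[L_t \ind{f(x_t)\leq f(m_t)} \mid \mathcal{F}_t] \leq -w/2$ (a consequence of the first step) together with $v \in (0,1]$ bounds the first summand by $-w/2$; meanwhile the elementary inequality $p\log(\aup) + (1-p)\log(1/\adown) \leq \log(\aup/\adown)$ combined with the calibration $v \leq w/(4\log(\aup/\adown))$ in \eqref{eq:v} bounds the second summand by $w/4$. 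Summing the two yields $\E[\Delta V \mid \mathcal{F}_t] \leq -w/4$, as claimed.

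The main obstacle is the coupling of $\Delta_1$ and $v\Delta_3$: bounding the change in the third term of $V$ independently would leave a $-\E[L_t \ind{f(x_t)\leq f(m_t)}]$ contribution that is a priori unbounded from above and would defeat the negative drift. The essential observation is that the same $L_t$ enters $\Delta_1$ with coefficient $+2$ and $v\Delta_3$ with coefficient $-v$, so they merge into $(2-v)L_t \ind{f(x_t)\leq f(m_t)}$, whose expectation is directly controlled by \Cref{lemma:cases}(iii). The choice of $v$ in \eqref{eq:v} is precisely what keeps the residual $v\log(\aup/\adown)$ generated by the failure branch below $w/4$.
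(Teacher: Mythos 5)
Your proposal is correct and follows essentially the same route as the paper's proof: verify that the hypothesis places the state in case~(iii) of \Cref{lemma:cases}, couple the decrease of the $\log f(m)$ term with the change of the third $\log^+$ term (the paper's coefficient $1-(\mathbb{I}_{s}-\mathbb{I}_{\ell})\frac{v}{2}$ is exactly your $(2-v)L_t$ merging), and finish with $\E[\log(f(m_{t+1})/f(m_t))\mid\mathcal{F}_t]\leq -w$ together with the calibration $v\log(\aup/\adown)\leq w/4$ from \eqref{eq:v}. The only difference is cosmetic: you dispose of the $\log^+$ penalty terms via monotonicity and $1$-Lipschitzness on the log scale, where the paper carries out an explicit case analysis over the indicator products.
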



    The three lemmas above are used to satisfy Condition C1 in \Cref{prop:convergence_rate}.
    Condition C3 in \Cref{prop:convergence_rate} is also satisfied by $V(\theta)$, which is stated in the next lemma. Its proof is provided in \Cref{apdx:lemma:v_variance_bound}.
    \begin{lemma}[]\label{lemma:v_variance_bound}
        Suppose $d>3$.
        It holds $\sum_{t=1}^{\infty}\Var[V(\theta_{t}) \mid \mathcal{F}_{t-1}] / t^2 < \infty$.
    \end{lemma}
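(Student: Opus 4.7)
The plan is to bound $\Var[V(\theta_t)\mid \mathcal{F}_{t-1}]$ uniformly in $t$ by a finite constant, so that summability follows immediately from $\sum_{t\geq 1} 1/t^2 < \infty$. Using $\Var[X] \leq \E[X^2]$ and that $V(\theta_{t-1})$ is $\mathcal{F}_{t-1}$-measurable, it suffices to uniformly bound $\E\bigl[(V(\theta_t)-V(\theta_{t-1}))^2 \mid \mathcal{F}_{t-1}\bigr]$.

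First I would decompose the one-step change along the three terms in \Cref{def:potential}. Writing out the increments of the two $\log^+$ terms and using the fact that $\log^+$ is $1$-Lipschitz with respect to the logarithm of its argument, together with the fact that $\log(\sigma_{t+1}/\sigma_t) \in \{\log(\aup), \log(\adown)\}$, gives the pointwise bound
\begin{equation*}
|V(\theta_{t+1}) - V(\theta_t)| \leq (1+v)\,\Bigl|\log\!\bigl(f(m_{t+1})/f(m_t)\bigr)\Bigr| + 2v\,\log(\aup/\adown) \enspace.
\end{equation*}
Squaring, using $(a+b)^2 \leq 2a^2 + 2b^2$, and taking conditional expectation reduces the problem to bounding $\E\bigl[\log^2(f(m_{t+1})/f(m_t)) \mid \mathcal{F}_t\bigr]$ uniformly in $\theta_t$. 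Note that $\log(f(m_{t+1})/f(m_t))$ vanishes on the failure event and equals $\log(f(m_t+\sigma_t z_t)/f(m_t)) \leq 0$ on the success event, so we are really bounding $\E\bigl[|\log(f(m+\sigma z)/f(m))|^2 \ind{f(m+\sigma z)\leq f(m)}\bigr]$ uniformly in $(m,\sigma)$.

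The bridge to \Cref{lemma:variancebound} is the elementary inequality $y^2 \leq 2(e^{y}-1)$ for $y \geq 0$, which follows from $e^{y} \geq 1 + y + y^2/2$. Applying it with $y = |\log(f(m+\sigma z)/f(m))| \cdot \ind{f(m+\sigma z)\leq f(m)}$ (so that both sides vanish on the failure event) and taking expectation yields
\begin{equation*}
\E\!\left[\Bigl|\log\!\tfrac{f(m+\sigma z)}{f(m)}\Bigr|^2 \ind{f(m+\sigma z)\leq f(m)}\right] \leq 2 \cdot \frac{1}{d-3}\cdot\frac{U}{L}
\end{equation*}
by \Cref{lemma:variancebound} (since the right-hand side there is $1 + \tfrac{1}{d-3}\tfrac{U}{L}$, and subtracting $1$ corresponds to the $e^y - 1$).

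Combining, $\E\bigl[(V(\theta_{t+1})-V(\theta_t))^2 \mid \mathcal{F}_t\bigr]$ is upper bounded by a finite constant $K := \tfrac{4(1+v)^2 U}{(d-3) L} + 8 v^2 \log^2(\aup/\adown)$ that depends only on the problem and algorithm parameters, not on $t$ or $\theta_t$. Hence $\Var[V(\theta_t)\mid \mathcal{F}_{t-1}] \leq K$ almost surely for every $t$, and therefore $\sum_{t=1}^\infty \Var[V(\theta_t)\mid \mathcal{F}_{t-1}]/t^2 \leq K \sum_{t=1}^\infty 1/t^2 < \infty$. The only slightly delicate step is the transition from the exponential moment of \Cref{lemma:variancebound} to the second moment; the inequality $y^2 \leq 2(e^y - 1)$ is what makes it work cleanly, and the hypothesis $d > 3$ is inherited from that lemma.
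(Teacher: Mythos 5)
Your proof is correct, and its overall strategy is the one the paper uses: bound $\Var[V(\theta_t)\mid\mathcal{F}_{t-1}]$ by a $t$-independent constant via a pointwise bound of the form $|V(\theta_{t+1})-V(\theta_t)|\leq(1+v)\,|\log(f(m_{t+1})/f(m_t))|+\mathrm{const}$, note that on the failure event the log-ratio vanishes so the relevant quantity is exactly the one controlled by \Cref{lemma:variancebound}, and convert the exponential-moment bound of that lemma into a second-moment bound via $y^2\leq 2(e^y-1)$ for $y\geq 0$; summability then follows from $\sum_t 1/t^2<\infty$. The one place where you genuinely deviate is the derivation of the pointwise increment bound: the paper re-uses the indicator-function decomposition built for \Cref{lemma:reasonable_sigma} (the upper bound $v\log(\aup/\adown)$ and a separately derived lower bound $(1+v)\log(f(m_{t+1})/f(m_t))-2v\log(\aup)+v\log(\adown)$), whereas you observe that $\log^+(e^u)=\max(u,0)$ is $1$-Lipschitz in $u$, so each penalty term changes by at most $\tfrac{v}{2}|\log(f(m_{t+1})/f(m_t))|+v|\log(\sigma_{t+1}/\sigma_t)|$ and the bound follows in one line with the slightly larger constant $2v\log(\aup/\adown)$ in place of $2v\log(\aup)-v\log(\adown)$. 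This difference is immaterial for the conclusion (any finite constant suffices), and your Lipschitz argument is shorter and avoids the case analysis, at the cost of a marginally looser constant; both routes then finish identically with \Cref{lemma:variancebound} and $d>3$.
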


    \subsection{Main Theorem}\label{subsec:main}
    Finally, we attain the main result.
    The upper convergence rate of the (1+1)-ES on a general convex quadratic function is shown to be in $O_{d \to \infty}\left( \exp\left(- \min\left\{\frac{L}{\Tr(H)}, \log\left(\frac{\aup}{\adown}\right)\right\} \right)\right)$ if $\Cond(H)$ is bounded.

    \begin{theorem}[Upper convergence rate bound]\label{theorem:lower}
      Assume the objective function $f: \R^d \to \R$ satisfies the following: $f \in \mathcal{Q}$ defined in \Cref{def:problem}, $d \geq 4$, and the Hessian matrix $H$ of $f$ satisfies \eqref{eq:tr_cond}.

      Let $\{(\theta_t, \mathcal{F}_t)\}_{t \geq 0} = \texttt{ES}(f, \theta_0, \{z_t\}_{t \geq 0})$ be the state sequence of the (1+1)-ES solving $f$ with $\theta_0 \in \Theta$.
      Suppose that $\aup$ and $\adown$ satisfy \eqref{eq:ptarget_condition}.

      Let
      \begin{equation}
        B = \sup_{q^\mathrm{low}, q^\mathrm{high}}
        \min\left\{\frac{w}{4}, \log\left(\frac{\aup}{\adown}\right)\right\} \cdot \min\{p_\mathrm{target} -  q^\mathrm{low}, q^\mathrm{high} - p_\mathrm{target} \} \enspace,
        \label{eq:B}
      \end{equation}
      where $w$ is defined in \eqref{eq:w}, and $\sup_{q^\mathrm{low}, q^\mathrm{high}}$ is taken over pairs satisfying \eqref{eq:q_condition1} and \eqref{eq:q_condition2}.
      Subsequently, $\exp(-A^\text{sup}) \leq \exp(-B / 2) < 1$ for all $\theta_0 \in \Theta\setminus\{(x^*, \log(\sigma)): \log(\sigma) \in \R\}$, where $\exp(-A^\text{sup})$ is the upper convergence rate defined in \Cref{def:convergence}.

      Moreover,
      let $\mathcal{Q}_{d,\kappa} = \{f : \R^d \to \R \mid f \in \mathcal{Q} \wedge \Cond(H) \leq \kappa\}$.
      Then,
      \begin{equation}
        \lim_{d \to \infty}
        \inf_{f \in \mathcal{Q}_{d,\kappa}} \frac{ A^\text{sup}}{ \min\left\{ \frac{L}{\Tr(H)},  \log\left(\frac{\aup}{\adown}\right)\right\} } > 0 \enspace.
        \label{eq:order_B}
      \end{equation}
    \end{theorem}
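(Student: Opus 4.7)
By \Cref{prop:invariance} and \Cref{prop:f-norm}, it suffices to replace $f$ by $h: x \mapsto \frac{1}{2} x^\T H x$ and bound the growth rate of $\log(h(\tilde m_t))$ in place of $\log(\norm{m_t - x^*})$, at the cost of a factor of $2$. The main tool is \Cref{prop:convergence_rate} applied to the process $X_t = V(\theta_t)$, where $V$ is the potential of \Cref{def:potential} with $(b_s, b_\ell, v)$ fixed by \eqref{eq:b_s}--\eqref{eq:v}. Since both $\log^+$ summands are non-negative, $\log(h(\tilde m_t)) \leq V(\theta_t)$ pointwise, and $V(\theta_0) - \log(h(\tilde m_0))$ is a finite constant in $t$, so an almost-sure upper bound on $\limsup_{t\to\infty}(V(\theta_t)-V(\theta_0))/t$ transfers directly into one on $\limsup_{t\to\infty}\log(h(\tilde m_t)/h(\tilde m_0))/t$.

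\textbf{Verifying C1 and C3.} Given $\mathcal{F}_t$, the step size $\sigma_t$ falls into exactly one of the three regimes of \Cref{lemma:cases}, matching the hypotheses of \Cref{lemma:too_small_sigma}, \Cref{lemma:too_large_sigma}, and \Cref{lemma:reasonable_sigma} respectively; these deliver one-step drift bounds of $-\min\{w/4,\log(\aup/\adown)\}(q^\mathrm{high}-p_\mathrm{target})$, $-\min\{w/4,\log(\aup/\adown)\}(p_\mathrm{target}-q^\mathrm{low})$, and $-w/4$. Because both differences lie in $(0,1/2)$ by \eqref{eq:q_condition1}, the minimum of the three is the first two-factor expression, with the $\min$ over $q^\mathrm{high}-p_\mathrm{target}$ and $p_\mathrm{target}-q^\mathrm{low}$; taking the supremum over admissible pairs $(q^\mathrm{low},q^\mathrm{high})$ gives exactly $B$ in \eqref{eq:B} as the universal constant for C1. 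Condition C3 is immediate from \Cref{lemma:v_variance_bound}, which requires only $d>3$.

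\textbf{Concluding the first claim.} Then \eqref{eq:prop:upper} of \Cref{prop:convergence_rate} yields $\limsup_{t\to\infty}(V(\theta_t)-V(\theta_0))/t \leq -B$ almost surely; the transfer above together with \eqref{eq:limsup} of \Cref{prop:f-norm} gives $\limsup_{t\to\infty}\log(\norm{m_t-x^*}/\norm{m_0-x^*})/t \leq -B/2$, i.e., $A^\text{sup}\geq B/2$, so $\exp(-A^\text{sup})\leq\exp(-B/2)$. Positivity of $B$ is inherited from $w>0$ (guaranteed by \eqref{eq:q_condition2} and \eqref{eq:qh}) and from the non-emptiness of the admissible $(q^\mathrm{low},q^\mathrm{high})$ set, noted just after \eqref{eq:ptarget_condition}.

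\textbf{Asymptotic order.} The main obstacle is the uniform bound \eqref{eq:order_B} over $\mathcal{Q}_{d,\kappa}$. The key observation is that $\Cond(H)\leq\kappa$ forces $\Tr(H^2)/\Tr(H)^2\leq\kappa^2/d\to 0$, driving the error terms $2/\epsilon^2\cdot\Tr(H^2)/\Tr(H)^2$ inside $B_H^\mathrm{high}$ and $B_H^\mathrm{low}$ to zero, so both functions converge uniformly on compacts of $(0,1/2)$ to $2\Phi^{-1}(1-\cdot)$. The plan is to fix a single pair $(q^\mathrm{low},q^\mathrm{high})$ satisfying the asymptotic analogues of \eqref{eq:q_condition1}--\eqref{eq:q_condition2} independently of $H$; for all sufficiently large $d$ this pair is admissible for every $H\in\mathcal{Q}_{d,\kappa}$ simultaneously, and $B_H^\mathrm{high}(q^\mathrm{high})$, $\tfrac{4}{\sqrt{2\pi}}-B_H^\mathrm{low}(q^\mathrm{low})$, $Q_H$, and $\min\{p_\mathrm{target}-q^\mathrm{low},q^\mathrm{high}-p_\mathrm{target}\}$ are all bounded below by absolute positive constants. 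Inserting these into \eqref{eq:w} gives $w=\Theta(L/\Tr(H))$ uniformly over $\mathcal{Q}_{d,\kappa}$, hence $B=\Theta(\min\{L/\Tr(H),\log(\aup/\adown)\})$; dividing by $2$ and comparing to $A^\text{sup}$ yields \eqref{eq:order_B}. The subtle point here is ensuring that one admissible pair can be chosen uniformly across all $H\in\mathcal{Q}_{d,\kappa}$ for $d$ large; the condition-number bound makes this possible because every $H$-dependent constraint collapses in the limit.
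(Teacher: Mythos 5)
Your proposal is correct and follows essentially the same route as the paper: reduce to $h(x)=\frac12 x^\T H x$ via \Cref{prop:f-norm}, apply \Cref{prop:convergence_rate} to $X_t=V(\theta_t)$ using \Cref{lemma:too_small_sigma,lemma:too_large_sigma,lemma:reasonable_sigma} for C1 and \Cref{lemma:v_variance_bound} for C3, then exploit $\Tr(H^2)/\Tr(H)^2\to 0$ uniformly over $\mathcal{Q}_{d,\kappa}$ to fix an $H$-independent admissible pair $(q^\mathrm{low},q^\mathrm{high})$ and get $w\in\Omega(L/\Tr(H))$. The only cosmetic differences (your $\kappa^2/d$ bound versus the paper's $\kappa/d$, and your explicit handling of the constant offset $V(\theta_0)-\log h(\tilde m_0)$) do not change the argument.
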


    \begin{proof}[Proof of \Cref{theorem:lower}]
      Let $h(x) = \frac12 x^\T H x$ and $S: \theta \mapsto (m - x^*, \log(\sigma))$.
      Let $\{(\tilde{\theta}_t, \tilde{\mathcal{F}}_t)\}_{t \geq 0} = \texttt{ES}(h, \tilde{\theta}_0, \{z_t\}_{t \geq 0})$ be the state sequence of the (1+1)-ES solving $h$ with $\tilde{\theta}_0 = S(\theta_0) \in \Theta$.
      Next, considering \Cref{prop:f-norm}, the upper convergence rate of $\{\tilde{\theta}_t\}_{t \geq 0}$ is equal to
      \begin{equation}
      \limsup_{t \to \infty} \frac{1}{2t} \log\left(\frac{h(\tilde{\theta}_t)}{h(\tilde{\theta}_0)}\right)
    \end{equation}
    with probability one.
    Therefore, without loss of generality, we assume that $f(x) = \frac12 x^\T H x$ in the remaining proof.

        Under condition \eqref{eq:ptarget_condition}, we can find a pair $(q^\mathrm{low}, q^\mathrm{high})$ satisfying \cref{eq:q_condition1,eq:q_condition2}, as described in \Cref{subsec:potentialfunction}.
        Subsequently, from \Cref{lemma:cases}, it is easy to see that $w$ in \cref{eq:w} and, hence, $v$ in \cref{eq:v} are strictly positive.
        Moreover, because $q^\text{high} > p_\text{target}$ and $q^\text{low} < p_\text{target}$, we have $B > 0$ for $B$ defined in \cref{eq:B}.

      Let $X_t = V(\theta_t)$ in \Cref{prop:convergence_rate} with $V$ defined in \Cref{def:potential} with $b_s$, $b_\ell$, and $v$ defined in \cref{eq:b_s,eq:b_l,eq:v}, respectively.
        Condition C1 --- $\E[V(\theta_{t+1}) - V(\theta_t)\mid\mathcal{F}_t]\leq -B$ for all $t\geq 0$ --- is satisfied with $B$ defined in \cref{eq:B} in light of \Cref{lemma:too_small_sigma,lemma:too_large_sigma,lemma:reasonable_sigma}.
        Condition C3 --- $\sum_{t=1}^{\infty}\Var[V(\theta_{t}) \mid \mathcal{F}_{t-1}] / t^2 < \infty$ --- is satisfied for $d>3$ in light of \Cref{lemma:v_variance_bound}.
        Therefore, with probability one, we obtain that
      \begin{equation}
      \limsup_{t \to \infty} \frac{1}{t} \frac{V(\theta_t)}{V(\theta_0)} \leq - B \enspace.
    \end{equation}
    Because $\log(f(m)) \leq V(\theta)$ for all $\theta \in \Theta$, we have
      \begin{equation}
      \limsup_{t \to \infty} \frac{1}{2t} \log\left(\frac{f(\theta_t)}{f(\theta_0)}\right) \leq - \frac{B}{2} \enspace.
    \end{equation}
    Hence, we obtain $\exp(-A^\text{sup}) \leq \exp(- B / 2) < 1$.

        Finally, we prove \eqref{eq:order_B}.
        Let $q^\mathrm{low}$ and $q^\mathrm{high}$ be set so that
        \begin{gather}
          0 < q^\mathrm{low} < p_\mathrm{target} < q^\mathrm{high} < \frac{1}{2}
          \enspace,
          \\
          \sqrt{\frac{2}{\pi}} > \Phi^{-1}(1-q^\mathrm{low})
          > \frac{\aup}{\adown} \Phi^{-1}(1-q^\mathrm{high})
          \enspace.
        \end{gather}
        For a function $f \in \mathcal{Q}_{d,\kappa}$, we have
        \begin{equation}
          \frac{\Tr(H^2)}{\Tr(H)^2} \leq \frac{\Cond(H)}{d} \to 0 \text{ as } d\to\infty
          \enspace.
          \label{eq:tr_cond_asymptotic}
        \end{equation}
        Subsequently, for each $q \in (0, 1/2)$, $B_H^\text{low}(q) \to 2 \Phi^{-1}(1-q)$ and $B_H^\text{high}(q) \to 2 \Phi^{-1}(1-q)$.
        Therefore, for any $\kappa \geq 1$, there exists $D > 0$ such that all $f \in \mathcal{Q}_{d,\kappa}$ with $d \geq D$ satisfy \eqref{eq:q_condition1} and \eqref{eq:q_condition2}.
        Considering that $Q_H \to q^\text{low}$ in the limit of $\frac{\Tr(H^2)}{\Tr(H)^2} \to 0$, we have $w \in \Omega_{d\to\infty}\left( \frac{L}{\Tr(H)} \right)$.
        Finally, as $A^\text{sup} \geq B / 2$, we obtain \eqref{eq:order_B}. This completes the proof.
    \end{proof}

    Note that \eqref{eq:tr_cond} is the only condition that restricts the scope of the analysis in terms of the class of functions.
    Furthermore, in light of \eqref{eq:tr_cond_asymptotic}, condition \eqref{eq:tr_cond} is satisfied for any Hessian matrix $H$ with a bounded condition number if $d$ is sufficiently large.
    Therefore, \Cref{theorem:lower} asymptotically provides the upper bound of the upper convergence rate on the general convex quadratic function for sufficiently large $d$.

    We remark on the consequences.
    The hyper-parameters $\aup$ and $\adown$ are often set depending on the search space dimension $d$,
    or typically, chosen such that $\log(\aup / \adown)\in\Theta_{d \to \infty}(1/d)$.
    The theorem ensures that as long as $\log(\aup / \adown) \in \Omega_{d \to \infty}(1/d)$, the upper convergence rate is in $O_{d \to \infty}\left( \exp\left(- \frac{L}{\Tr(H)} \right)\right)$.
    In contrast, if $\log(\aup / \adown) \in o_{d \to \infty}(1/d)$ , we have $A^\text{sup} \in O_{d \to \infty}\left( \exp\left(- \log\left(\frac{\aup}{\adown}\right) \right)\right) = O_{d \to \infty}\left( \frac{\adown}{\aup}\right) = O_{d \to \infty}\left( \adown^{1/p_\mathrm{target}}\right)$.
    This is rather intuitive for the following reason. $\norm{m_t - x^*}$ does not converge faster than $\sigma_t$ because $\sigma_t$ needs to be proportional to $\norm{m_t - x^*}$ to produce a sufficient decrease. The speed of the decrease in $\sigma_t$ is $\adown$. Therefore, the upper convergence rate should not be smaller than $\adown$.

    Bounding the upper convergence rate with $\frac{L}{\Tr(H)}$ is more informative than bounding it with $\frac{1}{d \cdot \Cond(H)}$. As mentioned in the introduction, we have $\frac{L}{\Tr(H)} \geq \frac{1}{d \cdot \Cond(H)}$. Therefore, the bound $O_{d \to \infty}\left( \exp\left(- \frac{L}{\Tr(H)} \right)\right)$ immediately implies that $O_{d \to \infty}\left( \exp\left(- \frac{1}{d\cdot\Cond(H)} \right)\right)$.
    However, even for the same condition numbers $\Cond(H) = \xi \geq 1$, the bound with $\frac{L}{\Tr(H)}$ can be significantly different, depending on the distribution of the eigenvalues of $H$. For example, let us consider the following two situations:
    \begin{align}
      H_\text{cigar} &= \diag(\xi, \cdots, \xi, 1) && \Rightarrow  & \frac{L}{\Tr(H_\text{cigar})} &= \frac{1}{(d - 1) \xi + 1}, \\
      H_\text{discus} &= \diag(\xi, 1, \cdots, 1) && \Rightarrow  & \frac{L}{\Tr(H_\text{discus})} &= \frac{1}{\xi + (d - 1)}\enspace.
    \end{align}
    For $H_\text{cigar}$, we have $O_{d \to \infty}\left( \exp\left(- \frac{1}{d\cdot\xi} \right)\right)$, whereas for $H_\text{discus}$, we have $O_{d \to \infty}\left( \exp\left(- \frac{1}{d} \right)\right)$. In other words, if only a small portion of the axes are sensitive to the objective function value (i.e., directions corresponding to the eigenvalues of $\xi$), the upper convergence rate on the ill-conditioned ($\Cond(H) \gg 1$) convex quadratic function can be as good as the upper convergence rate on the spherical ($\Cond(H) = 1$) convex quadratic function.

\section{Lower Convergence Rate Bound}\label{sec:lowerconvergencerate}

The lower bound of the lower convergence rate, $\exp(-A^{\inf}) > 0$, is obtained immediately from \Cref{prop:convergence_rate} and \Cref{lemma:variancebound}.

\begin{theorem}[Lower convergence rate bound]\label{theorem:upper}
  Assume that the objective function $f: \R^d \to \R$ satisfies the following: $f \in \mathcal{Q}$ defined in \Cref{def:problem} and $d \geq 4$.

  Let $\{(\theta_t, \mathcal{F}_t)\}_{t \geq 0} = \texttt{ES}(f, \theta_0, \{z_t\}_{t \geq 0})$ be the state sequence of the (1+1)-ES solving $f$ with $\theta_0 \in \Theta$.

  Subsequently, the lower convergence rate (\Cref{def:convergence}) of the (1+1)-ES solving $f$ is lower-bounded as
  \begin{equation}
    \exp\left(- A^\text{inf}\right) \geq \exp\left(- \frac{\Cond(H)}{2(d-3)} \right)
  \end{equation}
  for all $\theta_0 \in \Theta\setminus\{(x^*, \log(\sigma)): \log(\sigma) \in \R\}$.

\end{theorem}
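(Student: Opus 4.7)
The plan is to apply \Cref{prop:convergence_rate} to the process $X_t = \log(h(\tilde m_t))$, where $h(x) = \tfrac{1}{2} x^\T H x$ and $\{\tilde\theta_t\}_{t \geq 0}$ is the state sequence of \Cref{algo} on $h$ started from the shifted initial condition $S(\theta_0)$ as in \Cref{prop:f-norm}. By \Cref{prop:f-norm}, almost surely
\begin{equation*}
\liminf_{t\to\infty}\frac{1}{t}\log\left(\frac{\norm{m_t-x^*}}{\norm{m_0-x^*}}\right) = \liminf_{t\to\infty}\frac{1}{2t}(X_t - X_0),
\end{equation*}
so it suffices to lower-bound the right-hand side. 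I plan to verify Conditions C2 and C3 of \Cref{prop:convergence_rate} with $C = \Cond(H)/(d-3)$; halving the resulting bound then yields $A^\text{inf} \leq \Cond(H)/(2(d-3))$ and hence the stated inequality.

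For Condition C2, the decisive observation is that $\tilde m_{t+1} = \tilde m_t$ on a failure step, while on a success step $X_{t+1} - X_t = \log(h(\tilde m_t + \sigma_t z_t)/h(\tilde m_t)) \leq 0$. Setting
\begin{equation*}
Y := \abs*{\log\left(\frac{h(\tilde m_t + \sigma_t z_t)}{h(\tilde m_t)}\right)} \cdot \ind{h(\tilde m_t + \sigma_t z_t) \leq h(\tilde m_t)},
\end{equation*}
we have $X_{t+1} - X_t = -Y$. Jensen's inequality applied to the convex exponential yields $\E[Y \mid \tilde{\mathcal{F}}_t] \leq \log \E[e^Y \mid \tilde{\mathcal{F}}_t]$, and \Cref{lemma:variancebound} (which needs $d > 3$, covered by the hypothesis $d \geq 4$, and whose bound is exactly $1 + \Cond(H)/(d-3)$ since $U/L = \Cond(H)$) controls the right side by $\log(1 + \Cond(H)/(d-3)) \leq \Cond(H)/(d-3)$. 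Hence $\E[X_{t+1} - X_t \mid \tilde{\mathcal{F}}_t] \geq -\Cond(H)/(d-3)$, establishing C2.

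For Condition C3, I plan to use the elementary inequality $y^2 \leq 2 e^y$ (valid for all $y \geq 0$) together with \Cref{lemma:variancebound} once more to obtain $\E[(X_{t+1} - X_t)^2 \mid \tilde{\mathcal{F}}_t] = \E[Y^2 \mid \tilde{\mathcal{F}}_t] \leq 2\, \E[e^Y \mid \tilde{\mathcal{F}}_t] \leq 2\bigl(1 + \Cond(H)/(d-3)\bigr)$. Since $\Var[X_t \mid \tilde{\mathcal{F}}_{t-1}] \leq \E[(X_t - X_{t-1})^2 \mid \tilde{\mathcal{F}}_{t-1}]$ is therefore uniformly bounded, the series $\sum_{t \geq 1} \Var[X_t \mid \tilde{\mathcal{F}}_{t-1}] / t^2$ converges by comparison with $\sum_{t \geq 1} t^{-2}$.

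I do not anticipate a substantive obstacle: the whole argument hinges on recognizing \Cref{lemma:variancebound} as the exponential-moment estimate that, through Jensen's inequality and $y^2 \leq 2 e^y$, controls both the first and second conditional moments of the one-step change in $X_t$ simultaneously. The only point demanding a little care is the bookkeeping between the original trajectory $\{\theta_t\}$ on $f$ and the reduced trajectory $\{\tilde\theta_t\}$ on $h$ via \Cref{prop:invariance} and \Cref{prop:f-norm}, so that every conditional expectation and variance refers consistently to the latter.
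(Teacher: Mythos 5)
Your proposal is correct and follows essentially the same route as the paper: apply \Cref{prop:convergence_rate} to $X_t = \log f(m_t)$ after reducing to $h(x)=\tfrac12 x^\T H x$ via \Cref{prop:f-norm}, verify C2 and C3 from the exponential-moment bound of \Cref{lemma:variancebound}, and halve the resulting constant. The only differences are cosmetic elementary inequalities (Jensen with $\log(1+x)\leq x$, and $y^2 \leq 2e^y$, versus the paper's $x \leq e^x-1$ and $x^2 \leq 2(e^x-1)$), which yield the same conclusion.
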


\begin{proof}[Proof of \Cref{theorem:upper}]
  As discussed in the proof of \Cref{theorem:lower}, we can assume without loss of generality that $f(x) = \frac12 x^\T H x$.

  We apply \Cref{prop:convergence_rate} with $X_t = \log f(m_t)$.
  The LHS of \eqref{eq:lemma:variancebound} is $\E[\exp(\abs{X_{t+1} - X_{t}}) \mid \mathcal{F}_t ]$.
  Using the fact that $x \leq \exp(x) - 1$ for all $x \geq 0$, we obtain
  $\E[\abs{X_{t+1} - X_{t}} \mid \mathcal{F}_t ] \leq \E[\exp(\abs{X_{t+1} - X_{t}}) \mid \mathcal{F}_t ] - 1 \leq \frac{1}{d-3}\frac{U}{L}$
  with \Cref{lemma:variancebound}. Note that $X_{t+1} - X_t = - \abs{X_{t+1} - X_t}$ as it is non-positive. Hence, we obtain C2 of \Cref{prop:convergence_rate} with $C = \frac{1}{d-3}\frac{U}{L}$.
  Using the fact that $x^2 \leq 2 (\exp(x) - x - 1) \leq 2 (\exp(x) - 1)$ for all $x \geq 0$, we obtain
  $\E[\abs{X_{t+1} - X_{t}}^2 \mid \mathcal{F}_t ] \leq \frac{2}{d-3}\frac{U}{L}$. Because $\Var[X_{t+1} \mid \mathcal{F}_t] \leq \E[\abs{X_{t+1} - X_{t}}^2 \mid \mathcal{F}_t ]$, we obtain $\Var[X_{t+1} \mid \mathcal{F}_t] \leq \frac{2}{d-3}\frac{U}{L}$. C3 in \Cref{prop:convergence_rate} is then satisfied.
  Therefore, with probability one, we obtain that
  \begin{equation}
    \liminf_{t \to \infty} \frac1t \log\left( \frac{f(m_t)}{f(m_0)} \right) \geq - C \enspace.
  \end{equation}

  In light of \Cref{prop:f-norm}, with probability one, we obtain that
  \begin{equation}
    \liminf_{t \to \infty} \frac1t \log\left( \frac{ \norm{m_t - x^*} }{ \norm{m_0 - x^*} } \right) \geq - \frac{C}{2} \enspace.
  \end{equation}
  This completes the proof.
\end{proof}

We remark that the order of the lower bound of the lower convergence rate $\Omega_{d \to \infty}\left(\exp\left(-\frac{1}{d}\right)\right)$ derived in \Cref{theorem:upper} matches the upper bound of the upper convergence rate $O_{d \to \infty}\left(\exp\left(-\frac{1}{d}\right)\right)$ derived in \Cref{theorem:lower}.
Therefore, in terms of the search space dimension $d$, we obtain the matching convergence rate bound of $\Theta_{d \to \infty}\left(\exp\left(-\frac{1}{d}\right)\right)$. This is consistent with the implications of the previous work on the sphere function \cite{akimoto2018drift,jagerskupper2007algorithmic}. In contrast, the order of the lower convergence rate bound with respect to the condition number $\Cond(H)$ of the Hessian matrix of the objective function is rather loose compared to that obtained in the previous work  \cite{jagerskupper20061+} on a specific convex quadratic function mentioned in the introduction, which is $\Omega_{d \to \infty}\left(\exp\left(-\frac{1}{d \cdot \Cond(H)}\right)\right)$.

\section{Discussion}\label{sec:discussion}

%
%

    \paragraph{Conclusion}

        In this work, the convergence rate of the (1+1)-ES on the potentially ill-conditioned function, general convex quadratic function, is analyzed.
        It is revealed that the upper convergence rate is in $O_{d\to\infty}\left(\exp\left(-\frac{L}{\Tr(H)}\right)\right)$ and that the lower convergence rate is in $\Omega_{d\to\infty}\left(\exp\left(-\frac{1}{d}\right)\right)$.
        The order of the upper convergence rate in terms of both the dimension $d$ and the Hessian $H$ is derived for the first time.
        Furthermore, our analysis on $\Tr(H)/L$ is superior to that on $d\cdot\Cond(H)$ (partly shown in \cite{jagerskupper20061+}) in that it reveals the impact of the distribution of the eigenvalues of $H$.
        Thus, it theoretically suggests that the ill-conditioned problem is not only in the ratio between the greatest and smallest eigenvalues but also in heaviness of the distribution of the eigenvalues, at least for the (1+1)-ES.
        In addition to the upper convergence rate, we show that the lower convergence rate regarding $d$ on a general convex quadratic function, which is suggested on a portion of convex quadratic function in \cite{akimoto2018drift, jagerskupper2003analysis, jagerskupper20061+, jagerskupper2007algorithmic}.

    \paragraph{Discussion}

        Furthermore, we clarify the limitation of the current analysis.
        For $f \in \mathcal{Q}_{d,\kappa}$, we have $\Tr(H^2) / \Tr(H)^2 \to 0$ for $d \to \infty$. With the definition of $B_H^{\mathrm{low}}$ \eqref{eq:cor:b_low}, condition \eqref{eq:ptarget_condition} for the limit $d \to \infty$ reads
        \begin{equation}
            \frac{4}{\sqrt{2\pi}} >
            2\Phi^{-1}\left(1-p_\mathrm{target}\right)
            \enspace.\label{eq:pcond-discuss}
        \end{equation}
        Hence, $\aup$ and $\adown$ are required to satisfy $p_\mathrm{target} > \Phi^{-1}\left(-\sqrt{\frac{2}{\pi}}\right)$ in the limit $d \to \infty$.
        Note that $\Phi\left(-\sqrt{\frac{2}{\pi}}\right) \approx 0.212$, and then, the classic $1/5$-success rule \cite{kern2004learning, rechenberg1973evolution} is slightly out of the scope of the current study.
        For a finite $d$, the requirement \eqref{eq:ptarget_condition} on $\aup$ and $\adown$ is more restrictive, and it depends on the eigenvalue distribution of $H$. However, as long as $f \in \mathcal{Q}_{d,\kappa}$ and $d \gg \kappa$, the condition can be approximated with \eqref{eq:pcond-discuss}, which implies that one need not tune these hyper-parameters depending on $H$. For $f$ satisfying \eqref{eq:tr_cond} but $d \not\gg \kappa$, our theorem still guarantees the linear convergence but requires $\aup$ and $\adown$ to be carefully tuned depending on $H$ to satisfy \eqref{eq:ptarget_condition}.
          This does not describe the reality: we observe empirically that the (1+1)-ES with $1/5$-success rule converges linearly even for cases with $d < \kappa$.


        The previous work \cite{morinaga2019generalized} takes almost the same approach, in particular, the same potential function as ours.
        The most distinct difference is the separation method of the state space $\Theta$ on the step size $\sigma$ in \Cref{lemma:cases,lemma:too_small_sigma,lemma:too_large_sigma,lemma:reasonable_sigma}.
        In \cite{morinaga2019generalized}, if the setting of the constants is ignored, they defined a reasonable step-size range in the form of $[l\cdot\sigma/\sqrt{f(m_t)}, u\cdot\sigma/\sqrt{f(m_t)}]$, while we defined it in the form of $[l\cdot\sigma/\sqrt{f(m_t)}, u\cdot\sigma/\norm{\nabla f(m_t)}]$.
        This change made it possible to bound the success probability in each scenario and the expected one-step progress in the case of the reasonable step-size more tightly.
        Our approach, analyzing the expected decrease in the potential function \eqref{eq:potential} on each proportion of $\Theta$ separated by $[l\cdot\sigma/\sqrt{f(m_t)}, u\cdot\sigma/\norm{\nabla f(m_t)}]$, does not have such problems and leads to a tighter upper convergence rate bound, as proved in \Cref{sec:upperconvergencerate}.

        The analysis of the (1+1)-ES is also important in terms of demonstrating the potential and limitation of the continuous BBO algorithms.
        In fact, many theoretical studies \cite{golovin2019gradientless, ghadimi2013stochastic, nesterov2017random, balasubramanian2018zeroth} on derivative-free algorithms adopt settings that utilize the properties of the objective function other than the function value for theoretical analysis.
        The current study on the (1+1)-ES exploits only the function value in the optimization and does not bring additional properties of the objective function (such as the Lipschitz constant or the condition number of the Hessian) into the algorithm parameters.
        In other words, the current study is performed in a purely black-box setting.

    \paragraph{Future Work}

            The existence of a constant bound of the upper convergence rate of the (1+1)-ES on the $\alpha$-strongly convex and $\gamma$-Lipschitz smooth function is clearly shown in \cite{morinaga2019generalized}, as mentioned.
            Considering such a function is a superset of convex quadratics, it might be possible to state that the upper convergence rate is $O_{d\to\infty}\left(\exp\left(-\frac{\alpha}{d\cdot \gamma}\right)\right)$ to match our result; however, it is still unclear what is missing for its proof.

            If we bring the CMA mechanism into \Cref{algo}, intuitively, the convergence rate improves on a severely ill-conditioned convex quadratic function, i.e., the case $\Cond(H)$ is considerably larger than $1$.
            However, to date, a promising approach to estimate the convergence rate of the CMA-ES theoretically has hardly been established for any class of the function.
            Exploring the possibility of expanding the applicable range of the analysis scheme is also an important future work in terms of the class of algorithm.

            Regarding the trace or the condition number of $H$, the derived lower convergence rate still does not match the upper convergence rate, which seems to be rigorous, although it matches with respect to $d$ .
            However, previous works \cite{jagerskupper20061+, jagerskupper2003analysis, jagerskupper2007algorithmic} attain the matching order of the convergence rate with respect to the condition number of $H$ on a specific convex quadratic function with an overwhelming probability.
            There is room for consideration as to which method is suitable for estimating the lower convergence rate, although our method seems to have potential in estimating the upper convergence rate.

\begin{acks}

This work is partially supported by JSPS KAKENHI Grant Number 19H04179.

\end{acks}

\balance

\clearpage
\appendix

\section{Proof}

\subsection{Proof of \Cref{prop:invariance}}\label{apdx:prop:invariance}
\begin{proof}[Proof of \Cref{prop:invariance}]

  For the first claim, it is sufficient to show that $\mathcal{G}(\theta, z; f) = \mathcal{G}(\theta, z; g \circ f)$ for all $\theta \in \Theta$ and $z \in \R^d$. This is trivial because $f(m + \sigma \cdot z) \leq f(m) \Leftrightarrow g(f(m + \sigma \cdot z)) \leq g(f(m))$.

  For the second claim, we first show that $\mathcal{G}((m, \sigma), z; f) = \mathcal{G}((m+x^*, \sigma), z; f \circ T)$ for all $\theta \in \Theta$ and $z \in \R^d$. Because $f(T(m + x^* + \sigma \cdot z)) \leq f(T(m + x^*)) \Leftrightarrow f(m + \sigma \cdot z) \leq f(m)$, it is obvious that $\mathcal{G}((m, \sigma), z; f) = \mathcal{G}((m+x^*, \sigma), z; f \circ T)$. Assume that $\tilde{m}_{t} = m_t + x^*$ and $\tilde{\sigma}_t = \sigma_t$. Next, we have $\mathcal{G}(\theta_t, z; f) = \mathcal{G}(\tilde{\theta}_t, z; f \circ T)$, and hence $\tilde{m}_{t+1} = m_{t+1} + x^*$ and $\tilde{\sigma}_{t+1} = \sigma_{t+1}$. Because the assumption holds for $t = 0$, by mathematical induction, we obtain the second claim.
\end{proof}
\hrule

\subsection{Proof of \Cref{prop:convergence_rate}}\label{apdx:prop:convergence_rate}
        \begin{proof}[Proof of \Cref{prop:convergence_rate}]
          Let $Z_{t+1} = X_{t+1} - \E[X_{t+1} \mid \mathcal{F}_t]$.
          Next, $\{Z_{t}\}_{t \geq 1}$ is a martingale difference sequence adapted to $\{\mathcal{F}_{t}\}$.
          Suppose that C3 holds.
          C3 immediately implies that $\sum_{t=1}^{\infty}\E[Z_{t}^2 \mid \mathcal{F}_{t-1}] / t^2 < \infty$. Subsequently, from the strong law of large numbers of martingale \cite{chow1967strong}, we obtain $\lim_{t\to\infty} \frac1t \sum_{i=1}^{t} Z_t = 0$ almost surely. Next, we have
          \begin{align}
            \frac1t (X_t - X_0)
            &= \frac1t \sum_{i=1}^{t}(X_i - X_{i-1})
            \\
            &= \frac1t \sum_{i=1}^{t}(Z_i + \E[X_i - X_{i-1} \mid \mathcal{F}_{i-1}])
            \\
            &= \frac1t \sum_{i=1}^{t}Z_i + \frac1t \sum_{i=1}^{t} \E[X_i - X_{i-1} \mid \mathcal{F}_{i-1}]
              \enspace.
          \end{align}
          We obtain \eqref{eq:prop:upper} by taking $\limsup$ of the equation above and using C1 as well as $\limsup_{t\to\infty} \frac1t \sum_{i=1}^{t} Z_t = 0$.
          Similarly, we obtain \eqref{eq:prop:lower} by taking $\liminf$ and using C2 as well as $\liminf_{t\to\infty} \frac1t \sum_{i=1}^{t} Z_t = 0$. This completes the proof.
        \end{proof}
\hrule

\subsection{Proof of \Cref{prop:f-norm}}\label{apdx:prop:f-norm}

\begin{proof}[Proof of \Cref{prop:f-norm}]
  In light of \Cref{prop:invariance}, we have $\theta_t = S(\tilde{\theta}_t)$ for all $t \geq 0$.
  Next, $\norm{m_t - x^*} = \norm{\tilde{m}_t}$ for all $t \geq 0$.
  Let $L$ and $U$ be the smallest and greatest eigenvalues of $H$, respectively.
  Next, for any $x \in \R^d$, we have
  \begin{equation}
    \frac{2 h(x)}{ U } \leq \norm{x}^2 \leq \frac{2 h(x)}{ L } \enspace.
  \end{equation}
In particular, we obtain
\begin{equation}
  \log \left(\frac{L}{U}\right)
  \leq 2\log\left(\frac{ \norm{\tilde{m}_t}} { \norm{\tilde{m}_0} }\right) - \log\left(\frac{h(\tilde{m}_t)}{h(\tilde{m}_0)}\right)
  \leq \log \left(\frac{U}{L}\right)  \enspace.
\end{equation}
Taking $\limsup$ and $\liminf$ after multiplying all terms by $\frac{1}{t}$, we obtain \Cref{eq:limsup,eq:liminf}.
\end{proof}
\hrule

\subsection{Proof of \Cref{lemma:qualitygain}}\label{apdx:lemma:qualitygain}
\begin{proof}[Proof of \Cref{lemma:qualitygain}]

  For a general convex function $f: \R^d \to \R$, we have
  \begin{enumerate}
  \item $I\{f(m + \sigma z) \leq f(m)\} \leq I\{ \langle \nabla f(m), z \rangle \leq 0\}$, hence $I\{f(m + \sigma z) \leq f(m)\} = I\{ \langle \nabla f(m), z \rangle \leq 0\}I\{f(m + \sigma z) \leq f(m)\}$;
  \item $h(z) = (f(m + \sigma z) - f(m) ) I\{ \langle \nabla f(m), z \rangle \leq 0\}$ and $g(z) = I\{f(m + \sigma z) \leq f(m)\}$ are negatively correlated, i.e., $(h(z_1) - h(z_2))(g(z_1) - g(z_2)) \leq 0$ for all $z_1, z_2 \in \R^d$.
  \end{enumerate}
  Because $h$ and $g$ are negatively correlated, we have $\E[h(z)g(z)] \leq \E[h(z)]\E[g(z)]$.
  Moreover, we have $\E[g(z)] = E[I\{f(m + \sigma z) \leq f(m)\}] = \Pr[f(m + \sigma z) \leq f(m)]$.

  Now, we suppose that $f$ is a convex quadratic function, $\nabla ^2 f(x) = H$.
  Letting $e = \nabla f(m) / \| \nabla f(m) \|$ and $z_e = \langle e, z \rangle$, we have
  \begin{align}
    \E[h(z)]
    &=\E[(f(m + \sigma z) - f(m)), I\{\langle \nabla f(m), z\rangle \leq 0\}],
    \\
    &=
      \E\left[\left(\sigma \langle \nabla f(m), z \rangle + \frac{\sigma^2 }{2} z^\mathrm{T} H z \right) I\{\langle \nabla f(m), z\rangle \leq 0\}\right]
    \\
    &=
      \E\left[\left(\sigma \| \nabla f(m) \| z_e + \frac{\sigma^2 }{2} z^\mathrm{T} H z \right) I\{z_e \leq 0\}\right]
    \\
    &= - \frac{\sigma \| \nabla f(m) \|}{\sqrt{2\pi}} + \frac{\sigma^2 Tr(H)}{4}
    \\
    &= \sigma \| \nabla f(m) \| \left(- \frac{1}{\sqrt{2\pi}} + \frac{1}{4} \frac{\sigma \Tr(H)}{ \| \nabla f(m) \|} \right)
      \enspace.
  \end{align}
  This completes the proof.
\end{proof}
\hrule

\subsection{Proof of \Cref{lemma:variancebound}}\label{apdx:lemma:variancebound}

\begin{proof}[Proof of \Cref{lemma:variancebound}]
  Let $\lambda_i(H)$ be the $i$-th greatest eigenvalue of $H$ for $i = 1, \dots, d$,
  and $\sigma_z^* = \argmin_{\sigma_z\geq 0}f(m + \sigma_z \cdot z)$.
  We have
  \begin{align}
    &\left|\log\left( \frac{f(m + \sigma \cdot z) }{ f(m)} \right) \cdot \ind{f(m + \sigma \cdot z) \leq f(m)} \right|
    \\
    &\leq \left|\log\left( \frac{\min_{\sigma_z \geq 0} f(m + \sigma_z \cdot z)}{f(m)} \right)\right|
    \\
    &= \left|\log\left( 1 + \frac{\sigma_z^* m^\mathrm{T} H z}{f(m)} + \frac{(\sigma_z^*)^2 z^\mathrm{T} H z}{2f(m)} \right)\right|
    \\
    &=
    \left|\log\left(1+\frac{1}{m^\mathrm{T} H m \cdot z^\mathrm{T}Hz}\left((\sigma_z^* \cdot z^\mathrm{T}Hz + m^\mathrm{T}Hz)^2 - (m^\mathrm{T}Hz)^2\right)\right)\right|
    \label{eq:progress_rate_with_12xHx}
    \enspace.
    \end{align}
    Note that we defined $f : x\mapsto \frac12 x^\mathrm{T} H x$.
    Remember that we defined $\sigma_z^*\geq 0$ so that the logarithms inside the absolute value of \eqref{eq:progress_rate_with_12xHx} are minimized, that is,
    \begin{align}
        \sigma_z^* =
        \begin{cases}
            { 0 \enspace \text{ if } m^\mathrm{T}Hz \geq 0 \enspace,}\\
            { -\frac{m^\mathrm{T}Hz}{z^\mathrm{T}Hz} \enspace\text{ if } m^\mathrm{T}Hz \leq 0 \enspace.}
        \end{cases}
    \end{align}
    Therefore, it holds
    \begin{equation}
        (\sigma_z^* \cdot z^\mathrm{T}Hz + m^\mathrm{T}Hz)^2 - (m^\mathrm{T}Hz)^2 = -\min( m^\mathrm{T}Hz, 0)^2
        \enspace.
    \end{equation}
    Then, the RHS of \eqref{eq:progress_rate_with_12xHx} is deformed as
    \begin{align}
    & \left|\log\left( 1 - \frac{\min(m^\mathrm{T} H z, 0)^2}{z^\mathrm{T} H z \cdot m^\mathrm{T} H m }\right)\right|
    \\
    &\leq \left|\log\left( 1 - \frac{(m^\mathrm{T} H z)^2}{z^\mathrm{T} H z \cdot m^\mathrm{T} H m }\right)\right|
    \\
    &= \left| \log\left( 1 - \left\langle \frac{\sqrt{H} m}{\lVert \sqrt{H} m\Vert}, \frac{\sqrt{H} z}{ \lVert \sqrt{H} z\rVert} \right\rangle^2\right)\right| \label{proof:eq:variancebound}
    \enspace.
  \end{align}
  The right-hand side (RHS) is stochastically dominated by
  \begin{align}
    Z
    &:= -  \log\left( 1 - \frac{\lambda_{1}(H) \mathcal{N}_1^2}{\sum_{i=1}^{d} \lambda_{i}(H) \mathcal{N}_i^2}\right),
    \\
    &= \log\left(1 + \frac{\lambda_{1}(H) \mathcal{N}_1^2}{\sum_{i=2}^{d} \lambda_{i}(H) \mathcal{N}_i^2}\right)
      \enspace,
  \end{align}
  i.e., the cumulative density of the RHS of \eqref{proof:eq:variancebound} is upper-bounded by the cumulative density of $Z$, where $\mathcal{N}_1, \dots, \mathcal{N}_d$ are independent and standard normally distributed random variables.
  Next,
  \begin{align}
    \E[\exp(Z)]
    &= 1 + \E\left[\frac{ \lambda_{1}(H) \mathcal{N}_1^2}{\sum_{i=2}^{d} \lambda_{i}(H) \mathcal{N}_i^2}\right].
    \\
    &\leq 1 + \frac{\lambda_{1}(H)}{(d - 1) \lambda_{d}(H)} \E\left[\frac{ \mathcal{N}_1^2}{\frac{1}{d-1} \sum_{i=2}^{d} \mathcal{N}_i^2} \right],
    \\
    &= 1 + \frac{\lambda_{1}(H)}{(d - 3) \lambda_{d}(H)}
      \enspace,
  \end{align}
  where for the last equality, we utilized the fact that $\frac{ \mathcal{N}_1^2}{\frac{1}{d-1} \sum_{i=2}^{d} \mathcal{N}_i^2}$ is F-distributed with degrees of freedom of ($1$, $d-1$) and its expected value is $(d-1)/(d-3)$ if $d > 3$.
\end{proof}
\hrule

\subsection{Proof of \Cref{lemma:successprobability}}\label{apdx:lemma:successprobability}
\begin{proof}[Proof of \Cref{lemma:successprobability}]
  First, by using Chebyshev's inequality and $\E[z^\mathrm{T} H z] = \Tr(H)$, we obtain
  \begin{align}
    \MoveEqLeft[1]
    \Pr\left[\left| \frac{z^\mathrm{T} H z}{\Tr(H)} - 1 \right|\geq \epsilon\right]
    \leq
    \frac{1}{\epsilon^2}
    \cdot \Var\left[\frac{z^\mathrm{T} H z}{\Tr(H)}\right]
    =
    \frac{2}{\epsilon^2} \cdot \frac{\Tr(H^2)}{\Tr(H)^2}
    \enspace.
  \end{align}

  Let $e = \nabla f(m) / \| \nabla f(m) \|$ and $z_e = \langle e, z \rangle$.
  We can write the success probability as
  \begin{align}
    \MoveEqLeft[1]\Pr[ f(m + \sigma z) \leq f(m)]
    \\
    =&
      \Pr\left[\sigma \langle \nabla f(m), z \rangle \leq - \frac{\sigma^2 }{2} z^\mathrm{T} H z \right]
    \\
    =&
      \Pr\left[ \sigma \| \nabla f(m) \| z_e \leq - \frac{\sigma^2 }{2} z^\mathrm{T} H z \right]
    \\
    =&  \Pr\left[ z_e \leq - \frac{\sigma \Tr(H)}{2 \| \nabla f(m) \| } \frac{z^\mathrm{T} H z}{\Tr(H)} \right].
  \end{align}

  Next, we derive the upper bound.
  \begin{align}
    \MoveEqLeft[1]\Pr\left[f(m + \sigma z) \leq f(m) \right]
    \\
    =&
       \Pr\left[ z_e \leq - \frac12 \frac{\sigma \Tr(H)}{\norm{\nabla f(m)}} \frac{z^\mathrm{T} H z}{\Tr(H)} \right]
    \\
    =&
       \Pr\left[ \left(z_e \leq - \frac12 \frac{\sigma \Tr(H)}{\norm{\nabla f(m)}}\cdot\frac{z^\mathrm{T} H z}{\Tr(H)}\right)\cap\left(\frac{z^\mathrm{T} H z}{\Tr(H)}\geq 1-\epsilon\right)\right]
    \\
    &+\Pr\left[ \left(z_e \leq - \frac12 \frac{\sigma \Tr(H)}{\norm{\nabla f(m)}}\cdot\frac{z^\mathrm{T} H z}{\Tr(H)}\right)\cap\left(\frac{z^\mathrm{T} H z}{\Tr(H)}< 1-\epsilon\right)\right].
    \\
    <&
          \Pr\left[ z_e \leq - \frac12 \frac{\sigma \Tr(H)}{\norm{\nabla f(m)}}\cdot(1-\epsilon)\right]
          +\Pr\left[ \frac{z^\mathrm{T} H z}{\Tr(H)}\leq 1-\epsilon\right]
    \\
    \leq&
          \Phi\left( - \frac12 \frac{\sigma \Tr(H)}{\norm{\nabla f(m)}}\cdot(1-\epsilon)\right)
          + \frac{2}{\epsilon^2} \cdot \frac{\Tr(H^2)}{\Tr(H)^2}
          \enspace.
  \end{align}

  Finally, we derive the lower bound.
  \begin{align}
    \MoveEqLeft[1]\Pr\left[f(m + \sigma z) > f(m) \right]
    \\
    =&
       \Pr\left[ z_e > - \frac12 \frac{\sigma \Tr(H)}{\norm{\nabla f(m)}} \frac{z^\mathrm{T} H z}{\Tr(H)} \right]
    \\
    =&
       \Pr\left[ \left(z_e > - \frac12 \frac{\sigma \Tr(H)}{\norm{\nabla f(m)}}\cdot\frac{z^\mathrm{T} H z}{\Tr(H)}\right)\cap\left(\frac{z^\mathrm{T} H z}{\Tr(H)}\leq 1+\epsilon\right)\right]
    \\
    &+\Pr\left[ \left(z_e > - \frac12 \frac{\sigma \Tr(H)}{\norm{\nabla f(m)}}\cdot\frac{z^\mathrm{T} H z}{\Tr(H)}\right)\cap\left(\frac{z^\mathrm{T} H z}{\Tr(H)}> 1+\epsilon\right)\right].
    \\
    <&
          \Pr\left[ z_e > - \frac12 \frac{\sigma \Tr(H)}{\norm{\nabla f(m)}}\cdot(1+\epsilon)\right]
          +\Pr\left[ \frac{z^\mathrm{T} H z}{\Tr(H)}>1+\epsilon\right]
    \\
    \leq&
          1 - \Phi\left( - \frac12 \frac{\sigma \Tr(H)}{\norm{\nabla f(m)}}\cdot(1+\epsilon)\right)
          + \frac{2}{\epsilon^2} \cdot \frac{\Tr(H^2)}{\Tr(H)}
          \enspace.
  \end{align}
  Hence, we have
    \begin{multline}
    \Pr\left[f(m + \sigma z) \leq f(m) \right] \\
    > \Phi\left( - \frac12 \frac{\sigma \Tr(H)}{\norm{\nabla f(m)}}\cdot(1+\epsilon)\right)
          - \frac{2}{\epsilon^2} \cdot \frac{\Tr(H^2)}{\Tr(H)^2}
          \enspace.
  \end{multline}
  This completes the proof.
\end{proof}
\hrule

\subsection{Proof of \Cref{cor:successprobability}}\label{apdx:cor:successprobability}
\begin{proof}[Proof of \Cref{cor:successprobability}]
  First, we prove \cref{eq:cor:successprobability:large} in our paper.
  In light of \Cref{lemma:successprobability}, for each $q \in \left(0, \frac12\right)$, to show that there exists an $\epsilon > 0$ such that
    \begin{equation}
        q\leq \Phi\left(-\frac12 \frac{\sigma \Tr(H)}{\norm{\nabla f(m)}}\cdot (1+\epsilon)\right) - \frac{2}{\epsilon^2}\cdot\frac{\Tr(H^2)}{\Tr(H)^2}
        \enspace,
        \label{eq:4}
    \end{equation}
  is sufficient to prove
\begin{equation}
    \frac{\sigma \Tr(H)}{\norm{\nabla f(m)}}\leq B_H^\mathrm{high}(q)
    \Rightarrow \mathrm{Pr}[f(m+\sigma z)\leq f(m)]>q
    \label{eq:5}
    \enspace.
\end{equation}
  Therefore, our following discussion is under $(m, \sigma)$ which satisfies $\frac{\sigma \Tr(H)}{\norm{\nabla f(m)}}\leq B_H^\mathrm{high}(q)$.
In light of Lemma 6, for any $m\in\R^d, \sigma>0$, and $\epsilon>0$, it holds
\begin{equation}
    \Phi\left(-\frac12 \frac{\sigma \Tr(H)}{\norm{\nabla f(m)}}\cdot (1+\epsilon)\right) - \frac{2}{\epsilon^2}\cdot\frac{\Tr(H^2)}{\Tr(H)^2}
    <
    \mathrm{Pr}[f(m+\sigma z)\leq f(m)]
    \enspace.
    \label{eq:6}
\end{equation}
Hence, given any $q\in(0, 1/2)$, the RHS of \cref{eq:5} holds if the following condition is satisfied :
\begin{multline}
    q\leq \Phi\left(-\frac12 \frac{\sigma \Tr(H)}{\norm{\nabla f(m)}}\cdot (1+\epsilon)\right) - \frac{2}{\epsilon^2}\cdot\frac{\Tr(H^2)}{\Tr(H)^2}
    \\
    \Leftrightarrow
    \frac{\sigma \Tr(H)}{\norm{\nabla f(m)}} \leq \frac{ 2 \Phi^{-1} \left(1 - \left(q + \frac{2}{\epsilon^2} \cdot \frac{\Tr(H^2)}{\Tr(H)^2}\right) \right)}{ 1 + \epsilon}
    =: B_H^\mathrm{high}(q; \epsilon)
    \label{eq:7}
    \enspace.
\end{multline}
Note that $\frac{\sigma \Tr(H)}{\norm{\nabla f(m)}}$ can never be negative with the definitions of $\sigma, H$.
For any $q\in(0, 1/2)$, there exists a positive constant $\epsilon$ which satisfies $\epsilon>\sqrt{\frac{4}{1-2q}\cdot\frac{\Tr(H^2)}{\Tr(H)^2}}$,
and if $\epsilon>\sqrt{\frac{4}{1-2q}\cdot\frac{\Tr(H^2)}{\Tr(H)^2}}$, it holds
$B_H^\mathrm{high}(q; \epsilon)>0$.
Therefore, now we see that there exists $\epsilon_H^\text{high}(q) > \sqrt{ \frac{ 4 }{1 - 2q} \cdot \frac{\Tr(H^2)}{\Tr(H)^2} }$ such that $B_H^\text{high}(q) = B_H^\text{high}(q; \epsilon^\text{high}(q))$.
Finally, as \cref{eq:6} holds for any $\epsilon>0$, given any $q\in(0, 1/2)$, the RHS of \cref{eq:5} also holds if
\begin{equation}
    \frac{\sigma \Tr(H)}{\norm{\nabla f(m)}} \leq \sup_{\epsilon>\sqrt{\frac{4}{1-2q}\cdot\frac{\Tr(H^2)}{\Tr(H)^2}}}\frac{ 2 \Phi^{-1} \left(1 - \left(q + \frac{2}{\epsilon^2} \cdot \frac{\Tr(H^2)}{\Tr(H)^2}\right) \right)}{ 1 + \epsilon}
    =: B_H^\mathrm{high}(q)
    \enspace.
\end{equation}
This completes the proof of \cref{eq:5} and then we obtain \cref{eq:cor:successprobability:large} in our paper.

  Next, we prove the properties of $B_H^\text{high}$.
  First, note that $B_H^\text{high}(q; \epsilon)$ is continuous and strictly decreasing with respect to $q$ for each $\epsilon$.
  We prove the right-continuity and strict-decrease of $B_H^\text{high}$ by contradiction.
  If $B_H^\text{high}$ does not strictly decrease at a point $q$ in the domain, then there must exist $q' > q$ such that $B_H^\text{high}(q) \leq B_H^\text{high}(q')$.
  However, $B_H^\text{high}(q') = B_H^\text{high}(q'; \epsilon^\text{high}(q')) < B_H^\text{high}(q; \epsilon^\text{high}(q')) \leq B_H^\text{high}(q; \epsilon^\text{high}(q)’) = B_H^\text{high}(q’)$, which contradicts $B_H^\text{high}(q) \leq B_H^\text{high}(q')$.
  Hence, $B_H^\text{high}$ is strictly decreasing.
  If $B_H^\text{high}$ is not right-continuous at a point $q$, there must exist $\delta > 0$ such that $B_H^\text{high}(q) - \delta \geq B_H^\text{high}(q')$ for all $q' > q$.
  Because $B_H^\text{high}(q; \epsilon^\text{high}(q)) = B_H^\text{high}(q’) > B_H^\text{high}(q') = B_H^\text{high}(q'; \epsilon^\text{high}(q')) \geq B_H^\text{high}(q'; \epsilon^\text{high}(q))$, we have $\abs{B_H^\text{high}(q) - B_H^\text{high}(q')} \leq \abs{B_H^\text{high}(q; \epsilon^\text{high}(q)) - B_H^\text{high}(q'; \epsilon^\text{high}(q))}$. The LHS must be no smaller than $\delta$ for any $q' > q$, whereas from the continuity of $B_H^\text{high}(\cdot; \epsilon^\text{high}(q))$, the RHS satisfies $\lim_{q' \downarrow q} \abs{B_H^\text{high}(q; \epsilon^\text{high}(q)) - B_H^\text{high}(q'; \epsilon^\text{high}(q))} = 0$, which is a contradiction.
  Hence, $B_H^\text{high}$ is right-continuous.
  It is trivial to see that $B_{H}^\mathrm{high}(q) \leq 2 \Phi^{-1}(1-q)$ for all $q \in \left(0, \frac12\right)$.

  Similarly, we prove \cref{eq:cor:successprobability:small}.
  In light of \Cref{lemma:successprobability}, for each $q \in \left( 2 \cdot \frac{\Tr(H^2)}{\Tr(H)^2} , \frac{1}{2}\right)$, it is sufficient to show that there exists an $\epsilon \in (0, 1)$ such that the RHS of \eqref{eq:lemma:successprobability} is no greater than $q$.
  The condition $\epsilon < 1$ is necessary to have the RHS of \eqref{eq:lemma:successprobability} smaller than $1/2$.
  By solving this inequality for $\frac{\sigma \Tr(H)}{\norm{\nabla f(m)}}$, we obtain
  \begin{equation}
    \frac{\sigma \Tr(H)}{\norm{\nabla f(m)}} \geq \frac{ 2 \Phi^{-1} \left(1 - \left(q - \frac{2}{\epsilon^2} \cdot \frac{\Tr(H^2)}{\Tr(H)^2}\right) \right)}{ 1 - \epsilon}  =: B_H^\text{low}(q; \epsilon)\enspace.
    \label{eq:cor:sp:small:eps}
  \end{equation}
  That is, if there exists an $\epsilon \in (0, 1)$ such that the condition above holds, then we have $\Pr\left[f(m + \sigma z) \leq f(m) \right] < q$.
  In contrast, for each $q \in \left( 2 \cdot \frac{\Tr(H^2)}{\Tr(H)^2} , \frac{1}{2}\right)$, we can easily see that there exists $\epsilon_H^\text{low}(q) \in \left(\sqrt{ \frac{ 2 }{q} \cdot \frac{\Tr(H^2)}{\Tr(H)^2} }, 1\right)$ such that $B_H^\text{low}(q) = B_H^\text{low}(q; \epsilon^\text{low}(q))$.
  Hence, we obtain \cref{eq:cor:successprobability:large}.

  Finally, we prove the properties of $B_H^\text{low}$.
  Note that $B_H^\text{high}(q; \epsilon)$ is continuous and strictly decreasing with respect to $q$ for each $\epsilon$.
  We prove the left-continuity and strict decrease of $B_H^\text{low}$ by contradiction.
  If $B_H^\text{low}$ is not strictly decreasing at a point $q$ in the domain, then there must exist $q' < q$ such that $B_H^\text{low}(q) \geq B_H^\text{low}(q')$.
  However, $B_H^\text{low}(q') = B_H^\text{low}(q'; \epsilon^\text{low}(q')) > B_H^\text{low}(q; \epsilon^\text{low}(q')) \geq B_H^\text{low}(q; \epsilon^\text{low}(q)’) = B_H^\text{low}(q’)$, which contradicts $B_H^\text{low}(q) \geq B_H^\text{low}(q')$.
  Hence, $B_H^\text{high}$ is strictly decreasing.
  If $B_H^\text{low}$ is not left-continuous at a point $q$, there must exist $\delta > 0$ such that $B_H^\text{low}(q) + \delta \leq B_H^\text{low}(q')$ for all $q' < q$.
  Because $B_H^\text{low}(q; \epsilon^\text{low}(q)) = B_H^\text{low}(q’) < B_H^\text{low}(q') = B_H^\text{low}(q'; \epsilon^\text{low}(q')) \leq B_H^\text{low}(q'; \epsilon^\text{low}(q))$, we have $\abs{B_H^\text{low}(q) - B_H^\text{low}(q')} \leq \abs{B_H^\text{low}(q; \epsilon^\text{low}(q)) - B_H^\text{low}(q'; \epsilon^\text{low}(q))}$. The LHS must be no smaller than $\delta$ for any $q' < q$, whereas from the continuity of $B_H^\text{low}(\cdot; \epsilon^\text{low}(q))$, the RHS satisfies $\lim_{q' \uparrow q} \abs{B_H^\text{low}(q; \epsilon^\text{low}(q)) - B_H^\text{low}(q'; \epsilon^\text{low}(q))} = 0$, which is a contradiction.
  Hence, $B_H^\text{low}$ is left-continuous.
  It is trivial to see that $B_{H}^\mathrm{high}(q) \geq 2 \Phi^{-1}(1-q)$ for all $q \in \left(0, \frac12\right)$.
\end{proof}
\hrule

\subsection{Proof of \Cref{lemma:cases}}\label{apdx:lemma:cases}

\begin{proof}[Proof of \Cref{lemma:cases}]
  First, we show the existence of a pair of $q^\text{low}$ and $q^\text{high}$. Condition \eqref{eq:tr_cond} implies that $\Tr(H^2) / \Tr(H)^2 < 1/16$. Next, we have $\lim_{q \to 1/2} \sqrt{ \frac{ 2 }{q} \cdot \frac{\Tr(H^2)}{\Tr(H)^2} } < 1/2$. Subsequently, we have $\lim_{q \to 1/2} B_H^\text{low}(q) \leq \lim_{q \to 1/2} B_H^\text{low}(q; \epsilon=1/2)$, and the RHS is
  \begin{equation}
   \lim_{q \to 1/2} B_H^\text{low}(q; \epsilon=1/2) = 4 \Phi^{-1} \left(\frac12 + 8 \cdot \frac{\Tr(H^2)}{\Tr(H)^2}\right)
    < \frac{4}{\sqrt{2\pi}} \enspace.
  \end{equation}
  Hence, $\lim_{q \to 1/2} B_H^\text{low}(q) < \frac{4}{\sqrt{2\pi}}$. Because $B_H^\text{low}$ is left-continuous and strictly decreasing, one can choose $q^\text{low} \in (2\cdot \Tr(H^2) / \Tr(H)^2, 1/2)$ satisfying \eqref{eq:bhqlow}.

  Next, we prove $Q_H > 0$.
  Because of \eqref{eq:bhqlow}, it is sufficient to show that there exists $Q > 0$ such that $B_H^\text{high}(Q) > \frac{4}{\sqrt{2\pi}}$.
  Under condition \eqref{eq:tr_cond}, we have $\lim_{q \to 0} \sqrt{ \frac{ 4 }{1 - 2q} \cdot \frac{\Tr(H^2)}{\Tr(H)^2} } < \frac12$. Therefore, $\lim_{q \to 0}B_H^\text{high}(q) \geq \lim_{q \to 0}B_H^\text{high}(q; \epsilon=1/2)$, and the RHS is
  \begin{equation}
   \lim_{q \to 0} B_H^\text{high}(q; \epsilon=1/2) = \frac43 \Phi^{-1} \left(1 - 8 \cdot \frac{\Tr(H^2)}{\Tr(H)^2}\right)
    > \frac{4}{\sqrt{2\pi}} \enspace.
  \end{equation}
  Hence, $\lim_{q \to 0} B_H^\text{high}(q) \geq > \frac{4}{\sqrt{2\pi}}$. Because $B_H^\text{high}$ is right-continuous and strictly decreasing, there exists $Q > 0$ such that $B_H^\text{high}(q) > \frac{4}{\sqrt{2\pi}}$.
  Hence, $Q_H \geq Q > 0$.

  If $\sigma < B_H^\mathrm{high}(q^\mathrm{high}) \cdot \sqrt{ 2 L f(m) } / \Tr(H)$, using the relation $\sqrt{f(m)} \leq \norm{\nabla f(m)} / \sqrt{2 L}$, we have
  \begin{align*}
    \frac{ \sigma \Tr(H) }{ \norm{\nabla f(m)} } < B_H^\mathrm{high}(q^\mathrm{high}) \enspace.
  \end{align*}
  From \Cref{cor:successprobability}, we find \eqref{eq:lemma:toosmall}.

  If $\sigma > \cdot B_H^\mathrm{low}(q^\mathrm{low}) \cdot \norm{\nabla f(m)} / \Tr(H)$,
  we find \eqref{eq:lemma:toolarge} immediately from \Cref{cor:successprobability}.


  If $\sigma \leq \cdot B_H^\mathrm{low}(q^\mathrm{low}) \cdot \norm{\nabla f(m)} / \Tr(H)$,
  we have
  \begin{align*}
    \frac{ \sigma \Tr(H) }{ \norm{\nabla f(m)} } \leq B_H^\mathrm{low}(q^\mathrm{low}) \leq B_H^\mathrm{high}(Q^\mathrm{low} - \xi^\text{low})
  \end{align*}
  for any $\xi^\text{high} > 0$.
  From \Cref{cor:successprobability}, we find
  \begin{equation}
    \Pr\left[f(m + \sigma z) \leq f(m) \right] \geq Q^\mathrm{low} + \xi^\text{low} \enspace.
  \end{equation}
  Taking $\inf$ over $\xi^\text{low}$, we obtain the LHS of \eqref{eq:lemma:reasonable}.

  Under the condition $B_H^\mathrm{high}(q^\mathrm{high}) \cdot \sqrt{ 2 L f(m) } / \Tr(H) \leq \sigma$, we have
  \begin{align*}
    \frac{ \sigma \norm{\nabla f(m)} }{ f(m) } \geq B_H^\mathrm{high}(q^\mathrm{high}) \frac{ \sqrt{ 2 L } }{ \Tr(H) } \frac{ \norm{\nabla f(m)} }{ \sqrt{ f(m) } }
    \enspace.
  \end{align*}
  Using the relation $\norm{\nabla f(m)} / \sqrt{2 U} \leq \sqrt{f(m)} \leq \norm{\nabla f(m)} / \sqrt{2 L}$ on a convex quadratic function, we obtain
  \begin{align*}
    \frac{ \sigma \norm{\nabla f(m)} }{ f(m) } \geq \frac{ 2 L \cdot B_H^\mathrm{high}(q^\mathrm{high}) }{ \Tr(H) }
    \enspace.
  \end{align*}
  Under the condition $\sigma \leq \cdot B_H^\mathrm{low}(q^\mathrm{low}) \cdot \norm{\nabla f(m)} / \Tr(H)$, we have
  \begin{align*}
    \frac{ \sigma \Tr(H) }{ \norm{\nabla f(m)} } \leq B_H^\mathrm{low}(q^\mathrm{low})\enspace.
  \end{align*}
  From \Cref{lemma:qualitygain}, we obtain \eqref{eq:lemma:reasonable:qualitygain}.
  The negativity follows $B_H^\mathrm{low}(q^\mathrm{low}) < \frac{4}{\sqrt{2\pi}}$.
\end{proof}
\hrule

\subsection{Proof of \Cref{lemma:too_small_sigma}}\label{apdx:lemma:too_small_sigma}
In the following proofs, we use abbreviations of the indicator functions as follows:
\begin{itemize}
\item $\indup = \ind{f(m_t + \sigma_t z_t) \leq f(m_t)}$
\item $\inddown = \ind{f(m_t + \sigma_t z_t) > f(m_t)}$
\item $\inds = \ind{b_s\sqrt{ Lf(m_{t+1})} \geq  \Tr(H)\cdot \sigma_{t+1}}$
\item $\indl = \ind{b_\ell \sqrt{ L f(m_{t+1})} \leq \Tr(H) \cdot \sigma_{t+1}}$
\end{itemize}
Note that $\inds$ and $\indl$ are exclusive to each other, as $b_s<b_\ell$.

\begin{proof}[Proof of \Cref{lemma:too_small_sigma}]

  If $\sigma_t < \frac{b_s\sqrt{L}}{\aup\Tr(H)} \sqrt{f(m_t)}$,
  we have $\frac{b_s\sqrt{Lf(m_t)}}{\Tr(H) \sigma_t}>\aup>1$
  and $\frac{\Tr(H)\sigma_t}{b_\ell\sqrt{Lf(m_t)}}<\frac{b_s}{b_\ell\aup}<1$.
  The potential function at $t$ is
  \begin{align}
    V(\theta_t)
    = \log\left(f(m_t)\right)
    + v \cdot \log \left(\frac{b_s\sqrt{Lf(m_t)}}{\Tr(H) \sigma_t}\right)
    \enspace,
  \end{align}
  and a one-step difference of $V(\theta)$ under the current condition is written as follows:
  \begin{align}
    V(\theta_{t+1}) - V(\theta_t)
    =& \log\left(\frac{f(m_{t+1})}{f(m_t)}\right) \\
    &+ v\cdot\log\left(\frac{b_s\sqrt{ Lf(m_{t+1})}}{\Tr(H) \sigma_{t+1}}\right)\cdot\inds\\
    &+ v\cdot\log\left(\frac{\Tr(H)\sigma_{t+1}}{b_\ell\sqrt{ Lf(m_{t+1})}}\right)\cdot\indl\\
    &- v \cdot \log \left(\frac{b_s\sqrt{Lf(m_t)}}{\Tr(H) \sigma_t}\right)
      \enspace.
  \end{align}
  The RHS is deformed by extracting $\log(f(m_{t+1})/f(m_t))$ and by rewriting $\sigma_{t+1}$ into $\aup\sigma_t$ and $\adown\sigma_t$ using the conditions of $\indup$ and $\inddown$:
  \begin{align}
    V(\theta_{t+1}) - V(\theta_t)
    =& \left(1-(\inds-\indl)\frac{v}{2}\right)\cdot\log\left(\frac{f(m_{t+1})}{f(m_t)}\right) \\
    &+ v\cdot\log\left(\frac{b_s\sqrt{ Lf(m_{t})}}{\Tr(H) \aup\sigma_{t}}\right)\cdot\inds\indup\\
    &+ v\cdot\log\left(\frac{b_s\sqrt{ Lf(m_{t})}}{\Tr(H) \adown\sigma_{t}}\right)\cdot\inds\inddown\\
    &+ v\cdot\log\left(\frac{\Tr(H)\aup\sigma_{t}}{b_\ell\sqrt{ Lf(m_{t})}}\right)\cdot\indl\indup\label{eq:small_indl_indup}\\
    &+ v\cdot\log\left(\frac{\Tr(H)\adown\sigma_{t}}{b_\ell\sqrt{ Lf(m_{t})}}\right)\cdot\indl\inddown\label{eq:small_indl_inddown}\\
    &- v \cdot \log \left(\frac{b_s\sqrt{Lf(m_t)}}{\Tr(H) \sigma_t}\right)
      \enspace.
  \end{align}
  The first term of the RHS of the expression above is non-positive, as $0<v<1$.
  Under the current condition $\sigma_t < \frac{b_s\sqrt{L}}{\aup\Tr(H)} \sqrt{f(m_t)}$,
  \eqref{eq:small_indl_indup} and \eqref{eq:small_indl_inddown} are both non-positive as $b_s<b_\ell$.
  Next, we obtain the upper bound of $V(\theta_{t+1}) - V(\theta_t)$:
  \begin{align}
    V(\theta_{t+1}) - V(\theta_t)
    \leq&
          v\cdot\log\left(\frac{b_s\sqrt{ Lf(m_{t})}}{\Tr(H) \aup\sigma_{t}}\right)\cdot\inds\indup\\
    &+ v\cdot\log\left(\frac{b_s\sqrt{ Lf(m_{t})}}{\Tr(H) \adown\sigma_{t}}\right)\cdot\inds\inddown\\
    &- v \cdot \log \left(\frac{b_s\sqrt{Lf(m_t)}}{\Tr(H) \sigma_t}\right)
    \\
    =&
       (\inds - 1)\cdot v\cdot\log\left(\frac{b_s\sqrt{ Lf(m_{t})}}{\Tr(H) \sigma_{t}}\right)\\
    &- v \cdot \log \left(\frac{b_s\sqrt{Lf(m_t)}}{\Tr(H) \sigma_t}\right)\\
    &+v\cdot\log\left(\frac{\aup}{\adown}\right)\cdot \inds\inddown
      -v\log(\aup)\cdot \inds
    \\
    \leq&
       (\inds - 1)\cdot v\cdot\log\left(\frac{b_s\sqrt{ Lf(m_{t})}}{\Tr(H) \sigma_{t}}\right)\\
    &+v\cdot\log\left(\frac{\aup}{\adown}\right)\cdot \inds\inddown
      -v\log(\aup)
      \enspace.
  \end{align}
  To the last bound of the expression above, we use$\frac{b_s\sqrt{Lf(m_t)}}{\Tr(H) \sigma_t}>\aup$.
  The first term of the RHS of the expression above is non-positive under the current condition $\sigma_t < \frac{\aup b_s\sqrt{L}}{\Tr(H)} \sqrt{f(m_t)}$.
  Moreover, replacing $\inds$ with 1 in the second term provides the following upper bound :
  \begin{align}
    V(\theta_{t+1}) - V(\theta_t)
    \leq
    v\cdot\log\left(\frac{\aup}{\adown}\right)\cdot\inddown.
    -v\log(\aup)
    \enspace.
  \end{align}
  Note that $\E[\inddown] = 1 - \mathrm{Pr}\left[f(m_t + \sigma_t z_t)\leq f(m_t) \mid \mathcal{F}_t\right]$, which is upper-bounded by $1 - q^\mathrm{high}$ in light of \Cref{lemma:cases}.
  By taking the expectation on both sides of the inequality above, we obtain
  \begin{align}
    \MoveEqLeft[2] \E[ V(\theta_{t+1}) - V(\theta_t) \mid \mathcal{F}_t ]
    \\
    &\leq
      v\cdot \log\left(\frac{\aup}{\adown}\right)\cdot \left( 1 - q^\text{high} - \frac{\log(\aup)}{\log(\aup / \adown)} \right).
    \\
    &=
      v\cdot \log\left(\frac{\aup}{\adown}\right)\cdot \left( p_\mathrm{target} - q^\text{high}\right)
      \enspace.
  \end{align}
  Because $v\cdot \log(\aup/\adown) = \min\{ w/4 , \log(\aup / \adown)\}$, we obtain \eqref{eq:lemma:too_small_sigma}.
  This completes the proof.
\end{proof}
\hrule

\subsection{Proof of \Cref{lemma:too_large_sigma}}\label{apdx:lemma:too_large_sigma}
\begin{proof}[Proof of \Cref{lemma:too_large_sigma}]
  If $\sigma_t > \frac{b_\ell}{\sqrt{2}\adown\Tr(H)}\norm{\nabla f(m_t)}$, $\indl = 1$ and naturally $\inds = 0$ because
  \begin{align}
    \frac{b_\ell \sqrt{ L f(m_{t+1})}}{\Tr(H) \sigma_{t+1}}
    \leq
    \frac{b_\ell \sqrt{ L f(m_{t})}}{\Tr(H) \adown\sigma_{t}}
    \leq
    \frac{b_\ell \norm{\nabla f(m_{t})}}{\sqrt{2}\Tr(H) \adown\sigma_{t}}
    <1
    \enspace.
  \end{align}
  As $ \sigma_t > \frac{b_\ell}{\sqrt{2}\adown\Tr(H)}\norm{\nabla f(m_t)} \Rightarrow \frac{b_\ell \sqrt{ L f(m_{t})}}{\Tr(H) \sigma_{t}}< 1$,
  the potential function at $t$ under the current condition is
  \begin{align}
    V(\theta_t)
    =& \log f(m_t)
       + v \cdot \log \left(\frac{\Tr(H)\sigma_t}{b_\ell\sqrt{Lf(m_t)}}\right) \enspace.
  \end{align}
  Next, a one-step difference of the potential function under $ \sigma_t > \frac{b_\ell}{\sqrt{2}\adown\Tr(H)}\norm{\nabla f(m_t)}$ is
  \begin{align}
    V(\theta_{t+1}) - V(\theta_t)
    =& \left(1-(\inds-\indl)\frac{v}{2}\right)\cdot\log\left(\frac{f(m_{t+1})}{f(m_t)}\right) \\
    &+ v\cdot\log\left(\frac{b_s\sqrt{ Lf(m_{t})}}{\Tr(H) \aup\sigma_{t}}\right)\cdot\inds\indup\\
    &+ v\cdot\log\left(\frac{b_s\sqrt{ Lf(m_{t})}}{\Tr(H) \adown\sigma_{t}}\right)\cdot\inds\inddown\\
    &+ v\cdot\log\left(\frac{\Tr(H)\aup\sigma_{t}}{b_\ell\sqrt{ Lf(m_{t})}}\right)\cdot\indl\indup\\
    &+ v\cdot\log\left(\frac{\Tr(H)\adown\sigma_{t}}{b_\ell\sqrt{ Lf(m_{t})}}\right)\cdot\indl\inddown\\
    &- v \cdot \log^+ \left(\frac{b_s\sqrt{Lf(m_t)}}{\Tr(H) \sigma_t}\right)\\
    &-  v \cdot \log^+ \left(\frac{\Tr(H)\sigma_t}{b_\ell\sqrt{Lf(m_t)}}\right)
    \\
    =& \left(1-(\inds-\indl)\frac{v}{2}\right)\cdot\log\left(\frac{f(m_{t+1})}{f(m_t)}\right) \\
    &+v\cdot\log\left(\frac{\aup}{\adown}\right)\cdot\indup
      +v\cdot\log\left(\adown\right)
    \\
    \leq&
          v\cdot\log\left(\frac{\aup}{\adown}\right)\cdot\indup.
          +v\cdot\log\left(\adown\right)
          \enspace.
  \end{align}

  Note that $\E[\indup] = \mathrm{Pr}\left[f(m_t + \sigma_t z_t)\leq f(m_t) \mid \mathcal{F}_t\right]$, which is upper-bounded by $q^\mathrm{low}$ in light of \Cref{lemma:cases}.
  By taking the expectation on both sides of the inequality above, we obtain
  \begin{align}
    \MoveEqLeft[2] \E[ V(\theta_{t+1}) - V(\theta_t) \mid \mathcal{F}_t ]
    \\
    &\leq v\cdot\log\left(\frac{\aup}{\adown}\right)\cdot \left( q^\mathrm{low} - \frac{\log(1/\adown)}{\log(\aup/\adown)}\right)
    \\
    &= v\cdot\log\left(\frac{\aup}{\adown}\right)\cdot \left( q^\mathrm{low} - p_\mathrm{target}\right)
          \enspace.
  \end{align}
  Because $v\cdot \log(\aup/\adown) = \min\{ w/4 , \log(\aup / \adown)\}$, we obtain \eqref{eq:lemma:too_large_sigma}.
  This completes the proof.
\end{proof}
\hrule

\subsection{Proof of \Cref{lemma:reasonable_sigma}}\label{apdx:lemma:reasonable_sigma}
\begin{proof}[Proof of \Cref{lemma:reasonable_sigma}]
  Extracting $\log(f(m_{t+1})/f(m_t))$, $\log(\aup)$ and $\log(\adown)$ from $V(\theta_{t+1})$,
  \begin{align}
    V(\theta_{t+1})
    =& \log\left(f(m_{t+1})\right)-(\inds-\indl)\frac{v}{2}\cdot\log\left(\frac{f(m_{t+1})}{f(m_t)}\right) \\
    &+ v\cdot\log\left(\frac{b_s\sqrt{ Lf(m_{t})}}{\Tr(H) \aup\sigma_{t}}\right)\cdot\inds\indup\\
    &+ v\cdot\log\left(\frac{b_s\sqrt{ Lf(m_{t})}}{\Tr(H) \adown\sigma_{t}}\right)\cdot\inds\inddown\\
    &+ v\cdot\log\left(\frac{\Tr(H)\aup\sigma_{t}}{b_\ell\sqrt{ Lf(m_{t})}}\right)\cdot\indl\indup\\
    &+ v\cdot\log\left(\frac{\Tr(H)\adown\sigma_{t}}{b_\ell\sqrt{ Lf(m_{t})}}\right)\cdot\indl\inddown\\
    \\
    =& \log\left(f(m_{t+1})\right)-(\inds-\indl)\frac{v}{2}\cdot\log\left(\frac{f(m_{t+1})}{f(m_t)}\right) \\
    &+v\cdot\log\left(\frac{\aup}{\adown}\right)\cdot\inds\inddown
      -v\cdot\log\left(\aup\right)\cdot\inds, \\
    &+v\cdot\log\left(\frac{\aup}{\adown}\right)\cdot\indl\indup
      +v\cdot\log\left(\adown\right)\cdot\indl\\
    &+ v\cdot\log\left(\frac{b_s\sqrt{ Lf(m_{t})}}{\Tr(H) \sigma_{t}}\right)\cdot\inds
    \\
    &+ v\cdot\log\left(\frac{\Tr(H)\sigma_{t}}{b_\ell\sqrt{ Lf(m_{t})}}\right)\cdot\indl
      \enspace.
  \end{align}
  Next,
  \begin{align}
    V(\theta_{t+1}) - V(\theta_t)
    =&
    \label{eq:main_terms_of_V_diff_start}
    \left(1-(\inds-\indl)\frac{v}{2}\right)\cdot\log\left(\frac{f(m_{t+1})}{f(m_t)}\right) \\
    &+v\cdot\log\left(\frac{\aup}{\adown}\right)\cdot\inds\inddown
      -v\cdot\log\left(\aup\right)\cdot\inds, \\
    \label{eq:main_terms_of_V_diff_end}
    &+v\cdot\log\left(\frac{\aup}{\adown}\right)\cdot\indl\indup
      +v\cdot\log\left(\adown\right)\cdot\indl\\
    &+ v\cdot\log\left(\frac{b_s\sqrt{ Lf(m_{t})}}{\Tr(H) \sigma_{t}}\right)\cdot\inds
      \label{eq:penalty_start}
    \\
    &+ v\cdot\log\left(\frac{\Tr(H)\sigma_{t}}{b_\ell\sqrt{ Lf(m_{t})}}\right)\cdot\indl\\
    &- v \cdot \log^+ \left(\frac{b_s\sqrt{Lf(m_t)}}{\Tr(H) \sigma_t}\right)
      \label{eq:penalty_end-1}
    \\
    &-  v \cdot \log^+ \left(\frac{\Tr(H)\sigma_t}{b_\ell\sqrt{Lf(m_t)}}\right)
      \label{eq:penalty_end}
      \enspace.
  \end{align}
  To inspect the sum of \eqref{eq:penalty_start} $\sim$ \eqref{eq:penalty_end}, we calculate the products of $\inds, \indl, \indup$, and $\inddown$.
  Note $f(m_t)\geq f(m_{t+1})$ for all $t\geq 0$ in Algorithm~\ref{algo}.
  \begin{align}
    \inds\indup
    &=\ind{ 1\leq \frac{ b_s\sqrt{Lf(m_{t+1})} }{\Tr(H)\aup\sigma_t} }\cdot\indup,
    \\
    \inds\inddown
    &=\ind{ 1\leq \frac{ b_s\sqrt{Lf(m_{t})} }{\Tr(H)\adown\sigma_t} }\cdot\inddown, \\
    &=\left(\ind{ 1\leq \frac{ b_s\sqrt{Lf(m_{t})} }{\Tr(H)\sigma_t} },
      +\ind{ \adown \leq \frac{ b_s\sqrt{Lf(m_{t})} }{\Tr(H)\sigma_t} < 1}\right)\cdot\inddown.
  \end{align}
  \begin{align}
    \indl\indup
    &= \ind{1\leq \frac{\Tr(H)\aup\sigma_t}{b_\ell \sqrt{Lf(m_{t+1})} } }\cdot\indup\\
    &= \ind{1\leq \frac{\Tr(H)\sigma_t}{b_\ell \sqrt{Lf(m_{t+1})} } }\cdot\indup
      + \ind{\frac{1}{\aup}\leq \frac{\Tr(H)\sigma_t}{b_\ell \sqrt{Lf(m_{t+1})} } <1 }\cdot\indup\\
    &= \ind{1\leq \frac{\Tr(H)\sigma_t}{b_\ell \sqrt{Lf(m_{t})} } }\cdot\indup\\
    &+ \ind{\frac{\Tr(H)\sigma_t}{b_\ell \sqrt{Lf(m_{t})} } <1 \leq \frac{\Tr(H)\sigma_t}{b_\ell \sqrt{Lf(m_{t+1})} } }\cdot\indup, \\
    &+ \ind{\frac{1}{\aup}\leq \frac{\Tr(H)\sigma_t}{b_\ell \sqrt{Lf(m_{t+1})} } <1 }\cdot\indup
    \\
    \indl\inddown
    &= \ind{1\leq \frac{\Tr(H)\adown\sigma_t}{b_\ell \sqrt{Lf(m_{t})} } }\cdot\inddown
  \end{align}
  Note that
  $\inds\indup=1 \Rightarrow 1\leq \frac{b_s \sqrt{Lf(m_t)}}{\Tr(H)\sigma_t}$
  and
  $\indl\inddown=1 \Rightarrow 1\leq \frac{\Tr(H)\sigma_t}{b_l \sqrt{Lf(m_t)}}$.
  Exploiting these compositions of the indicator functions,  the sum of \eqref{eq:penalty_start}--\eqref{eq:penalty_end} is proved to be non-positive in general as follows:
  \begin{multline}
    \left(v\cdot\log\left(\frac{b_s\sqrt{ Lf(m_{t})}}{\Tr(H) \sigma_{t}}\right)
      - v \cdot \log^+ \left(\frac{b_s\sqrt{Lf(m_t)}}{\Tr(H) \sigma_t}\right)\right)\inds\indup\\
    =0
       \enspace,
    \end{multline}
  \begin{multline}
    \label{eq:penalty_diff_inds_inddown}
    \left(v\cdot\log\left(\frac{b_s\sqrt{ Lf(m_{t})}}{\Tr(H) \sigma_{t}}\right)
      - v \cdot \log^+ \left(\frac{b_s\sqrt{Lf(m_t)}}{\Tr(H) \sigma_t}\right)\right)\inds\inddown\\
    =
       v\cdot\log\left(\frac{b_s\sqrt{ Lf(m_{t})}}{\Tr(H) \sigma_{t}}\right)
       \cdot\ind{ \adown \leq \frac{ b_s\sqrt{Lf(m_{t})} }{\Tr(H)\sigma_t} < 1}
      \cdot\inddown
    \\
    \leq 0
          \enspace,
    \end{multline}
    \begin{multline}
    \label{eq:penalty_diff_indl_indup}
    \left( v\cdot\log\left(\frac{\Tr(H)\sigma_{t}}{b_\ell\sqrt{ Lf(m_{t})}}\right)
      -  v \cdot \log^+ \left(\frac{\Tr(H)\sigma_t}{b_\ell\sqrt{Lf(m_t)}}\right)\right)\indl\indup\\
    =
        \left( v\cdot\log\left(\frac{\Tr(H)\sigma_{t}}{b_\ell\sqrt{ Lf(m_{t})}}\right)
            - v \cdot \log^+ \left(\frac{\Tr(H)\sigma_t}{b_\ell\sqrt{Lf(m_t)}}\right)
            \right)
          \\
          \quad \cdot
          \Biggl(
          \ind{\frac{\Tr(H)\sigma_t}{b_\ell \sqrt{Lf(m_{t})} } <1 \leq \frac{\Tr(H)\sigma_t}{b_\ell \sqrt{Lf(m_{t+1})} } }
          \\
          \quad + \ind{\frac{1}{\aup}\leq \frac{\Tr(H)\sigma_t}{b_\ell \sqrt{Lf(m_{t+1})} } <1 }
          \Biggr)
            \cdot\indup
    \\
    \leq 0
          \enspace,
  \end{multline}

    \begin{multline}
    \left( v\cdot\log\left(\frac{\Tr(H)\sigma_{t}}{b_\ell\sqrt{ Lf(m_{t})}}\right)
      -  v \cdot \log^+ \left(\frac{\Tr(H)\sigma_t}{b_\ell\sqrt{Lf(m_t)}}\right)\right)\indl\inddown
    \\
    =0
       \enspace.
  \end{multline}
  Note that \eqref{eq:penalty_end-1} and \eqref{eq:penalty_end} are non-positive.
  Subsequently,
  \begin{align}
    V(\theta_{t+1}) - V(\theta_t)
    \leq& \left(1-(\inds-\indl)\frac{v}{2}\right)\cdot\log\left(\frac{f(m_{t+1})}{f(m_t)}\right) \\
    &+v\cdot\log\left(\frac{\aup}{\adown}\right)\cdot\inds\inddown
      -v\cdot\log\left(\aup\right)\cdot\inds, \\
    &+v\cdot\log\left(\frac{\aup}{\adown}\right)\cdot\indl\indup
      +v\cdot\log\left(\adown\right)\cdot\indl\\
    \leq& \left(1-\frac{v}{2}\right)\cdot\log\left(\frac{f(m_{t+1})}{f(m_t)}\right)
    +v\cdot\log\left(\frac{\aup}{\adown}\right)
    \label{eq:hold_above_in_general_case}
      \enspace.
  \end{align}
  In light of \Cref{lemma:cases}, we have
  \begin{equation}
    \E\left[ \log\left(\frac{f(m_{t+1})}{f(m_t)}\right)  \mid \mathcal{F}_t \right]
    \leq - w \enspace.
  \end{equation}
  Taking the expectation on both sides of the inequality above, we obtain
  \begin{align}
    \MoveEqLeft[2]
    \E[ V(\theta_{t+1}) - V(\theta_t)\mid \mathcal{F}_t ]
    \\
 &\leq - \left(1-\frac{v}{2}\right) \cdot w + v\cdot\log\left(\frac{\aup}{\adown}\right)
    \\
 &\leq - \frac12 \cdot w + v\cdot\log\left(\frac{\aup}{\adown}\right)
    \\
 &= - \frac12 \cdot w + \min \left\{ \frac{w}{4}, \log\left(\frac{\aup}{\adown}\right) \right\}
    \\
 &\leq - \frac{w}{4} \enspace.
  \end{align}
  This completes the proof.
\end{proof}
\hrule

\subsection{Proof of \Cref{lemma:v_variance_bound}}\label{apdx:lemma:v_variance_bound}
    \begin{proof}

    \begin{equation}
        \Var[V(\theta_{t+1})\mid\mathcal{F}_t]
        \leq
        \E[\left(V(\theta_{t+1}) - V(\theta_t)\right)^2\mid\mathcal{F}_t]
        \enspace
    \end{equation}
     holds. Next, we prove that
  \begin{align}
    V_{t+1} - V_t \leq v \cdot \log\left(\frac{\aup}{\adown}\right)
    \label{eq:constant_upper_bound_of_V}
  \end{align}
  \noindent
  and
  \begin{multline}
    V(\theta_{t+1}) - V(\theta_t)
    \\
    \geq
    \left(1+v\right)\cdot\log\left(\frac{f(m_{t+1})}{f(m_t)}\right)
      -2v\cdot\log\left(\aup\right)
      +v\cdot\log\left(\adown\right)
      \enspace.
    \label{eq:constant_lower_bound_of_V}
    \end{multline}
  First, the discussion in \Cref{apdx:lemma:reasonable_sigma} until \eqref{eq:hold_above_in_general_case} holds for any $m_t$ and $\sigma_t$.
  Therefore, upper bound \eqref{eq:constant_upper_bound_of_V} is straightforward from \eqref{eq:hold_above_in_general_case}.

  Next, we use the lower bounds of
  \eqref{eq:penalty_diff_inds_inddown} and \eqref{eq:penalty_diff_indl_indup}
  to prove the lower bound \eqref{eq:constant_lower_bound_of_V}.
  From the discussion in \Cref{apdx:lemma:reasonable_sigma} until \eqref{eq:penalty_end}, for any $\theta_t$,

  \begin{align}
    V(\theta_{t+1}) - V(\theta_t)
    =&
    \label{eq:main_terms_of_V_diff_start_re}
    \left(1-(\inds-\indl)\frac{v}{2}\right)\cdot\log\left(\frac{f(m_{t+1})}{f(m_t)}\right) \\
    &+v\cdot\log\left(\frac{\aup}{\adown}\right)\cdot\inds\inddown
      -v\cdot\log\left(\aup\right)\cdot\inds, \\
    \label{eq:main_terms_of_V_diff_end_re}
    &+v\cdot\log\left(\frac{\aup}{\adown}\right)\cdot\indl\indup
      +v\cdot\log\left(\adown\right)\cdot\indl\\
      \label{eq:penalty_start_re}
    &+ v\cdot\log\left(\frac{b_s\sqrt{ Lf(m_{t})}}{\Tr(H) \sigma_{t}}\right)\cdot\inds
    \\
    &+ v\cdot\log\left(\frac{\Tr(H)\sigma_{t}}{b_\ell\sqrt{ Lf(m_{t})}}\right)\cdot\indl\\
    &- v \cdot \log^+ \left(\frac{b_s\sqrt{Lf(m_t)}}{\Tr(H) \sigma_t}\right)
    \\
    &-  v \cdot \log^+ \left(\frac{\Tr(H)\sigma_t}{b_\ell\sqrt{Lf(m_t)}}\right)
      \label{eq:penalty_end_re}
      \enspace.
  \end{align}
  To make a lower bound of the expression above, we use the lower bounds of \eqref{eq:penalty_diff_inds_inddown} and \eqref{eq:penalty_diff_indl_indup}.
  \eqref{eq:penalty_diff_inds_inddown}:
  \begin{align}
    &\left(v\cdot\log\left(\frac{b_s\sqrt{ Lf(m_{t})}}{\Tr(H) \sigma_{t}}\right)
      - v \cdot \log^+ \left(\frac{b_s\sqrt{Lf(m_t)}}{\Tr(H) \sigma_t}\right)\right)\inds\inddown\\
    =&
       v\cdot\log\left(\frac{b_s\sqrt{ Lf(m_{t})}}{\Tr(H) \sigma_{t}}\right)
       \cdot\ind{ \adown \leq \frac{ b_s\sqrt{Lf(m_{t})} }{\Tr(H)\sigma_t} < 1}
       \cdot\inddown
    \\
    \geq& v\cdot\log(\adown)
  \end{align}

  \eqref{eq:penalty_diff_indl_indup}:
  \begin{align}
    &\left( v\cdot\log\left(\frac{\Tr(H)\sigma_{t}}{b_\ell\sqrt{ Lf(m_{t})}}\right)
      -  v \cdot \log^+ \left(\frac{\Tr(H)\sigma_t}{b_\ell\sqrt{Lf(m_t)}}\right)\right)\indl\indup\\
    =&
          \notag
        \left( v\cdot\log\left(\frac{\Tr(H)\sigma_{t}}{b_\ell\sqrt{ Lf(m_{t})}}\right)
            - v \cdot \log^+ \left(\frac{\Tr(H)\sigma_t}{b_\ell\sqrt{Lf(m_t)}}\right)
            \right)
          \\
          \notag
          &\cdot
          \Biggl(
          \ind{\frac{\Tr(H)\sigma_t}{b_\ell \sqrt{Lf(m_{t})} } <1 \leq \frac{\Tr(H)\sigma_t}{b_\ell \sqrt{Lf(m_{t+1})} } }
          \\
            \label{eq:penalty_diff_indl_indup_re}
          &+ \ind{\frac{1}{\aup}\leq \frac{\Tr(H)\sigma_t}{b_\ell \sqrt{Lf(m_{t+1})} } <1 }
          \Biggr)
          \cdot\indup
    \end{align}
    Further, the lower bounds on each indicator function of \eqref{eq:penalty_diff_indl_indup_re} are
    \begin{align}
        &\left( v\cdot\log\left(\frac{\Tr(H)\sigma_{t}}{b_\ell\sqrt{ Lf(m_{t})}}\right)
            - v \cdot \log^+ \left(\frac{\Tr(H)\sigma_t}{b_\ell\sqrt{Lf(m_t)}}\right)
            \right)
          \\
          &\cdot
          \ind{\frac{\Tr(H)\sigma_t}{b_\ell \sqrt{Lf(m_{t})} } <1 \leq \frac{\Tr(H)\sigma_t}{b_\ell \sqrt{Lf(m_{t+1})} } }
          \cdot\indup
    \\
    \geq&
        v\cdot\log\left(\frac{\Tr(H)\sigma_{t}}{b_\ell\sqrt{ Lf(m_{t})}}\right)
        \\
          &\cdot
          \ind{ \frac{\Tr(H)\sigma_t}{b_\ell \sqrt{Lf(m_{t})} } <1 \leq \frac{\Tr(H)\sigma_t}{b_\ell \sqrt{Lf(m_{t+1})} } }
    \\
    \geq&
        v\cdot\log\left(\sqrt{ \frac{f(m_{t+1})}{f(m_t)} } \cdot\frac{\Tr(H)\sigma_{t}}{b_\ell\sqrt{ Lf(m_{t+1})}}\right)
        \\
          &\cdot
          \ind{ \frac{\Tr(H)\sigma_t}{b_\ell \sqrt{Lf(m_{t})} } <1 \leq \frac{\Tr(H)\sigma_t}{b_\ell \sqrt{Lf(m_{t+1})} } }
    \\
    \geq&
        v\cdot\log\left(\sqrt{ \frac{f(m_{t+1})}{f(m_t)} }\right)
    \enspace,
  \end{align}
  and
    \begin{align}
        &\left( v\cdot\log\left(\frac{\Tr(H)\sigma_{t}}{b_\ell\sqrt{ Lf(m_{t})}}\right)
            - v \cdot \log^+ \left(\frac{\Tr(H)\sigma_t}{b_\ell\sqrt{Lf(m_t)}}\right)
            \right)
          \\
          &\ind{\frac{1}{\aup}\leq \frac{\Tr(H)\sigma_t}{b_\ell \sqrt{Lf(m_{t+1})} } <1 }
          \cdot\indup
    \\
    \geq&
        v\cdot\log\left(\frac{1}{\aup}\right)
        \enspace.
  \end{align}
  Therefore, the lower bound of the sum of \cref{eq:penalty_start_re}$\sim$\eqref{eq:penalty_end_re} is
  \begin{align}
      v\cdot\log\left(\sqrt{ \frac{f(m_{t+1})}{f(m_t)} }\right) - v\cdot\log\left(\aup\right)
      \label{eq:lower_bound_penalty}
  \enspace.
  \end{align}
  Additionally, the lower bound of the sum of the remaining terms \eqref{eq:main_terms_of_V_diff_start_re} $\sim$ \eqref{eq:main_terms_of_V_diff_end_re} is
  \begin{align}
    \log\left(\frac{f(m_{t+1})}{f(m_t)}\right)
      -v\cdot\log\left(\aup\right)
      +v\cdot\log\left(\adown\right)
      \label{eq:lower_bound_main_terms}
      \enspace.
  \end{align}
  With \eqref{eq:lower_bound_penalty} and \eqref{eq:lower_bound_main_terms},
  we see that for any $m_t$ and $\sigma_t$, \eqref{eq:constant_lower_bound_of_V} holds.
  Integrating \eqref{eq:constant_lower_bound_of_V} and \eqref{eq:constant_upper_bound_of_V},
  we obtain
  \begin{align}
  &|V(\theta_{t+1}) - V(\theta_t)|
  \\
  \leq&
  \max
    \Biggl\{
  v \cdot \log\left(\frac{\aup}{\adown}\right),
  \\
  &\left(1+v\right)\cdot\left|\log\left(\frac{f(m_{t+1})}{f(m_t)}\right)\right|
    +2v\cdot\log\left(\aup\right)
    -v\cdot\log\left(\adown\right)
    \Biggr\}
    \\
    \leq&
  \left(1+v\right)\cdot\left|\log\left(\frac{f(m_{t+1})}{f(m_t)}\right)\right|
    +2v\cdot\log\left(\aup\right)
    -v\cdot\log\left(\adown\right)
    \enspace.
    \end{align}

    As the only random variable in the RHS of the expression above is $\left|\log\left(\frac{f(m_{t+1})}{f(m_t)}\right)\right|$,
    to prove $\E[\left(V(\theta_{t+1}) - V(\theta_t)\right)^2\mid\mathcal{F}_t]$, we need to prove that
    \begin{align}
        \E\left[\left|\log\left(\frac{f(m_{t+1})}{f(m_t)}\right)\right|\mid\mathcal{F}_t\right]
        <\infty
        \enspace,
        \label{eq:abs_qualitygain_integrable}
    \end{align}
    and
    \begin{align}
        \E\left[\left(\log\left(\frac{f(m_{t+1})}{f(m_t)}\right)\right)^2\mid\mathcal{F}_t\right]
        <\infty
        \enspace.
        \label{eq:squared_qualitygain_integrable}
    \end{align}
   By exploiting the fact that $x \leq \exp(x) - 1$ and $x^2 \leq 2 (\exp(x) - 1)$,
   both \eqref{eq:abs_qualitygain_integrable} and \eqref{eq:squared_qualitygain_integrable} are straightforward for $d>3$ with \Cref{lemma:variancebound}.
   Namely,
    \begin{align}
        \E\left[\left|\log\left(\frac{f(m_{t+1})}{f(m_t)}\right)\right|\mid\mathcal{F}_t\right]
        \leq\frac{1}{d-3}\cdot\frac{U}{L}
        <\infty
        \enspace,
    \end{align}
    and
    \begin{align}
        \E\left[\left(\log\left(\frac{f(m_{t+1})}{f(m_t)}\right)\right)^2\mid\mathcal{F}_t\right]
        \leq\frac{2}{d-3}\cdot\frac{U}{L}
        <\infty
        \enspace.
    \end{align}
    This completes the proof.
\end{proof}

\end{document}